\documentclass[10pt,a4paper,twocolumn]{neatpreprint}

\usepackage[authoryear,round]{natbib}
\usepackage{mathtools,array}
\hypersetup{pdftitle={Rare Gate Disagreements Can Limit Plasticity: When Gradient Flow Mispredicts Finite-Batch SGD},
  pdfauthor={Ruoyu Zhao, Mingxuan Zhang, Jianbo Dai, Jiaqi Wu, Chenyu Zhu, Tong Che},
  linkcolor=blue!50!black,citecolor=blue!50!black,urlcolor=blue!50!black}
\usepackage[capitalise,nameinlink]{cleveref}
\newtheorem{theorem}{Theorem}
\newtheorem{proposition}[theorem]{Proposition}
\newtheorem{lemma}[theorem]{Lemma}
\newtheorem{corollary}[theorem]{Corollary}

\newcommand{\R}{\mathbb R}
\newcommand{\E}{\mathbb E}
\newcommand{\Pp}{\mathbb P}
\newcommand{\relu}{\sigma}
\newcommand{\calD}{\mathcal D}
\newcommand{\calS}{\mathcal S}
\newcommand{\norm}[1]{\left\lVert #1\right\rVert}
\newcommand{\abs}[1]{\left\lvert #1\right\rvert}
\newcommand{\ind}{\mathbf 1}
\newcommand{\budget}{\Gamma}

\crefname{appendix}{Appendix}{Appendices}
\crefname{subappendix}{Appendix}{Appendices}
\crefname{subsubappendix}{Appendix}{Appendices}

\title{Rare Gate Disagreements Can Limit Plasticity:\\ When Gradient Flow Mispredicts Finite-Batch SGD}
\runningtitle{Rare Gate Disagreements Can Limit Plasticity: When Gradient Flow Mispredicts Finite-Batch SGD}
\preprintlabel{Preprint}

\author[1]{Ruoyu Zhao}
\author[2]{Mingxuan Zhang}
\author[3]{Jianbo Dai}
\author[1]{Jiaqi Wu}
\author[1]{Chenyu Zhu}
\author[4,$\dagger$]{Tong Che}
\affil[1]{City University of Hong Kong, \texttt{\{ruoyuzhao8-c,jwu395-c,chenyuzhu9-c\}@my.cityu.edu.hk}}
\affil[2]{Microsoft, \texttt{mingxzhang@microsoft.com}}
\affil[3]{Copula Lab, \texttt{j6dj6d@gmail.com}}
\affil[4]{NVIDIA Research, \texttt{tongc@nvidia.com}}
\firstpagenote{$^\dagger$Project lead.}

\begin{document}

\begin{abstract}
Population gradient flow is a common tool for reasoning about how neural networks adapt, including after pretraining. We show that it can mispredict finite-batch stochastic gradient descent (SGD) qualitatively, and we trace the discrepancy to a specific mechanism. In a two-unit ReLU regression, a source task drives the two neurons toward positive proportionality and a target task rewards separating them. After source training for time $T$, gradient flow recovers on the target in time linear in $T$. Online SGD with batch size $b$ and step size $\eta$ in both phases instead fails with high probability throughout a horizon of order $e^{c/\eta}$ once $T\gtrsim\log(b/\eta)$, uniformly on an explicit set of initializations with Gaussian probability above one percent. For each fixed $T$, small-step SGD still recovers, so the failure requires the joint limit of small steps and long pretraining. At the target clone, the population instability is carried entirely by inputs on which the two ReLU gates disagree. For units at angle $\delta$ these inputs form a wedge of probability $\delta/\pi$, and weight decay shrinks the angle exponentially during pretraining. On every other input both units receive the same random linear update, which contracts their separation in conditional expectation. Bounding the cumulative probability of sampling the wedge along the exact online recursion, without a diffusion approximation, shows that recovery with fixed probability from an identical source-gradient-flow checkpoint, within $e^{c/\eta}$ updates, requires $Nb\gtrsim e^{\lambda T}$ target samples and batch size $b\gtrsim\eta e^{\lambda T}$, where $N$ counts updates and $\lambda$ is the weight decay. In simulations, recovery is approximately a function of the disagreement budget $b\delta/\eta$ and saturates in the horizon.
\end{abstract}

\maketitle

\section{Introduction}\label{sec:intro}

A network trained on one task can learn a second task more slowly than the same network trained from its original initialization. This loss of plasticity is documented in continual and reinforcement learning~\citep{dohare2024,lyle2023,nikishin2022,sokar2023}, and related difficulties arise when warm-starting~\citep{ash2020}. Theoretical accounts often reason with the population gradient field, for example through manifolds of cloned (positively proportional) neurons and their stability~\citep{joudaki2026} or the loss of curvature directions~\citep{lewandowski2023}. Training, however, uses finite batches. We ask whether the population field predicts how long finite-batch stochastic gradient descent (SGD) needs to adapt after pretraining.

Over a fixed horizon, small-step SGD tracks gradient flow (GF) under standard stochastic-approximation assumptions~\citep{benaim1999,kushner2003}. Exactly cloned neurons remain cloned under SGD~\citep{joudaki2026}. Gradient noise can even attract SGD to GF-unstable invariant sets such as sets of identical neurons, and smooth local coefficients at the set decide the attraction~\citep{chen2023,typeii}. Finite pretraining instead leaves distinct neurons at a small angle, and the population field predicts that they separate. For ReLU units the destabilizing term lives on the inputs where the two gates disagree, and pretraining makes these inputs rare. We prove that once pretraining exceeds order $\log(b/\eta)$, SGD stays near positively proportional neurons with high probability for a time exponential in $1/\eta$: target SGD is unlikely to sample a disagreeing input before their probability shrinks further.

\begin{figure*}[t]
\centering\includegraphics[width=\textwidth]{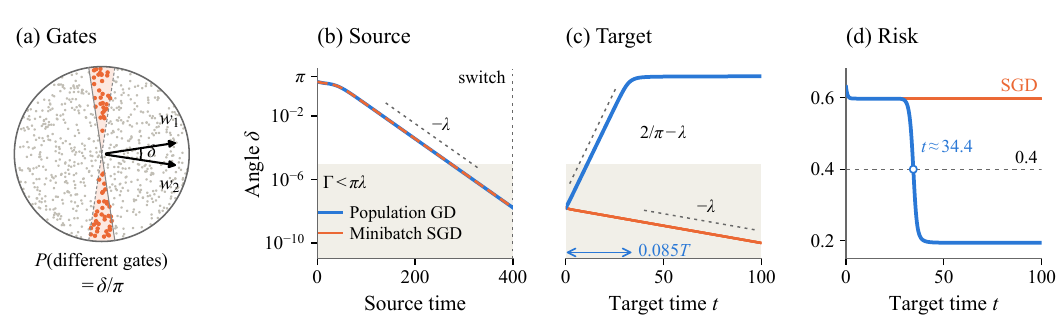}
\caption{\textbf{Mechanism on one trajectory} ($T=400$, $\eta=10^{-3}$, $b=16$). (a) Two gates at angle $\delta$ disagree on wedges of probability $\delta/\pi$. (b) Source training contracts the angle at rate $\lambda$. (c,d) From the source-SGD checkpoint, target GD (blue; Euler steps of GF) separates the units and crosses risk $0.4$ near the GF delay $\lambda T/(2/\pi-\lambda)$ (arrow), while four target SGD streams (orange) sample no disagreement and stay at risk ${\approx}0.6$. (c) shares the vertical axis of (b); dashed: rates of Section~\ref{sec:mechanism}; shading: $\budget<\pi\lambda$, under one expected disagreement (Poisson approximation).}
\label{fig:mechanism}
\end{figure*}

To make this precise, we work in a two-unit, bias-free ReLU network with inputs uniform on a disk. It is a minimal model in the tradition of small networks with analyzable training dynamics~\citep{li2026two,saad1995,tian2017,xu2023,yehudai2020}. The source label $\relu(x_1)$ drives the units toward positive proportionality, a controlled pretraining history. The radial target $\norm x$ rewards separated units: proportional units have target risk at least $0.59$, and target GF converges to risk below $0.27$, so reaching risk $0.4$ certifies separation. Without biases, the angle between the units determines gate disagreement. Rotational symmetry makes the population dynamics explicit, and bounded inputs bound sample gradients on bounded parameter sets. Both phases use ordinary online SGD, without tying, injected noise or balanced initialization. After source training for time $T$, GF in both phases recovers in time $\lambda T/(2/\pi-\lambda)+O(1)$ (about $0.085\,T$ at $\lambda=1/20$). SGD in both phases fails with high probability throughout a horizon of order $e^{c/\eta}$ once $T\gtrsim\log(b/\eta)$, which is exponential in $T$ when $T\lesssim1/\eta$ (Theorem~\ref{sgd:main}, Corollary~\ref{sgd:window}, Figure~\ref{fig:mechanism}).

\paragraph{Mechanism.}
Two ReLU units at angle $\delta$ disagree about which inputs are active only on wedges of probability $\delta/\pi$ (Figure~\ref{fig:mechanism}a). At a balanced \emph{target clone}, a stationary point with two positively proportional units, the destabilizing part of the population field comes entirely from this region (Proposition~\ref{prop:boundary}). On every other input, the same random linear map acts on both units, and an auxiliary process driven by the original independent batches contracts the separation of the two units in conditional expectation. Summing the disagreement probability along this contracting process bounds the probability of sampling a disagreeing input before the network leaves the proportional neighborhood. The bound is a constant multiple of $b\,d/\eta$ uniformly in the horizon, where the \emph{projective separation} $d=\norm{w_1/a_1-w_2/a_2}$ measures the distance from positive proportionality ($d=2\sin(\delta/2)$ for balanced units). A wedge of probability $\delta/\pi$ needs about $\pi/\delta$ samples to be hit. Before the first hit the separation contracts at a rate of order $\lambda$, so all future batches together contain about $b\delta/(\pi\lambda\eta)$ disagreeing samples in expectation. We call
\[
 \budget=\frac{b\,\delta}{\eta}
\]
the \emph{disagreement budget}. Source GF contracts the angle at rate $\lambda$, and source SGD contracts $d$ in expectation at a rate $\gamma>0$ of the same order. Warm SGD therefore fails with high probability once $Cb e^{-\gamma T}/\eta$ is small, i.e., after pretraining of order $\log(b/\eta)$. Both contractions come from weight decay: without it the balances $a_i^2-\norm{w_i}^2$ are conserved under GF~\citep{du2018} and the source angle equation loses its linear term, so the angle decays only like $1/T$.

\paragraph{Contributions.}
\begin{itemize}\itemsep0pt
\item \textbf{A two-phase separation theorem for online SGD} (Theorem~\ref{sgd:main}, Corollaries~\ref{sgd:window} and~\ref{sgd:fixedT}). On one event of high probability, fresh SGD succeeds in bounded time and warm SGD fails throughout a horizon of order $e^{c/\eta}$, uniformly on an explicit set of initializations with Gaussian probability above $1.06\%$. For each fixed pretraining duration, small-step SGD recovers as GF predicts, so the failure occurs only in the joint limit of small $\eta$ and growing $T$. The central estimate, a first-disagreement coupling with a summable gate hazard, is proved in full in Section~\ref{app:targetSGD}.
\item \textbf{Identical-checkpoint separation and sampling requirements} (Corollary~\ref{sgd:samecheckpoint}). From a source-GF checkpoint, target GF recovers in $O(T)$ time, whereas recovery by online SGD with fixed probability over horizons of at most order $e^{c/\eta}$ updates requires $Nb\gtrsim e^{\lambda T}$ target samples and $b\gtrsim\eta e^{\lambda T}$.
\item \textbf{Simulations of the two-unit model} in which recovery tracks $\budget$ across batch sizes, step sizes and checkpoint angles $\delta_0$ from $10^{-5}$ to $10^{-2}$, and saturates in the horizon when weight decay contracts the separation.
\end{itemize}

\section{Related Work}\label{sec:related}

Our result connects plasticity loss, stochastic attraction to invariant sets, neuron alignment and stochastic approximation.

\paragraph{Plasticity and redundant representations.}
Plasticity loss has been linked to dormant units, rank collapse, curvature and weight growth~\citep{dohare2024,kumar2021,lewandowski2023,lyle2023,lyle2024,sokar2023}, and representation and kernel ranks can mispredict trainability~\citep{wang2026}. \citet{joudaki2026} establish clone invariance across broad architectures and optimizers, examine stability, and study perturbations that help restore learning. Invariance explains why an exact clone persists; we bound how long SGD needs to separate the nearby, distinct neurons that finite pretraining produces, through the cumulative probability of sampling inputs that distinguish them.

\paragraph{Stochastic attraction to invariant sets.}
\citet{chen2023} show how gradient noise can attract SGD to invariant subnetworks that are unstable under GF, including permutation-invariant sets of identical neurons. Type-II saddle analysis studies random matrix products and probabilistic stability when gradient noise vanishes at the saddle~\citep{typeii}; constructed examples also show convergence of constant-step SGD to local maxima~\citep{ziyin2022}. In these analyses stability is decided by local coefficients at the invariant set, such as curvatures of the loss and of the noise covariance, and the mean-curvature prediction is recovered as the step size decreases at fixed coefficients~\citep{typeii}. Here the destabilizing term lives on a gate-boundary region that shrinks with the separation (Section~\ref{sec:mechanism}). At the clone the averaged fixed-input linearization contracts at rate $\lambda$, independently of $\eta$, so such criteria predict capture. Recovery instead depends on whether the checkpoint lies within angle of order $\eta/b$ of the clone; pretraining with weight decay gets there after time of order $\lambda^{-1}\log(b/\eta)$. We bound the cumulative probability of sampling the gate-disagreement region along the exact online recursion, without a diffusion approximation. It yields a hitting-time bound that separates SGD from GF in the joint limit of vanishing step size and growing pretraining duration.

\paragraph{Alignment and population escape.}
Weight decay can favor homogeneous neurons~\citep{ziyin2024}; small initialization can lead to early ReLU alignment~\citep{maennel2018}; and noise can drive motion along symmetry directions~\citep{ziyin2024noise}. Neuron duplication also creates critical subspaces~\citep{simsek2021}. Plateaus near such singularities and subsequent specialization are classical~\citep{fukumizu2000,saad1995,wei2008}, and online SGD from a random start in high dimension spends most of its samples escaping the region of small correlation with the signal~\citep{benarous2021}. \citet{li2026two} quantify approach to a bisector saddle and escape under population GF in a two-neuron teacher--student model. Their logarithmic population escape law is our comparator; finite-batch SGD from the same checkpoint needs exponentially many samples in the source-training time.

\paragraph{Stochastic approximation and saddle avoidance.}
Effective SGD dynamics can differ from GF in high-dimensional critical scalings~\citep{benarous2022}, and stochastic modified equations refine fixed-horizon tracking for smooth losses~\citep{li2017sme}. Avoidance of unstable points requires suitable excitation and regularity~\citep{brandiere1996,mertikopoulos2020,pemantle1990}; perturbation-based escape results cover smooth saddles~\citep{ge2015,jin2017}, with related results for classes of nonsmooth objectives~\citep{bianchi2021,davis2021}. For Gaussian inputs, ReLU population risks have closed-form gradients~\citep{tian2017}; the gate boundary enters population Hessians through Dirac terms that almost every sample Hessian of a piecewise-linear unit misses~\citep{zhong2017}, and empirical and population landscapes can differ~\citep{jin2018}. We quantify the consequence of such a boundary term when its supporting region shrinks during training: fixed-horizon tracking remains valid, but its recovery prediction fails once pretraining exceeds order $\log(b/\eta)$.

\section{Setting and Main Results}\label{sec:setup}

\paragraph{Model, tasks, and training.}
Let $X$ be uniform on the disk $\{x\in\R^2:\norm{x}\le2\}$ and
\begin{equation*}
 f_\theta(x)=a_1\relu(w_1^\top x)+a_2\relu(w_2^\top x),
\end{equation*}
with $\relu(z)=\max\{z,0\}$ and all six parameters $\theta=(a_1,w_1,a_2,w_2)$ trainable. The source and target labels are
\[
 y_s(x)=\relu(x_1),\qquad y_t(x)=\norm{x}.
\]
The source depends on one direction; the target is radial and rewards opposed units. For a label $y$, the population risk and training objective are
\[
 R_y(\theta)=\tfrac12\E(f_\theta(X)-y(X))^2,\quad
 F_y=R_y+\tfrac\lambda2\norm{\theta}^2,
\]
with a fixed weight decay $\lambda\in(0,1/5]$ (Appendix~\ref{app:lambda}). Numerical values are given at $\lambda=1/20$, the value used in the simulations. Training minimizes $F_y$; adaptation is measured by $R_y$, with subscripts $s,t$ for $y_s,y_t$. SGD with batch size $b\ge1$ and step size $\eta>0$ uses independent fresh inputs at every update:
\begin{equation}\label{sgd:update}
 \theta_{n+1}=(1-\eta\lambda)\theta_n-
 \frac\eta b\sum_{j=1}^b\nabla\ell_y(\theta_n;X_{n,j}),
\end{equation}
where $\ell_y(\theta;x)=\frac12(f_\theta(x)-y(x))^2$ and $\relu'(0)=0$. Physical time is $n\eta$. The population comparator is gradient flow (GF) $\dot\theta=-\nabla F_y(\theta)$.

The disk keeps the population geometry of Gaussian inputs while bounding sample gradients. Writing $X=R\hat X$ with uniform direction $\hat X$, the model and both labels are positively homogeneous of degree one, so the squared loss depends on the radial law only through $\E R^2$. Both this disk and $N(0,I_2)$ have $\E R^2=2$ and therefore the same population risk and GF equations; bounded inputs are used in the stochastic confinement argument.

\paragraph{Initialization.}
Write $r_i=\norm{w_i}$, $\omega_i=a_ir_i$, $B=\omega_1+\omega_2$, and let $\vartheta_i\in(-\pi,\pi)$ be the angle of $w_i$ from $e_1$. Let $\calD$ be the set of parameters in which both neurons have positive heads and lie on opposite sides of the source direction,
\[
 a_i>0,\ \ 0<\omega_i<1,\ \ \vartheta_1>0>\vartheta_2,\ \ \vartheta_1-\vartheta_2<\pi,
\]
together with its neuron exchange, and $\calD_0=\calD\cap\{B<3/4\}$. Appendix~\ref{app:explicitK} constructs a compact $K\subset\calD_0$ with explicit positive parameter and angular margins and proves $\Pp_{N(0,I_6)}(K)>0.01064$, close to $\Pp(\calD_0)\approx0.01083$.

\paragraph{A risk gap.}
Put $q=2/\pi$. If $(a_1,w_1)=c(a_2,w_2)$ with $c>0$, the predictor is a single ReLU and has target risk at least $R_{\rm col}=1-q^2\approx0.5947$. Target GF from $K$ converges to opposed units with risk $R_{\rm good}=1-2q^2+2\lambda^2<0.27$ (${\approx}0.1944$ at $\lambda=1/20$). Thus
\begin{equation}\label{sgd:riskgap}
 R_{\rm good}<0.4<R_{\rm col},
\end{equation}
and reaching risk $0.4$ requires leaving a neighborhood of the positively proportional family. Every $\theta_0\in K$ starts with target risk above $0.5928$; Appendices~\ref{app:targetpopulation} and~\ref{app:explicitK} establish these risk bounds.

\paragraph{Fresh and warm runs.}
Fix $\theta_0\in K$. The \emph{fresh} run applies target SGD from $\theta_0$. The \emph{warm} run trains on the source for physical time $T$ ($\lfloor T/\eta\rfloor$ updates) and then on the target with fresh batches, using the same $b$ and $\eta$. Let $\tau_f$ and $\tau_w(T)$ be the first target times at which population risk is at most $0.4$ (multiples of $\eta$ for SGD), and let $\tau_{\rm GF}(T)$ be the same time when both phases use gradient flow. Probabilities are over sample streams, conditional on $\theta_0$; the bounds below hold for any coupling of the fresh and warm streams.

\begin{theorem}[Linear GF delay versus metastable SGD failure]\label{sgd:main}
\textbf{(a) Gradient flow.} Uniformly over $\theta_0\in K$, fresh target GF reaches risk $0.4$ within a fixed time, and the warm GF hitting time satisfies
\begin{equation}\label{sgd:gfcontrast}
 \tau_{\rm GF}(T)=\frac{\lambda}{2/\pi-\lambda}\,T+O(1).
\end{equation}
\textbf{(b) Finite-batch SGD.} Fix $\varepsilon\in(0,1)$. There are constants $L,t_0,C,c_*,\gamma,\eta_0>0$, depending on the model, $\lambda$, $K$ and $\varepsilon$ but not on the batch size, such that for every $b\ge1$, $\theta_0\in K$, $0<\eta\le\eta_0$, $T\ge L+1$ and $H\ge0$, with conditional probability at least
\begin{equation}\label{sgd:combined}
 1-\varepsilon-\frac{Cb}{\eta}e^{-\gamma T}
 -C\Bigl(1+\frac{T+H}{\eta}\Bigr)e^{-c_*/\eta},
\end{equation}
the fresh run succeeds, $\tau_f\le t_0$, and the warm run fails throughout the horizon, $\tau_w(T)>H$.
\end{theorem}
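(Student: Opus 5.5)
We prove (a) and (b) separately; (b) assembles three high-probability events and applies a union bound.

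\textbf{Part (a).} Rotational symmetry of the disk law lets us reduce the population field to the scalar variables $(r_i,a_i,\vartheta_i)$, and at fixed $\E R^2=2$ the ReLU kernel integrals are the standard closed-form arc-cosine expressions.

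For source GF from $K$ we show the following. Both neurons converge toward $e_1$. The balances $a_i^2-r_i^2$ decay at rate $2\lambda$. The relative angle $\delta=\vartheta_1-\vartheta_2$ satisfies $\dot\delta=-\lambda\delta+O(\delta^2)$ near the source clone, so $\delta(T)=\Theta(e^{-\lambda T})$ uniformly on $K$; the uniformity comes from compactness of $K$ and its explicit margins.

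For target GF we linearize at the balanced target clone. The separating direction has growth rate $2/\pi-\lambda$, and that rate is carried by the wedge term of Proposition~\ref{prop:boundary}. Starting from angle $\Theta(e^{-\lambda T})$, the angle therefore reaches an $O(1)$ size after time $\lambda T/(2/\pi-\lambda)+O(1)$. A fixed-time global argument then takes the parameters from there into the basin of the opposed-unit minimum with risk $R_{\rm good}<0.4$. The upper bound on $\tau_{\rm GF}(T)$ follows.

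For the matching lower bound, note that while $\delta$ is small the risk stays close to $R_{\rm col}$ by continuity, so the risk cannot drop to $0.4$ earlier than this time. The fresh GF bound is the case $T=0$: compactness of $K$ gives a uniform time.

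\textbf{Part (b), fresh run.} Over the fixed horizon $t_0$ we use a standard stochastic-approximation tracking bound. The iterates stay in a compact set, on which sample gradients are bounded and the population field is Lipschitz along the GF tube. This gives $\sup_{n\eta\le t_0}\norm{\theta_n-\theta(n\eta)}\le\varepsilon'$ with probability at least $1-\varepsilon/2$ once $\eta\le\eta_0$, using Doob's inequality on the martingale error. Since the GF risk at $t_0$ is strictly below $0.4$, SGD reaches $0.4$ as well. We choose the margin so that this holds uniformly in $b\ge1$, noting that larger $b$ only reduces the variance.

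\textbf{Part (b), warm source phase.} We must show two things: the source SGD checkpoint lies in the proportional neighborhood, and $\E d_{\lfloor T/\eta\rfloor}\le Ce^{-\gamma T}$. First, tracking over the fixed time $L$ brings the iterates close to the source clone, with probability at least $1-\varepsilon/2$. Second, a confinement argument shows the process stays in a neighborhood that absorbs it for the remaining $(T-L)/\eta$ steps. The escape probability of this neighborhood is bounded per step by $e^{-c_*/\eta}$, using an Azuma/Freedman bound for bounded increments of a Lyapunov function with negative drift. This produces the term $C(1+T/\eta)e^{-c_*/\eta}$. Inside the neighborhood, we use the first-order drift of the projective separation $d$. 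On inputs where the gates agree, the update acts linearly and identically on $w_1/a_1$ and $w_2/a_2$, so weight decay and the balance equation yield $\E[d_{n+1}\mid\mathcal F_n]\le(1-\gamma\eta)d_n$ there. At the source clone, disagreement inputs also contract $d$, since the source field is attracting, so a supermartingale bound for $e^{\gamma n\eta}d_n$ gives the claim.

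\textbf{Part (b), warm target phase (main obstacle).} We run the first-disagreement coupling of Section~\ref{app:targetSGD}. Up to the first step at which some batch input falls in the disagreement wedge, the target SGD dynamics for the two units is driven by a common random linear map on the shared gates. We compare it with an auxiliary process driven by the same independent batches, in which $d$ is a nonnegative supermartingale contracting at rate of order $\lambda\eta$ per step; this holds because at the target clone the agreement part of the linearization contracts at rate $\lambda$. The probability that the stage-$n$ batch hits the wedge is at most $b\,d_n/\pi$. Summing, and using the contraction, gives
\[
\Pp(\text{hit before }H/\eta)\le\frac{b}{\pi}\sum_n\E d_n\le\frac{Cb}{\eta}\,\E d_0\le\frac{Cb}{\eta}e^{-\gamma T},
\]
uniformly in $H$. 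On the no-hit event the iterate equals the auxiliary process, so it remains in the proportional neighborhood. There the risk exceeds $0.4$ by \eqref{sgd:riskgap}, provided a second confinement bound holds for the non-separating coordinates (scale and direction of the clone). That bound costs $C(1+H/\eta)e^{-c_*/\eta}$.

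The delicate part is exactly this coupling. We must verify that the contraction holds in conditional expectation for the exact discrete recursion, uniformly along the whole clone neighborhood rather than only at the clone point, and that the $O(\eta^2)$ and nonlinear corrections do not destroy summability. The confinement of the remaining coordinates needs a Lyapunov function whose bounded increments make the per-step exit probability exponentially small. A union bound over the three events then yields \eqref{sgd:combined}.
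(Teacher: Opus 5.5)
Your architecture matches the paper's: tracking for the fresh run, contraction of $d$ in the source phase, a first-disagreement coupling with a summed gate hazard in the target phase, and a union bound. There are, however, two genuine gaps.

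\textbf{The neutral allocation direction is not confined.} The source minimizers form a one-parameter family $\calS$, parametrized by the allocation $\ell_a=\log(a_1/a_2)$ (in target coordinates, $\rho=\tanh(\ell_a/2)$). The mean field has no restoring drift along this family. So no Lyapunov function with negative drift confines $\ell_a$, and your claim that the absorbing neighborhood is left with per-step probability $e^{-c_*/\eta}$ fails in exactly this direction. Yet every uniform constant needs an allocation margin away from the endpoints where one unit vanishes, and so does the hypothesis $\abs\rho\le\rho_0<1$ of Proposition~\ref{sgd:targettheorem}. Since $T$ may be as large as $e^{c_*/(2\eta)}$, even $O(\eta)$ noise along a neutral direction would destroy that margin.

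The missing idea is that positively proportional units keep their head ratio under every sample update. Hence $\abs{\ell_a^+-\ell_a}\le C\eta d_n$ for every batch, as in \eqref{sgd:sourceallocation}. The contraction of $d$ then gives $\E\sum_{n<\zeta_s}\eta d_n\le d_0/\gamma$, and Markov's inequality bounds the allocation exit by $Cd_0/(\gamma m)$. This is only polynomially small in $d_0$. A long burn-in makes it at most $\varepsilon/2$, and this is the actual origin of $\varepsilon$ in \eqref{sgd:combined}. Your tracking failures are exponentially small by Azuma and should not consume $\varepsilon$.

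The target phase has the same neutral coordinate $\rho$. The paper controls it through $\abs{\rho^+-\rho}\le C\eta\norm e$ and Markov, and absorbs it into the hazard term, not into the exponential confinement. Relatedly, in part (a) you must exclude convergence of source GF to an endpoint of $\overline{\calS}$, as Proposition~\ref{prop:source} does. Otherwise the target does not start near an interior clone and the uniform $O(1)$ in \eqref{sgd:gfcontrast} is unavailable.

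\textbf{The justification for source contraction across disagreement inputs fails.} Your statement that ``disagreement inputs also contract $d$, since the source field is attracting'' does not give $\E[d_{n+1}\mid\mathcal F_n]\le(1-\gamma\eta)d_n$. Population attraction controls only the mean of the disagreement correction, whereas you need the mean of a norm. When $d_n\lesssim\eta/b$, one disagreeing sample can move $e_s=u_1-u_2$ by more than $d_n$, and it then increases $\norm{e_s}$ whatever its direction. Kicks of size $\eta/b$ arriving at batch rate $Cbd$ would contribute $+C\eta d$ with an $O(1)$ constant. That cannot be absorbed into $\lambda\eta d$ for small $\lambda$, and it is exactly what happens in the target phase.

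The source phase works for a different reason. On the wedge, $\abs{u_i^\top X}\le Cd\norm X$ and $\abs{X_1}\le C(r_0+d)\norm X$, so both the label $\relu(X_1)$ and the network output are $O((r_0+d)\norm X)$. Each disagreeing sample therefore moves $e_s$ by at most $C\eta(r_0+d)/b$. The $L^1$ correction is then $C\eta(r_0+d)d$, and shrinking $r_0$ and $\eta_0$ leaves $\gamma=\lambda-C(r_0+\eta_0)>0$. This small-residual estimate is the source/target asymmetry on which the theorem rests, since the radial label has no small factor on the wedge.
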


Here $L$ is a source burn-in time, $\gamma$ the contraction rate of the source projective separation and $c_*$ a confinement rate. Any $\gamma<\lambda$ is admissible, at the cost of larger $L,C$ and smaller $c_*,\eta_0$, and we fix some $\gamma\in[\lambda/2,\lambda)$. At the source minima the averaged first-order linearization of the projective update, with both gates evaluated at one unit, is $I-\eta\lambda(I+e_1e_1^\top)\preceq(1-\eta\lambda)I$, and the disagreement correction is absorbed by shrinking the neighborhood and the step size (Appendix~\ref{app:sourceonestep}). Through a Markov estimate after the burn-in, $\varepsilon/2$ bounds the probability that the source phase drifts to an unfavorable allocation $\log(a_1/a_2)$, the relative scale of the two units along the clone family. A longer burn-in makes it smaller. The second term bounds, up to constants, $b\,\E[d_T]/\eta$ on the good source event, where $d_T$ is the projective separation at the checkpoint (Section~\ref{sec:proofmap}). The last term bounds excursions of the shared parameters (amplitude, balance) from the neighborhoods used in the proof.

\begin{corollary}[Metastable window]\label{sgd:window}
Take $\varepsilon=0.005$, fix $b$, and fix $C_0$ with $\gamma C_0>1$ (for instance $C_0>2/\lambda$). For all sufficiently small $\eta$, uniformly over $\theta_0\in K$, each prescribed source duration and horizon with
\[
 C_0\log(1/\eta)\le T\le e^{c_*/(2\eta)},\qquad H\le e^{c_*/(2\eta)}
\]
gives $\tau_f\le t_0$ and $\tau_w(T)>H$ with conditional probability at least $0.99$. In particular, for constants $c_0,c_1>0$ with $c_0c_1<c_*/2$ and $C_0\log(1/\eta)\le T\le c_0/\eta$ the warm run fails up to the exponential horizon $H=e^{c_1T}$.
\end{corollary}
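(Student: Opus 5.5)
The corollary follows from Theorem~\ref{sgd:main}(b) with $\varepsilon=0.005$: it suffices to show that, in the stated window and for all sufficiently small $\eta$, the two remaining error terms in \eqref{sgd:combined} together are at most $0.005$. The constants $L,t_0,C,c_*,\gamma,\eta_0$ are then fixed once $\varepsilon$ is fixed, and they do not depend on $b$. Since $b$ is also fixed, the only quantities left free are $\eta$, $T$ and $H$.

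First we bound the pretraining term $\frac{Cb}{\eta}e^{-\gamma T}$. For $T\ge C_0\log(1/\eta)$ we have $e^{-\gamma T}\le \eta^{\gamma C_0}$, so the term is at most $Cb\,\eta^{\gamma C_0-1}$. Because $\gamma C_0>1$, this tends to $0$ as $\eta\to0$ and is below $0.0025$ for small $\eta$. For the parenthetical example, any admissible $\gamma\in[\lambda/2,\lambda)$ gives $\gamma C_0\ge \lambda C_0/2>1$ when $C_0>2/\lambda$. We also need $T\ge L+1$, which holds once $C_0\log(1/\eta)\ge L+1$, again true for small $\eta$, together with $\eta\le\eta_0$.

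Next comes the confinement term $C(1+\frac{T+H}{\eta})e^{-c_*/\eta}$. With $T,H\le e^{c_*/(2\eta)}$ it is at most
\[
 C\Bigl(1+\tfrac{2}{\eta}e^{c_*/(2\eta)}\Bigr)e^{-c_*/\eta}\le Ce^{-c_*/\eta}+\tfrac{2C}{\eta}e^{-c_*/(2\eta)}.
\]
This tends to $0$ as $\eta\to0$ and is below $0.0025$ for small $\eta$. Adding the two terms, the probability is at least $1-0.005-0.0025-0.0025=0.99$. The bounds hold uniformly over $\theta_0\in K$ and over all $T,H$ in the window because the threshold on $\eta$ depends only on $b$, $C_0$, the constants of the theorem and $c_*$.

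For the final claim, take $C_0\log(1/\eta)\le T\le c_0/\eta$ and $H=e^{c_1T}$. Then $H\le e^{c_0c_1/\eta}\le e^{c_*/(2\eta)}$ because $c_0c_1<c_*/2$. Likewise $T\le c_0/\eta\le e^{c_*/(2\eta)}$ for small $\eta$. So this case lies inside the window just treated.

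The only delicate point is making sure the smallness threshold on $\eta$ is independent of $T$ and $H$. This works because each error bound above is monotone in $T$ and $H$ and was evaluated at the endpoints of the window.
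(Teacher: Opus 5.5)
Your proposal is correct and follows essentially the same route as the paper: apply Theorem~\ref{sgd:main}(b) with $\varepsilon=0.005$, bound the pretraining term by $Cb\,\eta^{\gamma C_0-1}$ and the confinement term by $C(1+2\eta^{-1}e^{c_*/(2\eta)})e^{-c_*/\eta}$, note that $C_0\log(1/\eta)\ge L+1$ for small $\eta$, and check that the particular case $T\le c_0/\eta$, $H=e^{c_1T}$ lies inside the window. Your explicit split of the remaining $0.005$ between the two error terms, and your remark that each term is monotone in $T$ and $H$ so the $\eta$-threshold is uniform over the window, make explicit what the paper's shorter proof leaves implicit.
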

\begin{proof}
For $T\ge C_0\log(1/\eta)$, the second term in \eqref{sgd:combined} is at most $Cb\eta^{\gamma C_0-1}\to0$. The last term is at most $C(1+2\eta^{-1}e^{c_*/(2\eta)})e^{-c_*/\eta}\to0$. Eventually $C_0\log(1/\eta)\ge L+1$. For the particular case, $T\le c_0/\eta\le e^{c_*/(2\eta)}$ and $H=e^{c_1T}\le e^{c_1c_0/\eta}\le e^{c_*/(2\eta)}$ for small $\eta$.
\end{proof}

Over Gaussian initialization the guarantee holds with probability at least $0.99\,\Pp(K)>1\%$, and smaller $\varepsilon$ raises the conditional confidence to any level below one.

\paragraph{Two regimes.}
The guaranteed horizon $e^{c_*/(2\eta)}$ does not depend on $T$; pretraining enters through the term $Cbe^{-\gamma T}/\eta$. The horizon comes from exponential concentration of the shared parameters iterated over blocks of fixed physical time, as in small-noise metastability~\citep{freidlin2012}. For fixed pretraining, warm SGD instead recovers in bounded time, as gradient flow predicts:

\begin{corollary}[Fixed pretraining duration]\label{sgd:fixedT}
For each $\theta_0\in K$ and $T\ge0$ there are $t_1,C,c,\eta_2>0$, independent of $b$, such that $\Pp(\tau_w(T)\le t_1\mid\theta_0)\ge1-Ce^{-c/\eta}$ for every $b\ge1$ and $0<\eta\le\eta_2$.
\end{corollary}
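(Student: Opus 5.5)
Corollary~\ref{sgd:fixedT} is a fixed-horizon tracking statement. For fixed $T$, both phases run over a bounded physical time. The plan is to show that SGD stays within a small tube around the two-phase gradient flow with exponentially high probability. Once it is in that tube, it inherits the GF recovery from Theorem~\ref{sgd:main}(a).

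\textbf{Step 1: deterministic target.} Fix $\theta_0\in K$ and $T\ge0$. Let $\phi_T$ be the source-GF checkpoint. By Theorem~\ref{sgd:main}(a), warm target GF from $\phi_T$ reaches risk $0.4$ at a finite time $\tau_{\rm GF}(T)$. Set $t_1=\tau_{\rm GF}(T)+1$. Because $\lambda T$ is finite and $\theta_0\in\calD_0$ has $\vartheta_1>0>\vartheta_2$, the checkpoint $\phi_T$ has a strictly positive angle $\delta_T>0$. So $\phi_T$ is not a clone, and the target flow genuinely separates the units.

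I want the first crossing to be robust, so I use a slightly stronger property: along the GF path the risk falls strictly below $0.4$ at some time $s\le t_1$. Call the margin $\kappa>0$. The risk is continuous, and so is the target flow map away from the gate singularity at $w_i=0$. Hence there is $\rho>0$ such that any target trajectory that stays within $\rho$ of the GF path, starting from a point within $\rho$ of $\phi_T$, also reaches risk $0.4$ by time $t_1$. To get this I would use that the population field $\nabla F_t$ is locally Lipschitz wherever $r_i$ is bounded away from $0$. This holds because the population risk for ReLU units is smooth off $w_i=0$: integrating over $X$ removes the kinks.

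\textbf{Step 2: exponential tracking over a compact tube.} On the $\rho$-tube around the concatenated source-then-target GF path:
\begin{itemize}
\item the norms $r_i$ are bounded below, since the path avoids $w_i=0$ on a compact interval;
\item sample gradients are uniformly bounded, since inputs lie in the disk of radius $2$.
\end{itemize}
Write the recursion \eqref{sgd:update} as an Euler step of GF plus a martingale increment $\eta\xi_n$. Each increment $\xi_n$ has zero conditional mean and is bounded by $M/\sqrt{\cdot}$; I only need $\|\xi_n\|\le M$. Applying a discrete Gr\"onwall argument to the stopped process gives
\[
\sup_{n\le (T+t_1)/\eta}\|\theta_n-\phi(n\eta)\|\le e^{L_F(T+t_1)}\Bigl(C\eta+\max_{m}\Bigl\|\eta\sum_{k<m}\xi_k\Bigr\|\Bigr).
\]
The martingale term is controlled by Azuma--Hoeffding (vector version, per coordinate). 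It exceeds $\rho e^{-L_F(T+t_1)}/2$ with probability at most $C\,\tfrac{T+t_1}{\eta}\exp(-c\rho^2e^{-2L_F(T+t_1)}/\eta)$. I absorb the polynomial prefactor by shrinking $c$.

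\textbf{Step 3: independence from $b$.} Averaging $b$ independent samples divides the conditional variance by $b$ and keeps the almost-sure bound $M$. Azuma with the a.s. bound alone therefore gives constants independent of $b$. This fixes $C$, $c$ and $\eta_2$, where $\eta_2$ is chosen so that the Euler error $C\eta$ is below $\rho e^{-L_F(T+t_1)}/2$.

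\textbf{Main obstacle.} The delicate point is the regularity of the population field near the GF path. In particular, I need a lower bound on the separation $\delta_T$ so that the path stays off any nonsmooth set. Here I would invoke the explicit population dynamics from the rotational symmetry, together with the margins of $K$.

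Note that the constants depend on $T$ through $e^{L_F T}$ and $\delta_T$. This is exactly why Corollary~\ref{sgd:fixedT} does not conflict with Corollary~\ref{sgd:window}: the latter takes $T\gtrsim\log(1/\eta)$, where this tracking bound is no longer small.
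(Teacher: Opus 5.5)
Your proposal is correct and follows essentially the paper's route. Both arguments use fixed-horizon tracking (Azuma plus discrete Gr\"onwall) on a compact tube with positive radii, with constants that depend only on the almost-sure gradient bound and hence not on $b$, and combine it with target GF recovery from the source-GF checkpoint. The paper applies Lemma~\ref{sgd:tracking} separately to the two phases and handles the random checkpoint by continuous dependence of the target flow; your single Gr\"onwall along the concatenated, time-inhomogeneous path works equally well.

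Two small corrections:
\begin{itemize}
\item Take $t_1$ to be a time after which target GF has risk at most $0.3$. Such a time exists because the risk converges to $R_{\rm good}<0.3$ by Proposition~\ref{prop:target}. The choice $t_1=\tau_{\rm GF}(T)+1$ alone does not guarantee a strict margin below $0.4$.
\item No lower bound on $\delta_T$ is needed for regularity. The population field is Lipschitz even near coincident directions, and only $r_i\to0$ is singular. So $\delta_T>0$ matters only for recovery.
\end{itemize}
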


Corollary~\ref{sgd:fixedT} cannot hold with a step-size threshold and constants uniform in $T$, since by Corollary~\ref{sgd:window} warm SGD fails once $T\ge C_0\log(1/\eta)$. In the joint sweep (Appendix~\ref{app:joint}, Figure~\ref{fig:joint}), the source time at which warm recovery is lost grows as $\eta$ decreases.

\section{The Instability Lives on the Gate Boundary}\label{sec:mechanism}

This section locates the population instability behind Theorem~\ref{sgd:main} on the gate boundary. Source gradient flow from $K$ converges (Proposition~\ref{prop:source}, Appendix~\ref{app:sourcepopulation}) to minima in which both units stay active, with both neurons along $e_1$, balanced parameters $a_i=\norm{w_i}$, and total amplitude $B_s=1-2\lambda$. The target has a proportional stationary family with amplitude $B_t=2(q-\lambda)$. Put $A=q-\lambda$ and $s_*=\sqrt A$, start from $a_i=s_*$, $w_i=s_*e_1$, and split the hidden vectors transversally:
\[
 w_1=(s_*,\varphi),\qquad w_2=(s_*,-\varphi).
\]
The \emph{common-gate} field applies the shared gate $\ind_{x_1>0}$ to both units: its predictor is $\ind_{x_1>0}\sum_ia_iw_i^\top x$, and its sample update applies the corresponding residual and gate to each full neuron. Samplewise, it coincides with SGD whenever the two gates agree.

\begin{proposition}[Instability carried by gate disagreements]\label{prop:boundary}
Let $V=-\nabla F_t$, let $V_{\rm com}$ be the common-gate mean field, and let $v=\frac12(V_{w_{12}}-V_{w_{22}})$, with $V_{w_{i2}}$ the second coordinate of the $w_i$-component of $V$, and define $v_{\rm com}$ likewise from $V_{\rm com}$. Then
\[
 v(\varphi)=A\varphi+O(\varphi|\varphi|),\qquad
 v_{\rm com}(\varphi)=-\lambda\varphi.
\]
Their per-sample integrands agree outside a disagreement region of probability $2\arctan(|\varphi|/s_*)/\pi$, so their mean difference $q\varphi+O(\varphi|\varphi|)$ is carried entirely by that region.
\end{proposition}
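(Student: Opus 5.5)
The plan is to compute both transverse fields explicitly at the symmetric configuration $a_1=a_2=s_*$, $w_{1,2}=(s_*,\pm\varphi)$. I would then compare them samplewise rather than differentiating a closed-form population risk. For unit $i$ the $w_{i2}$-component of the sample field is $-(f_\theta(x)-\norm{x})\,a_i\,\ind\{w_i^\top x>0\}\,x_2-\lambda w_{i2}$. The common-gate version replaces the gate by $\ind\{x_1>0\}$ in both the predictor and the update.

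\textbf{Step 1: the common-gate field.} Under the shared gate the predictor is $\ind\{x_1>0\}\sum_i a_iw_i^\top x=\ind\{x_1>0\}\,2s_*^2x_1=\ind\{x_1>0\}\,2Ax_1$, since the $\pm\varphi$ contributions cancel. So the residual $\ind\{x_1>0\}(2Ax_1)-\norm{x}$ is even in $x_2$. The data term $-\E[\text{residual}\cdot s_*\ind\{x_1>0\}x_2]$ is odd in $x_2$ and therefore vanishes by the reflection $x_2\mapsto-x_2$ of the disk. Only weight decay remains, and $v_{\rm com}(\varphi)=\tfrac12(-\lambda\varphi-\lambda(-\varphi))=-\lambda\varphi$ exactly.

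\textbf{Step 2: locating the disagreement region.} Since $w_i$ makes angle $\pm\arctan(\varphi/s_*)$ with $e_1$, the gates $\ind\{w_1^\top x>0\}$, $\ind\{w_2^\top x>0\}$ and $\ind\{x_1>0\}$ all agree outside two opposite double wedges around $\pm e_2$, each of angular width $2\arctan(\abs\varphi/s_*)$. The law of $\hat X$ is uniform, so their total probability is $2\arctan(\abs\varphi/s_*)/\pi$. Off this set the full and common-gate predictors, gates and residuals coincide, and so do the per-sample integrands. Hence $v-v_{\rm com}$ is the expectation of the integrand difference restricted to the wedges.

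\textbf{Step 3: evaluating the wedge contribution.} For $\varphi>0$, take the wedge near $+e_2$ (where $x_1<0$ slightly). There unit $1$ is active and the common gate is off, so the difference for unit $1$ is $-(a_1w_1^\top x-\norm{x})s_*x_2$. On the wedge $w_1^\top x=O(\varphi R)$ and $x_2=R+O(\varphi^2R)$, so this equals $s_*R^2+O(\varphi R^2)$. On the wedge near $-e_2$ the common gate is on while unit $1$ is off. The difference is $+(\text{common residual})\,s_*x_2$, which to leading order is again $s_*R^2$. Unit $2$ gives the mirror image with opposite sign, and the factor $\tfrac12$ in $v$ averages the two units.

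Integrating in polar coordinates, with the angular width $\arctan(\varphi/s_*)=\varphi/s_*+O(\varphi^3)$ and $\E R^2=2$, gives a leading term linear in $\varphi$. The $s_*$ from the head cancels the $1/s_*$ from the wedge width, leaving a pure multiple of $\varphi/\pi$. Since the integrand varies by $O(\varphi)$ relative to its leading value across a region of probability $O(\varphi)$, the remainder is $O(\varphi\abs\varphi)$. The odd symmetry $\varphi\mapsto-\varphi$ (swap the units) handles $\varphi<0$. This gives $v-v_{\rm com}=q\varphi+O(\varphi\abs\varphi)$ and hence $v=(q-\lambda)\varphi+O(\varphi\abs\varphi)=A\varphi+O(\varphi\abs\varphi)$.

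\textbf{Main obstacle.} The main obstacle is Step 3's bookkeeping. I need to track which gate configuration (unit-only, common-only, or both) occurs on each half-wedge, and for which unit. The sign and the exact constant $q$ come out only if these cases and the $\tfrac12$ averaging are handled consistently. I would cross-check the constant independently by differentiating the closed-form Gaussian ReLU kernel $\E[\relu(u^\top X)\relu(w^\top X)]$ (valid since the disk and $N(0,I_2)$ share $\E R^2$). That derivative produces the $\pi^{-1}$ boundary term directly, and I expect it to confirm $A=q-\lambda$ as the total.
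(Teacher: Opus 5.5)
Your proposal is correct, but it runs the argument in the opposite direction from the paper. The paper obtains $v$ from the closed-form population angular equation of Appendix~\ref{app:populationequations}, $\dot\delta=2s_*^2g(\delta)(1+O(\varphi^2))$ with $g(\delta)=\delta/2+O(\delta^2)$ and radial drift $O(\varphi^2)$. It obtains $v_{\rm com}$ from the mean identity $\E[(\ind_{X_1>0}U^\top X-\norm X)\ind_{X_1>0}X]=U/2-qe_1=-\lambda e_1$. The localization is then automatic: the integrands agree off the wedge, so the difference $q\varphi$ of the two means must sit on it.

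You instead compute the wedge integral directly and recover $v=A\varphi+O(\varphi\abs\varphi)$ as a corollary, without the ReLU kernel. Your bookkeeping does close. Write $D_i$ for the difference of the $w_{i2}$ integrands and $\psi=\arctan(\varphi/s_*)$ with $\varphi>0$. On the two half-wedges where unit~1's gate differs from $\ind_{x_1>0}$, $D_1=s_*R^2(1+O(\varphi))$. On the half-wedge where both are on but the predictors differ, $D_1=O(\varphi R^2)$, and on the fourth half-wedge $D_1=0$; $-D_2$ mirrors this. Hence $\tfrac12\E[D_1-D_2]=\tfrac12\cdot2\cdot2\cdot\frac{\psi}{2\pi}\cdot2s_*+O(\varphi^2)=q\varphi+O(\varphi^2)$, so your planned kernel cross-check is not needed. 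In Step~1 the middle expression should be $\tfrac12(-\lambda\varphi-\lambda\varphi)$; as typed it equals zero.

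Your route exhibits the mechanism explicitly. Both predictors are $O(\varphi)$ on the wedge, so the leading integrand comes entirely from the label $\norm x$; this is the ``no small factor'' property the paper later contrasts with the source task. The paper's route is shorter once the population equations are available. It also ties the rate $A$ to the same angular equation that drives the gradient-flow escape-time law.
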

\begin{proof}
The separation is $\delta=2\arctan(|\varphi|/s_*)$, and the exact population angular equation (Appendix~\ref{app:populationequations}) gives $\dot\delta=2s_*^2g(\delta)(1+O(\varphi^2))$ with $g(\delta)=\delta/2+O(\delta^2)$, while radial drift is $O(\varphi^2)$. For the common gate, $U=\sum_ia_iw_i=2Ae_1$, and rotational symmetry gives $\E[(\ind_{X_1>0}U^\top X-\norm X)\ind_{X_1>0}X]=U/2-qe_1=-\lambda e_1$, with zero transverse component, so transverse coordinates decay at rate $\lambda$.
\end{proof}

At $\lambda=1/20$, $A\approx0.587$, so the boundary contribution $q\approx0.637$ reverses the sign of the common-gate rate $-0.05$; Figure~\ref{fig:mechanism}(c) shows both rates on actual trajectories.

\paragraph{Sample and population linearizations.}
For almost every fixed input, a small enough splitting leaves both gates unchanged, so the fixed-input linearization of the sample gradient averages to the common-gate contraction. The sign reversal comes from inputs in a region that moves and shrinks with the splitting. For smooth sample losses with integrable derivative bounds, differentiation and expectation commute, so averaging the sample Jacobian recovers the population Jacobian; the moving gate boundary breaks this interchange. The population rate $A$ is recovered only through samples of probability $\delta/\pi$, and one such sample changes the splitting by order $\eta/b$, a relative change of order $\eta/(b\delta)$. Applied at the clone, a criterion built from the averaged fixed-input linearization predicts contraction and one built from the population linearization predicts escape; both are fixed by the clone and cannot see how long pretraining lasted.

\paragraph{Collapse without inactive neurons.}
Each neuron activates on half the distribution for every $\delta$, while the probability that the two gates differ vanishes as $\delta/\pi$. Activation frequency therefore misses the loss of distinct responses: sampled gradients act on both neurons through the same map, rescaling the shared output but rarely separating the units.

\paragraph{Amplitude adjustment at the task switch.}
The source and target clone amplitudes differ: $B_s=1-2\lambda<B_t=2(q-\lambda)$ ($0.9$ and ${\approx}1.173$ at $\lambda=1/20$). On the balanced proportional family, the target equation is $\dot B=B(B_t-B)$, so the task switch adjusts the shared amplitude while preserving proportionality. Throughout this transient, $B<2q$, the range in which the common-gate estimate \eqref{sgd:transversenegative} contracts transverse separation. Adapting the output scale thus lowers risk toward the target-clone value without creating the angular separation needed to cross $0.4$.

\paragraph{The linear population delay.}
Population source training gives $\delta_s(T)=C_se^{-\lambda T}(1+o(1))$ with $C_s$ bounded above and away from zero on $K$ (Appendix~\ref{app:sourceescape}). Near the clone family the target angle grows at rate $A$ after a bounded amplitude transient, so
\[
 \tau_{\rm GF}(T)=\frac1A\log\frac1{\delta_s(T)}+O(1)=\frac\lambda AT+O(1).
\]
This is the expected logarithmic dependence on initial asymmetry for escape from a symmetric plateau~\citep{saad1995,wei2008}.

\section{Identical Checkpoints and Sample Sizes}\label{sec:checkpoint}

Theorem~\ref{sgd:main} uses SGD in both phases. The next result starts both target optimizers from one source-GF checkpoint, so only target sampling differs.

\begin{corollary}[Identical checkpoint]\label{sgd:samecheckpoint}
Let $\theta_s^{\rm GF}(T)$ be the source GF checkpoint from $\theta_0\in K$, and let $\theta_n^{\rm SGD}$ be target SGD started at $\theta_s^{\rm GF}(T)$. There are $C,c,T_1,\eta_1>0$ such that for all $T\ge T_1$, $0<\eta\le\eta_1$, $b\ge1$, integers $N\ge1$ and $\theta_0\in K$,
\begin{equation}\label{sgd:samecheckpointbound}
\begin{aligned}
 &\Pp\Bigl(\min_{n\le N}R_t(\theta_n^{\rm SGD})\le0.4\Bigr)\\
 &\quad\le Cb e^{-\lambda T}\min\{N,\eta^{-1}\}+C(N+1)e^{-c/\eta},
\end{aligned}
\end{equation}
while target GF from the same checkpoint reaches $0.4$ at the time given by \eqref{sgd:gfcontrast}.
\end{corollary}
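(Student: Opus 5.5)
The plan is to reuse the target-phase machinery behind Theorem~\ref{sgd:main}(b), the first-disagreement coupling with a summable gate hazard, but to start it from the deterministic checkpoint $\theta_s^{\rm GF}(T)$ rather than a random source-SGD checkpoint. This removes the $\varepsilon$ and source-excursion terms. The GF half of the statement is immediate from Theorem~\ref{sgd:main}(a): the warm GF trajectory passes through $\theta_s^{\rm GF}(T)$, and target GF from that point is the second phase of that trajectory, so its hitting time is \eqref{sgd:gfcontrast}.

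\textbf{Step 1: the checkpoint.} By Proposition~\ref{prop:source} and the source escape estimate $\delta_s(T)=C_se^{-\lambda T}(1+o(1))$, for $T\ge T_1$ the checkpoint satisfies:
\begin{itemize}
\item it lies in a fixed compact neighborhood of the balanced source minima, uniformly over $\theta_0\in K$;
\item its allocation $\log(a_1/a_2)$ stays in a bounded range, which is now deterministic, so no Markov burn-in step is needed;
\item its projective separation obeys $d_0\le C'e^{-\lambda T}$ (since $d=2\sin(\delta/2)$ up to bounded balance factors).
\end{itemize}
This is the entry condition required by the target confinement lemma.

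\textbf{Step 2: the coupling up to the first disagreement.} Run target SGD together with the auxiliary common-gate process driven by the same batches, as in Section~\ref{app:targetSGD}. The two processes agree until the first update whose batch contains an input on which the two gates disagree.

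The common-gate process keeps the units proportional up to separation $d_n$. Along the amplitude transient, which has $B<2q$, the common-gate estimate \eqref{sgd:transversenegative} gives $\E[d_{n+1}\mid\mathcal F_n]\le(1-c\eta)d_n$ while the shared parameters stay in the confinement neighborhood. Each input lands in the disagreement wedge with probability at most $C d_n$, so the hazard of update $n$ is at most $Cb\,d_n$. A union bound over $n<N$ then gives
\[
\Pp(\text{disagreement before }N)\le Cb\sum_{n<N}\E d_n\le Cb\,d_0\min\{N,(c\eta)^{-1}\},
\]
which is the first term of \eqref{sgd:samecheckpointbound} after inserting $d_0\le C'e^{-\lambda T}$. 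Two bounds feed the $\min$: the trivial one $\E d_n\le d_0$ and the geometric sum $\sum_n(1-c\eta)^n$.

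\textbf{Step 3: confinement and the risk bound.} The shared coordinates (amplitude, balance, allocation) must remain in the neighborhood throughout. Exponential concentration over blocks of fixed physical time gives an excursion probability of at most $Ce^{-c/\eta}$ per block. Counting at most $N+1$ updates, hence at most that many blocks, yields $C(N+1)e^{-c/\eta}$. On the event of no disagreement and no excursion, the iterate stays within separation $O(d_0)$ of the proportional family, where target risk exceeds $R_{\rm col}-o(1)>0.4$ by \eqref{sgd:riskgap}. Hence $\min_{n\le N}R_t>0.4$ on this event, and the complement has the stated probability.

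\textbf{Main obstacle.} The hardest part is controlling $\E d_n$ in Step 2, where the expectation must be restricted to the confinement event. The contraction holds only inside the neighborhood, so I would stop the process at the exit time and sum $\E[d_n\ind_{n<\text{exit}}]$, which is a supermartingale argument. A second concern is that the contraction rate is not uniform early in the amplitude transient. Since $B$ starts at $1-2\lambda<2q$ and the proportional dynamics keep $B<B_t<2q$, the rate stays bounded below, but $\varepsilon$-margins must be checked. If contraction fails briefly, I would fall back on $\E d_n\le e^{C\eta n}d_0$ over a bounded physical time, which only changes the constant $C$.
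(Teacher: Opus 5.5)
Your overall route is the paper's: a deterministic checkpoint at distance $O(e^{-\lambda T})$ from the clone family, the common-gate coupling up to the first disagreement, and a geometric hazard sum. Your GF half and the removal of the $\varepsilon$ term are also correct. The paper, however, does not re-derive the target estimate. It checks that $\theta_s^{\rm GF}(T)$ meets the entry hypotheses of Proposition~\ref{sgd:targettheorem} ($h_a,r>0$, $\abs{\rho}\le\rho_0$, $\abs{k}+\abs{B-B_s}\le\nu$, $\norm{e}\le\epsilon_0$) and applies it.

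In re-deriving that proposition, your Step~3 has a genuine gap. You put the allocation among the coordinates confined by exponential concentration over blocks. On $e=0$, however, the common-gate mean field holds $\rho$ fixed: the clone family is neutral in $\rho$. No block flow therefore returns $\abs{\rho-\rho(0)}$ to a smaller level, Lemma~\ref{sgd:block} has no valid flow hypothesis for it, and tracking errors of fixed size accumulate over the $N\eta/\ell_{\rm blk}$ blocks. Likewise, ``no disagreement and no excursion'' does not keep the separation at $O(d_0)$ pathwise. $\norm{e}$ contracts only in conditional expectation; pathwise one has only $\norm{e^+}\le(1+\eta L_G)\norm{e}$. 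So over horizons of order $e^{c/\eta}$ updates nothing pathwise keeps it below the tube radius $\epsilon$, on which your contraction, hazard and risk-gap bounds all depend.

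The missing ingredients are two exit bounds that are polynomially, not exponentially, small.
\begin{itemize}
\item Ville's maximal inequality for the stopped nonnegative supermartingale $\norm{e_{n\wedge\zeta_t}}^2$ gives the projective-exit bound \eqref{sgd:projectiveexit}, namely $\Delta_0^2/\epsilon^2$.
\item The exact increment bound $\abs{\rho^+-\rho}\le C\eta\norm{e}$ follows from $h^\top e=0$ in \eqref{sgd:projection}. Summing it against the killed estimate \eqref{sgd:targetkilled} and applying Markov's inequality gives the allocation-exit bound \eqref{sgd:allocationexit}, namely $C\Delta_0/(\lambda(\rho_1-\rho_0))$. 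This summability of allocation motion, driven by contraction of $\norm{e}$, is the key idea your sketch lacks.
\end{itemize}
Both bounds are of order $\Delta_0=O(e^{-\lambda T})$. They are absorbed into $Cbe^{-\lambda T}\min\{N,\eta^{-1}\}$ because $b,N\ge1$ and $\eta\le1$. Only amplitude and balance, which are normally attracting, belong in the $C(N+1)e^{-c/\eta}$ term.

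Two smaller points:
\begin{itemize}
\item The contraction \eqref{sgd:transversenegative} is for the target coordinate $\Delta=\norm{e}$, not the source distance $d$. You therefore need $\Delta_0\le Cd_0$ from \eqref{sgd:coordinatecompare}.
\item The bound $d_0\le Ce^{-\lambda T}$ requires the balance errors themselves to decay, via $k_i(T)=k_i(0)e^{-2\lambda T}$. The term $(m_1-m_2)^2$ enters $d^2$ additively, so merely bounded balance factors would not suffice.
\end{itemize}
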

\begin{proof}
Source GF gives positive limiting amplitudes, angle $\le Ce^{-\lambda T}$ (Lemma~\ref{lem:slow}) and balance error $O(e^{-2\lambda T})$, so $\norm{w_1/a_1-w_2/a_2}\le Ce^{-\lambda T}$. By \eqref{sgd:coordinatecompare} in Appendix~\ref{app:composition}, the target projective coordinate obeys the same bound, and for large $T$ the checkpoint lies in the target neighborhood of Proposition~\ref{sgd:targettheorem}. Apply Proposition~\ref{sgd:targettheorem}.
\end{proof}

\paragraph{Necessary sample and batch sizes.}
If target SGD from $\theta_s^{\rm GF}(T)$ reaches $0.4$ within $N$ steps with probability at least $p$, and $C(N+1)e^{-c/\eta}\le p/2$, then \eqref{sgd:samecheckpointbound} forces $b\min\{N,\eta^{-1}\}\ge c_pe^{\lambda T}$ with $c_p=p/(2C)$. That is,
\[
 Nb\ge c_pe^{\lambda T},\qquad b\ge c_p\,\eta\, e^{\lambda T}.
\]

The first condition bounds the total number of target samples. The second persists as $N$ grows while the confinement error is small: later batches meet a smaller wedge, so at fixed step size a larger batch can satisfy this condition and a longer run cannot. Neither condition is sufficient, since a sampled disagreement need not trigger escape.

\paragraph{An explicit joint scaling.}
Fix $b$ and take a source-GF checkpoint at $T=c_T\log(1/\eta)$ with $c_T>1/\lambda$. For every $N$ the disagreement term of \eqref{sgd:samecheckpointbound} is at most
\[
 C\frac{b}{\eta}e^{-\lambda T}=Cb\,\eta^{\lambda c_T-1}\longrightarrow0,
\]
and on a horizon $H=\eta^{-c_H}$ with fixed $c_H>0$ the confinement term also vanishes. The SGD recovery probability on each such horizon thus tends to zero uniformly on $K$, while GF recovers in $O(\log(1/\eta))$ time.

\section{Proof Overview}\label{sec:proofmap}

The proof treats the phases in turn: source SGD contracts the projective separation, and the target analysis bounds the probability of ever sampling a disagreement. Appendix~\ref{app:notation} collects the notation.

\paragraph{Why the proof uses projective separation.}
Angle alone does not measure distance from positively proportional full neurons: with $m_i=r_i/a_i$, $d^2=(m_1-m_2)^2+2m_1m_2(1-\cos\delta)$, so $d=2\sin(\delta/2)$ under exact balance, while small balance errors can dominate a much smaller angle. After source GF, balance errors are $O(e^{-2\lambda T})$ and the angle is of order $e^{-\lambda T}$, so $d\asymp\delta$. For source SGD we work with $d$ directly, which allows arbitrary initialization in $K$ without imposing balance.

\paragraph{Source contraction proportional to the separation.}
Appendix~\ref{app:sourceSGD} treats the source phase. With positive heads, let $u_i=w_i/a_i$ and $d=\norm{u_1-u_2}$. Positive proportionality, $u_1=u_2$, is preserved by every sample update. Near the source minima, with $\mathcal F_n$ the $\sigma$-field of the first $n$ batches, the exact projective update gives (Appendix~\ref{app:sourceonestep})
\[
 \E[d_{n+1}\mid\mathcal F_n]\le(1-\gamma\eta)d_n,
\]
with no additive term of order $\eta$.
Near the source direction, both the source label and network output are small on disagreeing inputs. An $L^1$ estimate keeps a factor $d$ in the expected wedge correction, so it can be absorbed into the contraction. The radial target label $\norm x$ has no small factor on the same wedge, so its contribution can reverse the population transverse stability. A fixed population burn-in and Lemma~\ref{sgd:tracking} carry SGD from $K$ into this neighborhood (Appendix~\ref{app:entry}). The allocation, measured here by the log head ratio, then moves by at most $C\eta d_n$ per step, so its total motion is summable; a long enough burn-in makes losing the allocation margin cost at most $\varepsilon/2$, the origin of the fixed allowance $\varepsilon$ (Appendices~\ref{app:sourceconfine} and~\ref{app:entry}). This gives a good event $E_T$ with $\Pp(E_T^c)\le\varepsilon/2+C(1+T/\eta)e^{-c/\eta}$ and $\E[d_T\ind_{E_T}\mid\theta_0]\le Ce^{-\gamma T}$.

\paragraph{Target phase.}
Section~\ref{app:targetSGD} proves the target obstruction, Proposition~\ref{sgd:targettheorem}. An auxiliary process applies the gate of the shared hidden vector to both units, so both are multiplied by the same random matrix, and it coincides with SGD up to the first batch with a disagreement. Its target projective separation $\Delta$ (defined in Section~\ref{app:targetSGD}) satisfies
\[
 \E[\Delta_{n+1}\mid\mathcal F_n]\le(1-\eta\lambda/4)\Delta_n
\]
in a tube around the clone family, for every batch size, because on the proportional family the mean common-gate matrix satisfies $e^\top(\E G)e\le-\lambda\norm e^2$ for $e\perp h$ whenever $B\le2q$, a range that covers the amplitude transient from $B_s$ to $B_t$. A batch contains a disagreement with probability at most $Cb\Delta_n$, so the summed hazard is at most $Cb\Delta_0\min\{N,4/(\lambda\eta)\}$. Confinement of amplitude and balance over blocks of fixed physical time adds $C(N+1)e^{-c/\eta}$ and sets the exponential horizon.

\paragraph{Composition.}
At the checkpoint $\Delta_0\le Cd_T$ (Appendix~\ref{app:composition}); conditioning on it and using the source bound gives Theorem~\ref{sgd:main}(b). Fresh success follows from finite-time tracking of target GF with a strict risk margin.

\section{Proof of the Target Obstruction}\label{app:targetSGD}

The target half of Theorem~\ref{sgd:main} rests on one estimate: from a checkpoint near the proportional family, target SGD reaches risk $0.4$ only by sampling a gate disagreement, by a projective or allocation exit of probability $O(\Delta_0)$, or by an exponentially rare excursion of amplitude and balance. This section proves it, except for the confinement of amplitude and balance, which is a standard block argument (Appendix~\ref{app:bridge}).

\paragraph{Coordinates.}
Recall $q=2/\pi$, $B_t=2(q-\lambda)$ and $R_{\rm col}=1-q^2=0.5947\ldots$. For the full neuron vectors $z_i=(a_i,w_i)\in\R^3$, define
\begin{gather*}
 h=\frac{z_1+z_2}{\sqrt2}=(h_a,w),\qquad
 d_z=\frac{z_1-z_2}{\sqrt2},\\
 \rho=\frac{h^\top d_z}{\norm{h}^2},\qquad e=d_z-\rho h,
\end{gather*}
and $r=\norm{w}$, $S=1+\rho^2$, $k=h_a^2-r^2$, $B=Sh_a r$ (equal to $\omega_1+\omega_2$ on the proportional family). Here $e\perp h$. The proportional family is $e=0$, $\abs\rho<1$, $h_a,r>0$, and its target stationary manifold is $k=0$, $B=B_t$. The allocation $\rho$ records the relative scale of the two units, $\Delta=\norm e$ is the target projective separation, and $d_z$ is distinct from the source projective distance $d$.

\begin{proposition}[Finite-horizon target obstruction]\label{sgd:targettheorem}
Fix $0<\rho_0<\rho_1<\rho_2<1$, and recall $B_s=1-2\lambda<B_t$. There exist positive $\nu,\epsilon_0,\eta_{\rm t},C,c$ such that, whenever
\begin{gather*}
 h_a,r>0,\qquad\abs\rho\le\rho_0,\\
 \abs k+\abs{B-B_s}\le\nu,\qquad\norm e=\Delta_0\le\epsilon_0,
\end{gather*}
target SGD with any batch size $b\ge1$, $0<\eta\le\eta_{\rm t}$ and integer $N\ge1$ satisfies
\begin{align*}
 &\Pp\Bigl(\min_{0\le n\le N}R_t(\theta_n)\le0.4\Bigr)\\
 &\qquad\le Cb\Delta_0\min\{N,\eta^{-1}\}+C(N+1)e^{-c/\eta}.
\end{align*}
The constants are independent of $b,N,\Delta_0$ and the common direction.
\end{proposition}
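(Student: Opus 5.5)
The plan is a first-disagreement coupling, with a stopping time that localizes every estimate to a tube around the clone family. Define an auxiliary process $\tilde\theta_n$ from the same batches $X_{n,j}$. Each sample update in $\tilde\theta_n$ uses the common gate $\ind_{w^\top x>0}$ of the shared hidden vector $w$ for both units, so both full neurons $z_1,z_2$ are multiplied by the same random linear map $M_n=I-\eta\lambda I-\frac\eta b\sum_j G(X_{n,j};\tilde\theta_n)$. On any batch in which no sample lies in a gate-disagreement wedge of either $(w_1,w_2)$ relative to $w$, the SGD update equals the auxiliary update. Hence the two processes coincide up to the first batch $\sigma$ containing a disagreement.

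Let $\tau$ be the first exit from the tube
\[
 \mathcal T=\{\abs\rho\le\rho_1,\ \abs k+\abs{B-B_s}\le\nu',\ B\le 2q-\kappa,\ \Delta\le\epsilon_1\},
\]
with $\nu'$ enlarged so that the amplitude transient from $B_s$ toward $B_t$ fits: allow $B\in[B_s-\nu,B_t+\nu]$ rather than staying near $B_s$. On $\mathcal T$ with $\Delta$ small, the risk is at least $R_{\rm col}-C\Delta>0.4$. So the event $\{\min_{n\le N}R_t\le0.4\}$ is contained in $\{\sigma\le N\}\cup\{\tau\le N\}$, and it suffices to bound these two probabilities.

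\textbf{Step 1: transverse contraction.} On $\mathcal T$, compute $\E[\Delta_{n+1}\mid\mathcal F_n]$ for the auxiliary process. Since $z_i\mapsto M_nz_i$ is linear, $d_z\mapsto M_nd_z$ and $h\mapsto M_nh$. The new $e$ is the component of $M_nd_z$ orthogonal to $M_nh$, which equals $M_ne+\rho M_nh$ minus its projection onto $M_nh$. Expanding to first order in $\eta$ gives $e_{n+1}=e_n+\eta P_h^\perp(\bar G_n-\lambda I)e_n+\eta\xi_n+O(\eta^2\norm{e_n})$, with $\E\xi_n=0$ and $\norm{\xi_n}\le C\norm{e_n}$. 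Because everything is linear in $e$ with bounded inputs, every error term carries a factor $\Delta_n$.

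The text before the statement gives the key estimate $e^\top(\E G)e\le-\lambda\norm e^2$ for $e\perp h$ on the proportional family when $B\le2q$. Extending it by continuity to the tube, and squaring, yields
\[
 \E[\Delta_{n+1}^2\mid\mathcal F_n]\le(1-\eta\lambda/2+C\eta^2)\Delta_n^2.
\]
Then Jensen gives $\E[\Delta_{n+1}\mid\mathcal F_n]\le(1-\eta\lambda/4)\Delta_n$ for $\eta\le\eta_{\rm t}$. This holds uniformly in $b$, since averaging over a batch only reduces the variance.

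\textbf{Step 2: summed hazard.} Each unit $z_i=h/\sqrt2\pm(\rho h+e)$ has hidden direction within angle $C\Delta$ of $w$, given $\abs\rho\le\rho_1<1$ and $r$ bounded below. So a single sample disagrees with probability at most $C\Delta_n$, and a batch with probability at most $Cb\Delta_n$. Therefore
\[
 \Pp(\sigma\le N\wedge\tau)\le\sum_{n<N}Cb\,\E[\Delta_n\ind_{n<\tau}]\le Cb\Delta_0\sum_n(1-\eta\lambda/4)^n\le Cb\Delta_0\min\{N,4/(\lambda\eta)\}.
\]

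\textbf{Step 3: projective and allocation exits.} By Step 1 the process $\Delta_{n\wedge\tau}$ is a supermartingale, so Ville's inequality gives $\Pp(\sup\Delta\ge\epsilon_1)\le\Delta_0/\epsilon_1$. For the allocation, note that on the proportional family every common-gate update preserves $\rho$ exactly, so $\abs{\rho_{n+1}-\rho_n}\le C\eta\Delta_n$. Hence $\E\sum_n\abs{\rho_{n+1}-\rho_n}\le C\Delta_0$, and Markov's inequality bounds the probability of reaching $\rho_1$ by $C\Delta_0/(\rho_1-\rho_0)$. Both terms are at most $Cb\Delta_0\min\{N,\eta^{-1}\}$ since $b,N\ge1$ and $\eta\le1$.

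\textbf{Step 4: confinement of $(k,B)$.} On the proportional family, $(k,B)$ follow population dynamics with $\dot B=B(B_t-B)$ and $k$ contracting at rate $2\lambda$, plus martingale noise with bounded increments of size $\eta$. Split time into blocks of fixed physical length. In each block, apply Azuma/Freedman to compare with the ODE, which is attracting toward $B_t<2q$ and keeps $B$ inside $[B_s-\nu,B_t+\nu]$. Each block fails with probability at most $e^{-c/\eta}$, and a union bound over at most $N+1$ blocks gives $C(N+1)e^{-c/\eta}$.

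I expect Step 1 to be the main obstacle. It requires proving the conditional contraction of $\norm e$ with no additive $O(\eta)$ term, uniformly over the whole amplitude transient and the tube. This means controlling how the reprojection onto $h^\perp$ and the drift of $\rho$ feed back into $e$, so that every correction term is genuinely proportional to $\Delta$. I would first verify the sign of $e^\top\E G\,e$ on the proportional family in closed form using rotational symmetry, and then treat all off-family terms as $O(\Delta_n(\Delta_n+\abs k+\eta))$ perturbations absorbed by shrinking $\epsilon_1,\nu,\eta_{\rm t}$.
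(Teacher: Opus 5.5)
Your proposal is correct and follows essentially the same route as the paper: a common-gate coupling up to the first disagreement, conditional contraction of $\norm e$ with no additive floor, a geometrically summed hazard $Cb\Delta_0\min\{N,4/(\lambda\eta)\}$, Ville and Markov for the projective and allocation exits, block confinement of $(k,B)$ through the amplitude transient, and the Lipschitz risk gap. The Step~1 obstacle is milder than you expect: the identity you already wrote gives $e_{n+1}=P_{M_nh}M_ne_n$ exactly, so the allocation drops out and $\norm{e_{n+1}}\le\norm{M_ne_n}$. Expanding $\norm{M_ne_n}^2$, with the mean transverse bound perturbed by $O(\norm e)$ off the family, then yields the contraction directly, as in the paper.
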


The proof has three steps. An auxiliary process applies one gate to both units (\cref{app:commongate}); its projective separation contracts exactly in conditional expectation (\cref{app:projective}); and a stopping argument bounds the probability that actual SGD leaves the auxiliary process by the summed disagreement hazard (\cref{app:hazard}). \Cref{sec:targetconfine} states the confinement bound proved in Appendix~\ref{app:bridge}.

\subsection{An Exact Common-Gate Process}\label{app:commongate}

For $u=w/r$, let $g_\theta(x)=\ind_{w^\top x>0}$ and $U=a_1w_1+a_2w_2$. On the same sampled batch, define
\begin{gather*}
 f_{\rm aux}(x)=g_\theta(x)U^\top x,\\
 v=\frac1b\sum_j(f_{\rm aux}(X_j)-\norm{X_j})g_\theta(X_j)X_j,
\end{gather*}
\begin{gather*}
 G=-\lambda I-M(v),\qquad
 M(v)=\begin{pmatrix}0&v^\top\\v&0_{2\times2}\end{pmatrix},\\
 Q=I+\eta G,\qquad z_i^+=Qz_i.
\end{gather*}
Both units are multiplied by the same random matrix $Q$. If the two hidden gates agree on every batch example, their common gate also equals the gate of $w=(w_1+w_2)/\sqrt2$, and the auxiliary update is exactly the ordinary update \eqref{sgd:update}. The processes therefore couple until the first gate disagreement or the tube exit defined below.

Rotational symmetry gives the exact conditional mean
\begin{equation}\label{sgd:auxmean}
 \E[v\mid\theta]=U/2-qu,
\end{equation}
using $\E[\ind_{u^\top X>0}XX^\top]=I/2$ and $\E[\norm X\ind_{u^\top X>0}X]=(2/\pi)u$. Thus the auxiliary mean field is smooth on $r>0$, although the SGD update is not.

\subsection{Exact Finite-Step Projective Contraction}\label{app:projective}

Since $h^+=Qh$ and $d_z^+=Qd_z$, orthogonal projection gives exactly
\begin{equation}\label{sgd:projection}
\begin{gathered}
 e^+=P_{Qh}Qe,\qquad
 \rho^+-\rho=\frac{h^\top Q^2e}{\norm{Qh}^2},\\
 P_v=I-\frac{vv^\top}{\norm v^2},
\end{gathered}
\end{equation}
where $Q=Q^\top$ was used. On the proportional family, $U=Bu$ and $G_P:=\E G=-\lambda I+(q-B/2)M(u)$. For $e\perp h$, $h_a e_a+r u^\top e_w=0$, and therefore
\begin{equation}\label{sgd:transversenegative}
\begin{aligned}
 e^\top G_Pe&=-\lambda\norm e^2-2(q-B/2)\frac{h_a}{r}\, e_a^2\\
 &\le-\lambda\norm e^2\qquad(B\le2q).
\end{aligned}
\end{equation}
The radial target thus enters the common-gate dynamics only through the shared direction $u$, and the transverse part of the mean matrix is a pure contraction. Off the proportional family, the exact identity
\[
 U=Sh_a w+\rho(h_a e_w+e_aw)+e_ae_w
\]
implies $\norm{\E G-G_P}\le C_U\norm e$ on any fixed compact chart with positive lower bounds on $h_a,r$ and $\abs\rho\le\rho_2$. Choose a tube with $B\le B_t+\lambda/2<2q$ and projective radius $\epsilon$ so that $C_U\epsilon\le\lambda/2$.

Bounded samples give a bound $\norm G\le L_G$ for every possible batch in this tube; for example $\norm v\le4(\norm U+1)$. For $\eta\le\min\{(2L_G)^{-1},\lambda/(2L_G^2)\}$, \eqref{sgd:projection}--\eqref{sgd:transversenegative} imply
\begin{equation}\label{sgd:targetcontract}
\begin{aligned}
 \E[\norm{e^+}^2\mid\theta]&\le(1-\eta\lambda/2)\norm e^2,\\
 \E[\norm{e^+}\mid\theta]&\le(1-\eta\lambda/4)\norm e.
\end{aligned}
\end{equation}
Indeed, expand $\norm{Qe}^2$ and use $\norm{e^+}\le\norm{Qe}$, then Jensen. Also $h^\top e=0$ in \eqref{sgd:projection} gives
\begin{equation}\label{sgd:targetallocation}
 \abs{\rho^+-\rho}\le C\eta\norm e,
\end{equation}
because $Q^2=I+2\eta G+\eta^2G^2$ and $\norm{Qh}\ge\norm h/2$. Both estimates hold for every batch size, with constants independent of $b$.

\subsection{Confinement of the Shared Parameters}\label{sec:targetconfine}

The amplitude $B$ and the balance $k$ have stable mean dynamics on the proportional family, and the task switch moves $B$ from $B_s$ to $B_t$ along the logistic path $\dot B=B(B_t-B)$, which stays below $2q$. Appendix~\ref{app:bridge} defines a compact bridge tube for this transient, of fixed duration $\ell_0$, and a common tube $\{\sqrt{W_t}\le R_W/2\}$ with $W_t=k^2+(B-B_t)^2$ around $k=0$, $B=B_t$, both inside the chart of \cref{app:projective}, and shows with Lemmas~\ref{sgd:tracking} and~\ref{sgd:block} that
\begin{equation}\label{sgd:targetnormalexit}
 \Pp(\text{common-tube exit first by }N)\le C(N+1)e^{-c/\eta},
\end{equation}
where ``first'' means at or before the other stops defined below. This is the only source of the exponential horizon.

\subsection{Stopping, Gate Hazards, and Actual Population Risk}\label{app:hazard}

Let $n_0=\lceil\ell_0/\eta\rceil$ and let $\mathcal T_n$ be the compact bridge tube for $n<n_0$ and the common tube $\{\sqrt{W_t}\le R_W/2\}$ for $n\ge n_0$ (Appendix~\ref{app:bridge}); Lemma~\ref{sgd:block} is restarted at $n_0$ from the state reached at the end of the bridge. With $\tilde\theta_n$ the auxiliary process of \cref{app:commongate} (frozen after the stop, as defined below), let $\zeta_t$ be the first $n$ with $\tilde\theta_n\notin\mathcal T_n$, $\abs\rho\ge\rho_1$, or $\norm e\ge\epsilon$. All constants are chosen on a slightly larger compact chart, so the update which exits is still defined and obeys the one-step inequalities. Killing after the stop and using \eqref{sgd:targetcontract} give
\begin{equation}\label{sgd:targetkilled}
 \E[\norm{\tilde e_n}\ind_{n<\zeta_t}]
 \le\Delta_0(1-\eta\lambda/4)^n.
\end{equation}
The stopped squared norm $\norm{e_{n\wedge\zeta_t}}^2$ is a nonnegative supermartingale: the inequality holds also for the update which first exits. Ville's maximal inequality for this supermartingale, and Markov's inequality with \eqref{sgd:targetallocation}, yield
\begin{align}
 &\Pp(\text{projective exit first})\le\Delta_0^2/\epsilon^2,
 \label{sgd:projectiveexit}\\
 &\Pp(\text{allocation exit first})\le\frac{C\Delta_0}{\lambda(\rho_1-\rho_0)},
 \label{sgd:allocationexit}
\end{align}
with ``first'' as in \eqref{sgd:targetnormalexit}. For the latter, sum expected absolute increments using \eqref{sgd:targetkilled} and apply Markov's inequality.

In the compact tube the hidden vectors are perturbations of the nonzero, positively proportional vectors $(1\pm\rho)w/\sqrt2$. Their angle is at most $C_{\rm ang}\norm e$. A rotationally uniform sample disagrees with their two gates with exact probability equal to this angle divided by $\pi$.

\paragraph{Event inclusion.}
Retain the exit state at $\zeta_t$ and freeze the auxiliary process thereafter. This defines $\tilde\theta_n$ for all $n$ without extending the local chart; the stopped estimates are unchanged. Let $\mathcal G_n$ be the event that some sample of batch $n$ falls where the two hidden gates of $\tilde\theta_n$ disagree, and let $n_{\rm dis}$ be the first $n$ with $\mathcal G_n$. Actual SGD equals $\tilde\theta_n$ for every $n\le n_{\rm dis}\wedge\zeta_t$. If $n_{\rm dis}\ge N$ and $\zeta_t>N$, then $\theta_n=\tilde\theta_n$ lies in the tube for all $n\le N$, and its risk exceeds $0.4$ by the risk-gap argument below. Hence
\begin{align*}
 &\Bigl\{\min_{n\le N}R_t(\theta_n)\le0.4\Bigr\}\\
 &\qquad\subset\{n_{\rm dis}<N\wedge\zeta_t\}\cup\{\zeta_t\le N\}.
\end{align*}
Because $\{n<\zeta_t\}\in\mathcal F_n$ and the batch at step $n$ is independent of $\mathcal F_n$, $\Pp(\mathcal G_n\mid\mathcal F_n)\le bC_{\rm ang}\norm{\tilde e_n}/\pi$ on $\{n<\zeta_t\}$. With \eqref{sgd:targetkilled},
\begin{equation}\label{sgd:gatebudget}
\begin{aligned}
 \Pp(n_{\rm dis}<N\wedge\zeta_t)
 &\le\sum_{n<N}\E\bigl[\Pp(\mathcal G_n\mid\mathcal F_n)\ind_{n<\zeta_t}\bigr]\\
 &\le\frac{bC_{\rm ang}\Delta_0}{\pi}\sum_{n<N}(1-\eta\lambda/4)^n\\
 &\le\frac{bC_{\rm ang}\Delta_0}{\pi}\min\Bigl\{N,\frac{4}{\lambda\eta}\Bigr\}.
\end{aligned}
\end{equation}
The conditioning is on the past only: the hazard of step $n$ is computed for a fresh batch that is independent of $\mathcal F_n$, so no sample is ever conditioned on having avoided the wedge. The geometric sum is why the disagreement term saturates in $N$: contraction caps the total expected number of disagreeing samples at order $b\Delta_0/(\lambda\eta)$, however long the run.

\paragraph{Risk gap.}
Replace $d_z$ by $\rho h$ to obtain a proportional parameter $\theta_P$. The orthogonal change of coordinates gives $\norm{\theta-\theta_P}=\norm e$. Its predictor is a single ReLU, so its risk is at least $R_{\rm col}>0.4$ by \eqref{eq:clonegap}. Risk is Lipschitz on the compact tube. Reducing $\epsilon$ so that $L_R\epsilon<R_{\rm col}-0.4$ ensures that every state in the tube has risk above $0.4$. The event $\{\zeta_t\le N\}$ is bounded by \eqref{sgd:targetnormalexit}--\eqref{sgd:allocationexit}, and $\{n_{\rm dis}<N\wedge\zeta_t\}$ by \eqref{sgd:gatebudget}. The terms $C\Delta_0$ and $C\Delta_0^2$ are absorbed into $Cb\Delta_0\min\{N,\eta^{-1}\}$ since $N,b\ge1$, $\eta\le1$, $\Delta_0\le1$. This proves Proposition~\ref{sgd:targettheorem}.

\section{Simulations}\label{sec:experiments}

Simulations of the two-unit model check four consequences of the analysis: the gap appears from identical checkpoints; the recovery probability is approximately a function of the budget $\budget$ that controls the bound; the disagreement term saturates in the horizon, and so does observed recovery; and saturation depends on the contraction supplied by weight decay. All runs use fresh minibatches from the disk, double precision, the analytic population risk and, except for the weight-decay controls, $\lambda=0.05$. GD uses the same $\eta$ (Appendix~\ref{app:experiments} checks step halving). Here $\budget=b\delta_0/\eta$ with $\delta_0$ the angle at the task switch ($\budget_T$ after source time $T$). Appendix~\ref{app:experiments} gives details. The code is provided as ancillary files with this preprint.

\begin{figure}[t]
\centering\includegraphics[width=\columnwidth]{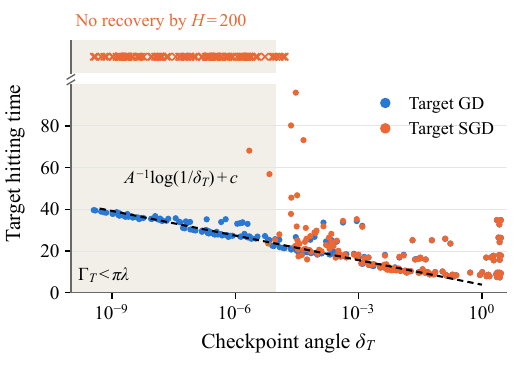}
\caption{\textbf{Identical checkpoints.} Target hitting times from shared source-SGD checkpoints ($T\in\{100,200,300,400\}$, $\eta=10^{-3}$, $b=16$). GD (blue) follows $A^{-1}\log(1/\delta_T)$ plus the median offset (dashed); SGD (orange) rarely recovers by $H=200$ (top strip) once $\budget_T<\pi\lambda$ (shaded).}
\label{fig:paired}
\end{figure}

\begin{figure*}[t]
\centering\includegraphics[width=0.9\textwidth]{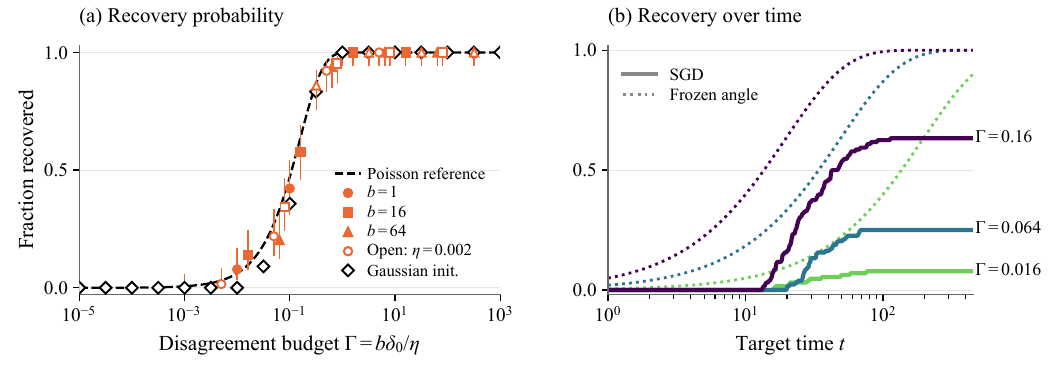}
\caption{\textbf{Recovery is organized by the disagreement budget.} (a) Recovery probability is approximately a function of $\budget=b\delta_0/\eta$ and lies at or below the Poisson reference $1-e^{-\budget/(\pi\lambda)}$ (dashed) up to sampling error. Orange: constructed checkpoints with 95\% Wilson intervals (filled: $\eta=0.01$); diamonds: checkpoints from Gaussian initializations. (b) Recovery saturates in the horizon (solid), far below the frozen-angle reference (dotted); three of the five long-horizon settings are shown.}
\label{fig:resolution}
\end{figure*}

\begin{table}[t]
\caption{\textbf{Identical checkpoints.} Number of the 64 saved initializations from which each target optimizer reaches risk $0.4$ by $H=200$ ($\eta=10^{-3}$, $b=16$). The source-GD columns are the protocol of Corollary~\ref{sgd:samecheckpoint}.}\label{tab:paired}
\centering\small\setlength{\tabcolsep}{4.5pt}
\begin{tabular}{@{}rcccc@{}}
\toprule
& \multicolumn{2}{c}{source SGD} & \multicolumn{2}{c}{source GD}\\
\cmidrule(lr){2-3}\cmidrule(lr){4-5}
$T$ & target SGD & target GD & target SGD & target GD\\
\midrule
0 & 64 & 64 & 64 & 64\\
100 & 64 & 64 & 64 & 64\\
200 & 59 & 64 & 61 & 64\\
300 & 20 & 64 & 19 & 64\\
400 & 7 & 64 & 7 & 64\\
\bottomrule
\end{tabular}
\end{table}

\paragraph{Identical checkpoints.}
We draw 64 Gaussian initializations conditioned on $\calD_0\supset K$, train each on the source with both SGD and GD, and pass each checkpoint to both target optimizers. Fresh target SGD ($T=0$) recovers from all 64 initializations. Target GD recovers from every checkpoint, with hitting time growing as $\log(1/\delta_T)/A$ (Figure~\ref{fig:paired}). Target SGD recovers from every checkpoint with $\budget_T\gtrsim1$, recovers late or only sometimes for $10^{-2}\lesssim\budget_T\lesssim1$, and has no observed recovery below $10^{-2}$. By $T=400$, 57 of the 64 source-SGD checkpoints have $\budget_T<0.04$; none of them samples a disagreement, and no run recovers after $t=100$ although GD recovers by $t\approx40$. All seven recoveries at $T=400$ have $\budget_T>1$ and start from near-opposite initializations (initial angle above $2.5$); at each, one unit has amplitude below $10^{-8}$, so its direction barely moves. Source-GD checkpoints, the protocol of Corollary~\ref{sgd:samecheckpoint}, give the same counts up to two runs at every $T$ (Table~\ref{tab:paired}).

\paragraph{Recovery across disagreement budgets.}
Beyond $1/\eta$ updates, the disagreement term of Proposition~\ref{sgd:targettheorem} is a multiple of $\budget$ at balanced checkpoints. Figure~\ref{fig:resolution}(a) shows that the recovery probability is approximately a function of $\budget$. It pools the 512 checkpoints with $T>0$ of the identical-checkpoint experiment (both source optimizers) with constructed balanced checkpoints at amplitude $0.9$, in which $\delta_0\in\{10^{-5},10^{-4},10^{-3},10^{-2}\}$, $b\in\{1,16,64\}$ and $\eta\in\{0.01,0.002\}$ vary independently (24 cells, 64 streams each, $H=200$). As a reference we use a Poisson count: if the angle decays as $\delta_0e^{-\lambda t}$ before the first disagreement, the expected number of disagreeing samples is $\budget/(\pi\lambda)$, and if each triggered recovery, recovery would have probability $1-e^{-\budget/(\pi\lambda)}$. The reference captures the location and overall shape of the observed transition, and it lies inside or above the 95\% interval of every cell. Every observed recovery follows a sampled disagreement, and most runs that sample one recover (Appendix~\ref{app:experiments}). At $\delta_0=10^{-4}$ and $\eta=0.01$, raising $b$ from $1$ to $64$ raises recovery from $8\%$ to $94\%$.

\paragraph{Saturation over long horizons.}
In five constructed settings ($\budget$ from $0.016$ to $0.64$), no recovery occurs after $t\approx112$ through $t=450$, about 20 to 32 times their GD hitting times. The frozen-angle reference, which keeps the initial angle and counts recovery at the first disagreement, exceeds $0.63$ by $t=200$ in every setting (Figure~\ref{fig:resolution}b). Recovery thus saturates in the horizon, mirroring the factor $\min\{N,\eta^{-1}\}$ in the bound.

\paragraph{Weight decay.}
Controls from constructed checkpoints change only the target phase (Appendix~\ref{app:controls}, Figure~\ref{fig:controls}). With $\lambda\in\{0.01,0.05\}$, no first disagreement is observed after $t\approx240$, and recovery plateaus through $H=1000$ at or below the Poisson reference. Without target weight decay the separation contracts only weakly and recovery keeps accumulating, so saturation comes from the contraction supplied by weight decay.

\section{Conclusion and Future Work}\label{sec:discussion}

In this model, population instability mispredicts finite-batch adaptation after pretraining because the inputs that separate near-clones become rarer as the common-gate dynamics contract their separation. Bounding their cumulative sampling probability gives hitting-time lower bounds and necessary sample and batch sizes. Warm SGD recovers as $\eta\to0$ at each fixed $T$ (Corollary~\ref{sgd:fixedT}) and fails once $T\gtrsim\log(b/\eta)$ (Theorem~\ref{sgd:main}).

\paragraph{Scope and extensions.}
The separation is proved for two units, but several ingredients extend beyond width two: identical units stay identical under GF and SGD at any width~\citep{joudaki2026,simsek2021}, and for bias-free units and rotation-invariant inputs in any dimension, two gates disagree with probability $\delta/\pi$. In a wider network other units may fit the target, so a clone pair need not block learning; the open question is when the input mass on which units respond differently controls the risk. Explicit values of the burn-in $L$ and the step-size cutoff $\eta_0$ are left open; the simulations test the mechanism, not these constants. For smooth activations the wedge becomes a transition layer whose sampling is open.

In this model a longer run at fixed batch size and step size cannot replace a larger batch over horizons of order $e^{c/\eta}$ updates (Section~\ref{sec:checkpoint}), and perturbations that act on each unit separately restore disagreement mass (Appendix~\ref{app:controls}), as in shrink-and-perturb~\citep{ash2020}. L2 decay toward zero, which aids plasticity elsewhere~\citep{dohare2024}, here contracts the separation of near-clones, whereas regularization toward initialization~\citep{kumar2024} would pull them apart.

\paragraph{Use of AI tools.}
Generative AI tools were used as assistants, under the authors' direction, for literature search and checking of references; additional re-checking of derivations and editing of the written presentation of statements and proofs; feedback on the design of the simulations, their implementation, and plotting; and editing and restructuring of the text. The authors led and reviewed all such output and take full responsibility for the content of the paper, including text, claims and code produced with the aid of these tools.

\preprintappendix
\section{Finite-Block Tracking and Confinement}\label{app:tracking}

We use one concentration estimate for source entry, source confinement, the target
bridge, target confinement, and fresh success. Throughout the sample updates, take
$\relu'(0)=0$; zero-gate inputs have probability zero in the nonzero-hidden-vector
charts used below, so any fixed convention gives the same stochastic laws.
Every SGD update \eqref{sgd:update} can be written as
$\theta_{n+1}=\theta_n+\eta F(\theta_n)+\eta\xi_{n+1}$, where $F=-\nabla F_y$ is the
population field and $\xi_{n+1}$ is the centered batch-gradient error.  Individual
sample gradients need not be Lipschitz across gates; only $F$ must be.

\begin{lemma}[Tracking a smooth conditional mean]\label{sgd:tracking}
Let $\mathcal K\subset\R^6$ be compact, $0<a\le1$, and let $\mathcal K^{a}$ be its closed
$a$-neighborhood.  Suppose that on $\mathcal K^a$ the field $F$ is $L_F$-Lipschitz and
bounded by $M_F$, and that whenever $\theta_n\in\mathcal K^a$,
\[
 \theta_{n+1}=\theta_n+\eta F(\theta_n)+\eta\xi_{n+1},\qquad
 \E[\xi_{n+1}\mid\mathcal F_n]=0,\qquad\norm{\xi_{n+1}}\le M.
\]
Let $\ell>0$ and let $S_0$ be a set of initial states whose $F$-flows $\phi_t(\theta_0)$
remain in $\mathcal K$ for $t\in[0,\ell+1]$.  Put $N_\ell=\lceil\ell/\eta\rceil$.
There is $\bar\eta\in(0,1]$, depending only on $(a,\ell,L_F,M_F,M)$, such that for all
$\eta\le\bar\eta$ and $\theta_0\in S_0$,
\[
 \Pp\Bigl(\max_{n\le N_\ell}\norm{\theta_n-\phi_{n\eta}(\theta_0)}>a\Bigr)
 \le12\exp\Bigl[-\frac{a^2e^{-2L_F(\ell+1)}}{48M^2(\ell+1)\eta}\Bigr].
\]
In particular, on the complementary event the recursion stays in $\mathcal K^a$ up to
step $N_\ell$.
\end{lemma}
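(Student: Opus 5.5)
We compare the recursion with the Euler scheme of the flow and control the accumulated noise by a martingale concentration inequality. The argument is run on the event where the iterates stay in $\mathcal K^a$, which is enforced by a stopping time. Let $\phi_t=\phi_t(\theta_0)$ and define
\[
 \tau=\min\{n:\norm{\theta_n-\phi_{n\eta}}>a\}\wedge(N_\ell+1).
\]
Since $\phi_{n\eta}\in\mathcal K$ for $n\eta\le\ell+1$, we have $\theta_n\in\mathcal K^a$ for every $n<\tau$, so the recursion hypotheses apply up to $\tau$. Write $D_n=\theta_n-\phi_{n\eta}$. For $n<\tau$,
\[
 D_{n+1}=D_n+\eta\bigl(F(\theta_n)-F(\phi_{n\eta})\bigr)+\eta\xi_{n+1}-\epsilon_n,
\]
where $\epsilon_n=\int_{n\eta}^{(n+1)\eta}\bigl(F(\phi_s)-F(\phi_{n\eta})\bigr)\,ds$ is the local discretization error. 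Lipschitz continuity and boundedness of $F$ on $\mathcal K$ give $\norm{\epsilon_n}\le L_FM_F\eta^2/2$.

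Introduce the stopped martingale $Z_n=\eta\sum_{k<n\wedge\tau}\xi_{k+1}$. Summing the recursion and applying the discrete Gr\"onwall inequality gives, for $n\le\tau\wedge N_\ell$,
\[
 \norm{D_n}\le\Bigl(\max_{m\le n}\norm{Z_m}+N_\ell L_FM_F\eta^2/2\Bigr)e^{L_F(\ell+1)}.
\]
The deterministic term is at most $L_FM_F(\ell+1)\eta/2$. Choose $\bar\eta$ small, depending only on $(a,\ell,L_F,M_F,M)$, so that this term times $e^{L_F(\ell+1)}$ is at most $a/2$. On $\{\tau\le N_\ell\}$ we have $\norm{D_\tau}>a$, and therefore
\[
 \max_{m\le N_\ell}\norm{Z_m}>\tfrac a2e^{-L_F(\ell+1)}.
\]
Stopping at $\tau$ only shortens the sums, so $Z$ stays a martingale with bounded increments even after the iterates would have left $\mathcal K^a$. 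This localization is the step needing care. Because $\tau$ is itself defined through $\norm{D}$, the Gr\"onwall bound must be applied exactly at the first exit step, using increments from steps $k<\tau$ only. This avoids any circularity.

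For the concentration step, each increment satisfies $\norm{\eta\xi_{k+1}}\le\eta M$, with at most $N_\ell\le(\ell+1)/\eta$ terms. Apply the Azuma--Hoeffding maximal inequality coordinatewise to the six coordinates of $Z$. Since $\norm Z\le\sqrt6\,\max_i\abs{Z^{(i)}}$, it suffices to bound $\max_m\abs{Z_m^{(i)}}>s/\sqrt6$ with $s=\frac a2e^{-L_F(\ell+1)}$. Each coordinate contributes
\[
 2\exp\Bigl[-\frac{s^2/6}{2\eta^2M^2(\ell+1)/\eta}\Bigr]
 =2\exp\Bigl[-\frac{a^2e^{-2L_F(\ell+1)}}{48M^2(\ell+1)\eta}\Bigr].
\]
Summing over the six coordinates gives the factor $12$. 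The bound therefore matches the statement exactly, and the claimed constants arise from this choice.

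On the complementary event $\tau>N_\ell$, so by construction the recursion stays within distance $a$ of $\phi$, and hence in $\mathcal K^a$, up to step $N_\ell$. The only genuine obstacle is the localization by $\tau$ described above; the remaining computations are routine.
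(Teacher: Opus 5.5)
Your proposal is correct and follows the paper's proof essentially step for step: you stop at the first exit from the $a$-tube, compare with the integral form of the flow using discrete Gr\"onwall through steps $j<\tau$ only, and apply coordinatewise maximal Azuma to the stopped martingale with a union bound over six coordinates, which gives exactly the constants $12$ and $48$. The only difference is cosmetic: your per-step discretization bound $L_FM_F\eta^2/2$ replaces the paper's cruder total $C_E\eta$ with $C_E=(\ell+1)L_FM_F$, and this affects only the choice of $\bar\eta$.
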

\begin{proof}
Let $\tau$ be the first $n$ with $\norm{\theta_n-\phi_{n\eta}(\theta_0)}>a$; it is a
stopping time, and $\theta_j\in\mathcal K^a$ for $j<\tau$.  For $m\le\tau\wedge N_\ell$,
put $\epsilon_m=\theta_m-\phi_{m\eta}(\theta_0)$.  Writing the flow in integral form,
\[
 \epsilon_m=\eta\sum_{j<m}\bigl[F(\theta_j)-F(\phi_{j\eta})\bigr]
 +\eta\sum_{j<m}\xi_{j+1}\ind_{j<\tau}
 -\sum_{j<m}\int_{j\eta}^{(j+1)\eta}\bigl[F(\phi_s)-F(\phi_{j\eta})\bigr]ds .
\]
The last sum is at most $C_E\eta$ with $C_E=(\ell+1)L_FM_F$.  Let
$Z=\max_{m\le N_\ell}\norm{\eta\sum_{j<m}\xi_{j+1}\ind_{j<\tau}}$.  Discrete Gronwall gives
$\norm{\epsilon_m}\le(Z+C_E\eta)e^{L_F(\ell+1)}$ for all $m\le\tau\wedge N_\ell$; the bound
at $m=\tau$ uses only states $\theta_j$, $j<\tau$, which lie in $\mathcal K^a$.  The
stopped increments $\eta\xi_{j+1}\ind_{j<\tau}$ form a bounded martingale difference
sequence.  Coordinatewise maximal Azuma and a union bound over six coordinates give
$\Pp(Z\ge z)\le12\exp[-z^2/(12M^2(\ell+1)\eta)]$.  Take $z=ae^{-L_F(\ell+1)}/2$ and
$\bar\eta$ with $C_E\bar\eta e^{L_F(\ell+1)}\le a/2$.  On $\{Z<z\}$ we get
$\norm{\epsilon_m}<a$ for every $m\le\tau\wedge N_\ell$, which forces $\tau>N_\ell$.
\end{proof}

The confinement arguments iterate Lemma~\ref{sgd:tracking} over blocks of fixed
physical length.  

\begin{lemma}[Block confinement]\label{sgd:block}
Let $\mathcal K$ be compact, let $\Psi$ be defined and $L_\Psi$-Lipschitz on
$\mathcal K^a$, let $\varrho>0$, and let $\mathcal I$ be a closed constraint set (in the
applications, an allocation interval or a projective constraint).  Suppose a block
duration $\ell_{\rm blk}$ and tolerance $a\in(0,1]$ with $aL_\Psi\le \varrho/4$ satisfy: for every
$x\in\mathcal K^a$ with $\Psi(x)\le \varrho$ and
$x\in\mathcal I$, the flow $\phi_t(x)$ stays in $\mathcal K$ with $\Psi\le3\varrho/2$ for
$t\in[0,\ell_{\rm blk}+1]$ and has $\Psi\le \varrho/2$ for $t\in[\ell_{\rm blk},\ell_{\rm blk}+1]$.  Let the hypotheses of
Lemma~\ref{sgd:tracking} hold on $\mathcal K$, and let $\zeta'$ be the first time the
recursion leaves $\mathcal I$.  If $\theta_0\in\mathcal K^a\cap\mathcal I$ and $\Psi(\theta_0)\le \varrho$, then
for $\eta\le\bar\eta$ and every integer $N\ge1$,
\[
 \Pp\bigl(\Psi(\theta_n)>2\varrho\text{ for some }n\le N\wedge\zeta'\bigr)
 \le\Bigl(1+\frac{N\eta}{\ell_{\rm blk}}\Bigr)\,
 12\exp\Bigl[-\frac{a^2e^{-2L_F(\ell_{\rm blk}+1)}}{48M^2(\ell_{\rm blk}+1)\eta}\Bigr].
\]
\end{lemma}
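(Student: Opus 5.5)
We would prove Lemma~\ref{sgd:block} by restarting Lemma~\ref{sgd:tracking} on consecutive blocks of $m=\lceil\ell_{\rm blk}/\eta\rceil$ updates, each started at the current SGD state. The number of blocks needed to cover $N$ steps is at most $1+N\eta/\ell_{\rm blk}$. On each block we use the same tracking failure probability
\[
p=12\exp\Bigl[-\frac{a^2e^{-2L_F(\ell_{\rm blk}+1)}}{48M^2(\ell_{\rm blk}+1)\eta}\Bigr],
\]
which is the bound of Lemma~\ref{sgd:tracking} with $\ell=\ell_{\rm blk}$. A union bound over blocks then gives the stated estimate.

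\emph{Invariant.} The inductive invariant at block starts $n_k=km$ is
\[
\theta_{n_k}\in\mathcal K^a\cap\mathcal I,\qquad \Psi(\theta_{n_k})\le\varrho.
\]
It holds at $k=0$ by hypothesis. Suppose it holds at $n_k$ and $n_k<\zeta'$. By the block hypothesis, the flow from $x=\theta_{n_k}$ stays in $\mathcal K$ for $t\in[0,\ell_{\rm blk}+1]$. So $x$ is an admissible initial state for Lemma~\ref{sgd:tracking} with $\ell=\ell_{\rm blk}$. That lemma is applied conditionally on $\mathcal F_{n_k}$; this is legitimate because the increments are martingale differences with respect to the filtration and the constants are uniform over $S_0$.

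\emph{One good block.} Outside an event of conditional probability at most $p$, we have $\norm{\theta_n-\phi_{(n-n_k)\eta}(x)}\le a$ for all $n_k\le n\le n_k+m$. On this good event:
\begin{itemize}
\item Every state in the block lies in $\mathcal K^a$, so $\Psi$ is defined there.
\item By the Lipschitz bound, $\Psi(\theta_n)\le 3\varrho/2+aL_\Psi\le 7\varrho/4<2\varrho$ throughout the block.
\item Since $m\eta\in[\ell_{\rm blk},\ell_{\rm blk}+\eta]\subset[\ell_{\rm blk},\ell_{\rm blk}+1]$, the endpoint satisfies $\Psi(\theta_{n_k+m})\le\varrho/2+\varrho/4\le\varrho$.
\item The endpoint lies within $a$ of a flow point in $\mathcal K$, hence in $\mathcal K^a$.
\end{itemize}
Membership in $\mathcal I$ is not controlled by the flow. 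It is handled by stopping: if the recursion leaves $\mathcal I$ at some step, that step is $\zeta'$, and the event in question only concerns $n\le N\wedge\zeta'$. If $\zeta'$ falls inside a good block, the bound $\Psi<2\varrho$ holds up to and including $\zeta'$, so nothing further is needed. Otherwise the invariant propagates to $n_{k+1}$.

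\emph{Conclusion.} The event $\{\Psi(\theta_n)>2\varrho\text{ for some }n\le N\wedge\zeta'\}$ is contained in the union, over blocks $k$ with $n_k<N\wedge\zeta'$, of the events that block $k$ is bad given that the invariant holds at its start. Each has probability at most $p$, by the tower property applied to $\{n_k<\zeta'\}\cap\{\text{invariant}\}\in\mathcal F_{n_k}$. Summing over at most $1+N\eta/\ell_{\rm blk}$ blocks gives the claim.

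\emph{Main obstacle.} The delicate point is the conditional restart. Lemma~\ref{sgd:tracking} is stated for a deterministic initial state, and here we apply it from a random $\mathcal F_{n_k}$-measurable state. We would handle this by noting that its proof uses only the conditional martingale-difference property and the bound $\norm{\xi}\le M$ on $\mathcal K^a$. Both persist for the shifted filtration $(\mathcal F_{n_k+j})_j$, and the bound is uniform in the starting point. We also need that $\bar\eta$ from Lemma~\ref{sgd:tracking} depends only on $(a,\ell_{\rm blk},L_F,M_F,M)$, so one threshold serves all blocks.
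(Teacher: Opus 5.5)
Your proposal is correct and follows the paper's proof essentially step for step. You restart Lemma~\ref{sgd:tracking} conditionally on $\mathcal F_{km}$ over blocks of $m=\lceil\ell_{\rm blk}/\eta\rceil$ updates, and you propagate the invariant $\theta_{km}\in\mathcal K^a\cap\mathcal I$, $\Psi(\theta_{km})\le\varrho$ via $3\varrho/2+aL_\Psi<2\varrho$ and $\varrho/2+aL_\Psi\le\varrho$. You then union-bound over at most $1+N\eta/\ell_{\rm blk}$ blocks. Your treatment of the conditional restart, including the uniformity of $\bar\eta$ and of the Azuma bound over $\mathcal F_{km}$-measurable starts and the measurability of $\{km<\zeta'\}$, spells out what the paper compresses into ``fresh samples make the block a recursion of the same form.''
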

\begin{proof}
Split time into blocks of $n_{\rm blk}=\lceil\ell_{\rm blk}/\eta\rceil$ updates. Condition on
$\mathcal F_{kn_{\rm blk}}$ and the event that all preceding blocks satisfy their tracking
bounds and $\zeta'>kn_{\rm blk}$. The induction gives $\Psi(\theta_{kn_{\rm blk}})\le \varrho$ and
$\theta_{kn_{\rm blk}}\in\mathcal K^a\cap\mathcal I$.  Fresh samples make the
block a recursion of the same form, and Lemma~\ref{sgd:tracking} applies uniformly
over such starting points.  On its good event, the block stays within $a$ of the flow.
It therefore keeps $\Psi\le3\varrho/2+aL_\Psi<2\varrho$ and ends with $\Psi\le \varrho/2+aL_\Psi\le \varrho$,
so the next block starts in the same condition unless $\zeta'$ has occurred.  A union
bound over at most $1+N/n_{\rm blk}\le1+N\eta/\ell_{\rm blk}$ blocks proves the claim.
\end{proof}

\section{Source SGD: Projective Contraction}\label{app:sourceSGD}

The source minimum family has $w_i=a_ie_1$, $a_i>0$, and
$a_1^2+a_2^2=B_s:=1-2\lambda$.  On a compact positive-head neighborhood
introduce
\[
 u_i=w_i/a_i,\quad d=\norm{u_1-u_2},\quad
 \ell_a=\log(a_1/a_2).
\]
The proportional-neuron family is exactly $u_1=u_2$.  Unlike balance,
this family is invariant under every Euler sample update.

\subsection{A Local One-Step Estimate without an Additive Floor}\label{app:sourceonestep}

For a batch, write $R_j=f_\theta(X_j)-\relu(X_{j1})$ and $\varsigma=1-\eta\lambda$.
Positive homogeneity gives the exact map
\begin{equation}\label{sgd:sourceprojective}
 a_i^+=a_i\left[\varsigma-\frac\eta b\sum_jR_j\relu(u_i^\top X_j)\right],\qquad
 u_i^+=\frac{\varsigma u_i-(\eta/b)\sum_jR_j\relu'(u_i^\top X_j)X_j}
 {\varsigma-(\eta/b)\sum_jR_j\relu(u_i^\top X_j)}.
\end{equation}
All denominators exceed $1/2$ on the chosen compact set for small $\eta$.
Anchor the gates at unit 2, set
$v=b^{-1}\sum_jR_j\relu'(u_2^\top X_j)X_j$, and define
\[
 \Phi_v(u)=\frac{\varsigma u-\eta v}{\varsigma-\eta v^\top u}.
\]
This equals both actual projective maps whenever all batch gates agree.
When differentiating $\Phi_v$, hold the shared, actual residuals $R_j$ fixed.
For bounded $u,v$,
\[
 D\Phi_v(u)=I+\eta \mathcal H(v,u)+O(\eta^2),\qquad
 \mathcal H(v,u)=(v^\top u)I+uv^\top.
\]
Writing $e_s=u_1-u_2$, the matched separation satisfies
\begin{equation}\label{sgd:matchedsource}
 e_{s,\mathrm{mat}}^+
 =[I+\eta \mathcal H(v,u_2)+\mathcal E]e_s,\qquad
 \norm{\mathcal E}\le C\eta d+C\eta^2.
\end{equation}
At an exact source minimum, $u_i=e_1$ and
$R_j=-2\lambda\relu(X_{j1})$, so
\[
 \E v=-\lambda e_1,\qquad
 \E \mathcal H(v,e_1)=-\lambda(I+e_1e_1^\top).
\]
The averaged matrix is continuous in the parameters, uniformly over compact
allocation intervals.  Bounded inputs bound $\mathcal H$ and the Taylor errors.
In a sufficiently small fixed source neighborhood of radius $r_0$,
\eqref{sgd:matchedsource} therefore yields
\[
 \E[\norm{e_{s,\mathrm{mat}}^+}^2\mid\theta]
 \le(1-c\lambda\eta)d^2,
 \qquad
 \E[\norm{e_{s,\mathrm{mat}}^+}\mid\theta]
 \le(1-c'\lambda\eta)d.
\]

For one sample whose unit gates disagree, the geometric wedge bounds are
\[
 \Pp(\mathcal G\mid\theta)\le Cd,\qquad
 \abs{u_i^\top X}\le Cd\norm{X},\qquad
 \abs{X_1}\le C(r_0+d)\norm{X}.
\]
Thus the source residual on this wedge is at most $C(r_0+d)\norm{X}$.
Comparing both numerator and denominator in \eqref{sgd:sourceprojective}
with the matched map gives
\[
 \norm{e_s^+-e_{s,\mathrm{mat}}^+}
 \le\frac{C\eta(r_0+d)}b\sum_{j=1}^b\ind_{\mathcal G_j}.
\]
Its conditional expectation is at most $C\eta(r_0+d)d$, independently of
$b$.  The symmetric part of $\E \mathcal H(v,e_1)$ is at most $-\lambda I$, and the averaged matrix
changes by $O(r_0)$ over the neighborhood, so the $L^1$ bound above holds with
$c'\ge1-C(r_0+\eta)/\lambda$.  Since $d\le Cr_0$ in this neighborhood, the disagreement
correction is at most $C\eta r_0 d$, so one may take $\gamma=\lambda-C(r_0+\eta_0)$; reducing $r_0$ and then $\eta_0$ gives $\gamma\ge\lambda/2$ with
\begin{equation}\label{sgd:sourceL1}
 \E[d_{n+1}\mid\mathcal F_n]\le(1-\gamma\eta)d_n
\end{equation}
throughout the source neighborhood.  
The disagreement correction is estimated in $L^1$: its squared contribution
can be $O(\eta^2r_0^2d)$ and cannot in general be absorbed into $d^2$.

For every batch, including samples with disagreeing gates, the head ratio obeys
\begin{equation}\label{sgd:sourceallocation}
 \abs{\ell_a^+-\ell_a}
 =\abs{\log\frac{\varsigma-(\eta/b)\sum_jR_j\relu(u_1^\top X_j)}
 {\varsigma-(\eta/b)\sum_jR_j\relu(u_2^\top X_j)}}\le C\eta d.
\end{equation}

\subsection{Allocation and Normal Confinement}\label{app:sourceconfine}

Take nested compact allocation intervals $\mathcal A\Subset\mathcal A'$.  Around the source
minimum manifold over $\mathcal A'$, let $\xi=0$ be the normal coordinates.  The
positive normal Hessian in Lemma~\ref{lem:local} yields a smooth uniformly
positive matrix $P(\ell_a)$ and
\[
 W_s(\xi,\ell_a)=\xi^\top P(\ell_a)\xi,\qquad
 \dot W_s\le-\kappa W_s
\]
for the population flow in a small tube.  The derivative of $P$ contributes
only cubic terms because $\dot\ell_a=O(d)=O(\norm{\xi})$.
Choose inner and outer tubes $\sqrt{W_s}\le r$ and $\sqrt{W_s}<2r$
inside the neighborhood of \eqref{sgd:sourceL1}.  A fixed population block
duration $t_{\rm blk}$ maps the inner tube into $\sqrt{W_s}\le r/4$ and stays inside
the inner tube, provided its allocation remains in $\mathcal A'$ over the duration $t_{\rm blk}+1$.  Taking $r$ small
ensures this allocation condition for a reference block starting in $\mathcal A$.

Stop the source process at the first outer normal-tube exit or allocation
exit from $\mathcal A$, and call that time $\zeta_s$. Killing the distance
at this time only decreases it, so \eqref{sgd:sourceL1} implies
\begin{equation}\label{sgd:sourcekilled}
 \E[d_n\ind_{n<\zeta_s}]\le d_0(1-\gamma\eta)^n,
 \qquad
 \E\sum_{n<\zeta_s}\eta d_n\le d_0/\gamma.
\end{equation}
If the initial allocation has margin $m>0$ from the endpoints of $\mathcal A$,
\eqref{sgd:sourceallocation} and Markov's inequality give
\begin{equation}\label{sgd:sourceallocationexit}
 \Pp(\text{allocation exits at or before the normal tube})
 \le Cd_0/(\gamma m).
\end{equation}
For normal confinement, source gradients are bounded on the slightly larger
compact chart and the population field is Lipschitz there.  Apply
Lemma~\ref{sgd:block} with $\Psi=\sqrt{W_s}=\norm{P(\ell_a)^{1/2}\xi}$ (Lipschitz on the
compact chart because $P$ is smooth and uniformly positive), inner radius $r$, constraint set $\{\ell_a\in\mathcal A\}$, block duration
$\ell_{\rm blk}=t_{\rm blk}$, and a fixed tolerance $a\le r/(4L_\Psi)$ (so $\varrho=r$); the block property above supplies its
flow hypotheses.  With $N=\lfloor T/\eta\rfloor$ it gives
\begin{equation}\label{sgd:sourcenormalexit}
 \Pp(\text{normal exit by }T\text{ at or before allocation exit})
 \le C(1+T/\eta)e^{-c/\eta}.
\end{equation}

\subsection{Entry from the Fixed Gaussian Initialization Region}\label{app:entry}

Proposition~\ref{prop:source} and local attraction give uniform source
convergence on $K$ to an interior compact segment of source clones.  Choose the target entry tube of
Proposition~\ref{sgd:targettheorem} first.  Then choose the source outer tube
and allocation intervals so that every point in that outer tube satisfies
the target's entry conditions, including its fixed normal tolerance $\nu$
and projective tolerance $\epsilon_0$.  The estimates of Appendices~\ref{app:sourceonestep}--\ref{app:sourceconfine} remain valid on smaller tubes.

For any small fixed $\delta_{\rm burn}>0$, a single finite population burn-in duration
$L$ sends every $\theta_0\in K$ into the inner source tube with allocation
margin $m$ and $d\le\delta_{\rm burn}/2$.  The paths up to $L$ lie in a common compact
set with positive heads and radii.  Lemma~\ref{sgd:tracking} shows that actual
SGD enters with $d\le\delta_{\rm burn}$ and the required margins with probability
at least $1-Ce^{-c_{\rm burn}/\eta}$.  
Choose $\delta_{\rm burn}$ so that the right side of
\eqref{sgd:sourceallocationexit} is at most the allowance
$\varepsilon/2$, lengthening this fixed $L$ if necessary.

Let $E_T$ be the event of good burn-in entry and no source stop through the
source checkpoint at time $\lfloor T/\eta\rfloor\eta$.  Equations
\eqref{sgd:sourcekilled}--\eqref{sgd:sourcenormalexit} give, after changing
constants to absorb $L$ and time rounding,
\begin{equation}\label{sgd:sourceinterface}
 \Pp(E_T^c\mid\theta_0)
 \le\varepsilon/2+C(1+T/\eta)e^{-c/\eta},\qquad
 \E[d_T\ind_{E_T}\mid\theta_0]\le Ce^{-\gamma T}.
\end{equation}

\section{Target Confinement and the Bridge}\label{app:bridge}

This appendix proves the confinement bound \eqref{sgd:targetnormalexit} used in Section~\ref{app:targetSGD}, in the coordinates defined there.

On $e=0$, the auxiliary mean field holds $\rho$ and $u$ fixed and satisfies
\[
 \dot h_a=-\lambda h_a+(q-B/2)r,\qquad
 \dot r=-\lambda r+(q-B/2)h_a.
\]
Consequently
\begin{equation}\label{sgd:commondynamics}
 \dot k=-2\lambda k,\qquad
 \dot B=-2\lambda B+(q-B/2)\sqrt{S^2k^2+4B^2}.
\end{equation}
At $k=0,B=B_t$ the normal Jacobian is
$\operatorname{diag}(-2\lambda,-B_t)$, uniformly over
$\abs\rho\le\rho_2$.  In a sufficiently small common normal neighborhood,
for $W_t=k^2+(B-B_t)^2$ one has
\begin{equation}\label{sgd:forcednormal}
 \dot W_t\le-\kappa W_t+C_e\norm e^2.
\end{equation}
Indeed, the full mean field differs smoothly from
\eqref{sgd:commondynamics} by $O(\norm e)$ in $\dot B$; Young's inequality
absorbs the resulting $O(\sqrt{W_t}\norm e)$ term.  The identity for $\dot k$
remains exact.  The continuous counterpart of
\eqref{sgd:transversenegative} contracts $\norm e$ inside this chart.

Choose a small common radius $R_W$, then reduce $\epsilon$ so that the forcing
$C_e\epsilon^2/\kappa$ is at most $R_W^2/256$.  Then \eqref{sgd:forcednormal} gives
$W_t(t)\le\max\{W_t(0),R_W^2/256\}$ and
$W_t(t)\le e^{-\kappa t}W_t(0)+R_W^2/256$ along the mean flow while $\norm e\le\epsilon$.
Hence a fixed block of duration $\ell_{\rm blk}\ge\kappa^{-1}\log(16/3)$ keeps
$\sqrt{W_t}\le R_W/4$ from any start with $\sqrt{W_t}\le R_W/4$, ends with
$\sqrt{W_t}\le R_W/8$, and never leaves $\sqrt{W_t}<R_W/2$.  Equation
\eqref{sgd:targetallocation} in continuous time bounds allocation motion
over one block by $C(\ell_{\rm blk}+1)\epsilon$; choose it smaller than half the buffer
$\rho_2-\rho_1$.  

The source checkpoint is not at the target amplitude $B_t$.  At
$e=k=0,B=B_s$, \eqref{sgd:commondynamics} is
\[
 \dot B=B(B_t-B).
\]
This logistic path stays between $B_s$ and $B_t$ and enters the stationary
inner tube after a fixed time $\ell_0$.  Uniform continuity over allocations
and directions gives $\nu,\epsilon_0>0$ such that the full auxiliary mean
trajectories from the proposition's initial set stay in a compact bridge
tube and enter strictly inside that stationary inner tube.  The bridge has
positive lower bounds on $h_a,r$, lies below $B=2q$, and therefore
also satisfies the projective contraction estimate, with $C_U$ and $\epsilon$ chosen on a chart containing the bridge tube.

The bridge is a single application of Lemma~\ref{sgd:tracking} over the fixed
duration $\ell_0$, with a tolerance smaller than the margin by which the bridge mean
trajectories enter $\sqrt{W_t}\le R_W/4$; its failure probability is at most $Ce^{-c/\eta}$.  Afterwards apply
Lemma~\ref{sgd:block} with $\Psi=\sqrt{W_t}$, $\varrho=R_W/4$, constraint set
$\{\norm e\le\epsilon,\ \abs\rho\le\rho_1\}$, and a fixed tolerance $a\le R_W/(16L_\Psi)$.
The auxiliary mean is smooth by \eqref{sgd:auxmean}, and bounded inputs bound all
martingale increments.  Together they give \eqref{sgd:targetnormalexit}:
\begin{equation*}
 \Pp(\text{common-tube exit by }N\text{ at or before other stops})
 \le C(N+1)e^{-c/\eta}.
\end{equation*}
The bound covers the bridge block, since $1+N\eta/\ell_{\rm blk}\le C(N+1)$.

\section{Proof of the Two-Phase Theorem}\label{app:composition}

\begin{proof}[Proof of Theorem~\ref{sgd:main}]
Part (a) follows from Proposition~\ref{prop:target} and the uniform escape
estimate in Appendix~\ref{app:sourceescape}, applied to the compact set $K$ in
Appendix~\ref{app:explicitK}.
For part (b), the source allocation bound \eqref{sgd:sourceallocationexit} supplies the fixed allowance $\varepsilon$.

Choose a compact segment containing all source clone limits from $K$.
On clones, the source allocation $\ell_a$ and target allocation satisfy
$\rho=\tanh(\ell_a/2)$.  Select nested target allocation intervals containing
this segment with strict margins.  Construct the target entry neighborhood
first, then fit the source outer tube and allocation interval inside it,
as specified above.  On the resulting source compact set, smoothness of the
coordinate changes gives
\begin{equation}\label{sgd:coordinatecompare}
 \norm{e_{\rm target}}\le C\norm{w_1/a_1-w_2/a_2}=Cd.
\end{equation}
For example,
$z_1-(a_1/a_2)z_2=a_1(0,u_1-u_2)$; orthogonal projection and the compact
positive-head bounds give \eqref{sgd:coordinatecompare}.

On the good source event $E_T$, all target entry conditions hold.
Condition on that random checkpoint and apply
Proposition~\ref{sgd:targettheorem} to the independent target batches.
By \eqref{sgd:sourceinterface} and \eqref{sgd:coordinatecompare}, with $N=\max\{1,\lfloor H/\eta\rfloor\}$,
\begin{align*}
 \Pp(\tau_w(T)\le H\mid\theta_0)
 &\le\Pp(E_T^c\mid\theta_0)
   +\frac{Cb}{\eta}\E[\norm{e_T}\ind_{E_T}\mid\theta_0]
   +C(1+H/\eta)e^{-c/\eta}\\
 &\le\varepsilon/2+\frac{Cb}{\eta}e^{-\gamma T}
   +C\left(1+\frac{T+H}{\eta}\right)e^{-c_*/\eta},
\end{align*}
where $c_*$ is at most each of the burn-in, source and target confinement rates.
Keeping instead the finite-$N$ factor of
Proposition~\ref{sgd:targettheorem} gives, for every integer $N\ge1$,
\begin{equation}\label{sgd:finiteN}
 \Pp(\tau_w(T)\le N\eta\mid\theta_0)\le\varepsilon/2+Cb\,e^{-\gamma T}\min\{N,\eta^{-1}\}
 +C(1+T/\eta+N)e^{-c_*/\eta}.
\end{equation}
Applied to \eqref{sgd:finiteN} with $\varepsilon<p/4$ and $C(1+T/\eta+N)e^{-c_*/\eta}\le p/8$, the argument of Section~\ref{sec:checkpoint} gives the same necessary sizes for the two-phase SGD run, with $e^{\gamma T}$ in place of $e^{\lambda T}$ and a smaller $c_p$.

For fresh success, Proposition~\ref{prop:target} gives target population convergence on $K$
to risk $R_{\rm good}<0.3$.  For each point choose a finite time at which
risk is below a fixed intermediate level, say $0.3$.  Finite-time continuity
and a finite cover of $K$ give a common time horizon $t_0$ and a positive
risk margin for these crossings.  The relevant paths lie in a common compact
set with nonzero heads and radii.  Lemma~\ref{sgd:tracking}, with a tolerance
small compared with the risk margin, gives
\[
 \Pp(\tau_f>t_0\mid\theta_0)\le Ce^{-c_f/\eta}.
\]

A union bound proves \eqref{sgd:combined} after reducing $c_*$ to at most $c_f$ and enlarging constants.
\end{proof}

\begin{proof}[Proof of Corollary~\ref{sgd:fixedT}]
Fix $\theta_0\in K$ and $T$. The source gradient-flow path $\phi^s_t(\theta_0)$, $t\le T+1$, lies in
a compact set with positive heads and radii (Lemma~\ref{lem:invariant}). On a closed neighborhood
of it the population field is Lipschitz, including near coincident directions by the
kernel expansion of Appendix~\ref{app:populationequations}, and sample gradients are bounded
independently of $b$. Since $\phi^s_T(\theta_0)\in\calD$, Proposition~\ref{prop:target} shows that
target gradient flow from it converges to risk $R_{\rm good}<0.3$, so choose a finite $t_1$ after which its risk remains at most $0.3$. By continuous dependence on initial conditions over $[0,t_1+1]$ and
Lipschitz continuity of the risk, choose $a>0$ such that every target flow started within $2a$
of $\phi^s_T(\theta_0)$ stays in a compact set with positive heads and radii and has risk at most
$0.35$ on $[t_1,t_1+1]$. For $T>0$, Lemma~\ref{sgd:tracking} with $\ell=T$ and tolerance $a/2$ gives
$\norm{\theta_{\lfloor T/\eta\rfloor}-\phi^s_T(\theta_0)}\le a$ with probability at least $1-Ce^{-c/\eta}$; the
rounding $T-\lfloor T/\eta\rfloor\eta<\eta$ changes the flow by $O(\eta)$. When $T=0$ the initial checkpoint is exact and this source-tracking step is omitted. An application over the target
duration $t_1+1$, with a tolerance small compared with the risk margin, shows that target SGD
has risk at most $0.4$ at update $\lceil t_1/\eta\rceil$, whose time lies in $[t_1,t_1+\eta]$, with probability at least $1-Ce^{-c/\eta}$.
Replacing $t_1$ by $t_1+1$ gives the claim.
\end{proof}

\paragraph{Dependence on the weight decay.}\label{app:lambda}
The proofs use $\lambda$ only through the following facts: $\lambda>0$; $B_s=1-2\lambda>0$ (Lemma~\ref{lem:local}, Proposition~\ref{prop:source}); $A=q-\lambda>0$ (Propositions~\ref{prop:boundary} and~\ref{prop:target}, Lemma~\ref{lem:escape}); $B_s<B_t$, which holds because $2q>1$, and $B_t+\lambda/2=2q-\frac32\lambda<2q$ (Section~\ref{app:targetSGD}); and $R_{\rm good}=1-2q^2+2\lambda^2<0.3$, which gives \eqref{sgd:riskgap} and the intermediate risk levels $0.3$ and $0.35$ above. The level $R_{\rm col}$, the function $R_{\min}$ in \eqref{eq:anglegap}, the initial-risk bound $0.5928$, and $K$ and $\Pp(K)$ do not depend on $\lambda$. These facts hold for every $\lambda\in(0,1/5]$, where $R_{\rm good}\le0.27$; the binding condition is $R_{\rm good}<0.3$, and the others need only $\lambda<1/2$. Hence all results of Sections~\ref{sec:setup}--\ref{app:targetSGD} hold for every fixed $\lambda\in(0,1/5]$, with constants that depend on $\lambda$ and may degrade as $\lambda\to0$.

\section{Population Equations and Regularity}\label{app:populationequations}

We use the model and notation of Section~\ref{sec:setup}; the symbols $h_i$, $k_i$ and $s_i$ defined below are local to Appendices~\ref{app:populationequations}--\ref{app:sourceescape}. We write
$q=2/\pi$, $A=q-\lambda$, and $B_s=1-2\lambda$. The population objective is
the same for the disk and $N(0,I_2)$ by the radial equivalence in Section~\ref{sec:setup}. The ReLU kernel for unit directions separated by $t\in[0,\pi]$ is
\begin{equation*}
 J(t)=\frac{\sin t+(\pi-t)\cos t}{2\pi},\qquad
 g(t)=-J'(t)=\frac{(\pi-t)\sin t}{2\pi}.
\end{equation*}
In particular, $J(0)=1/2$, $J(\pi)=0$, $g>0$ on $(0,\pi)$, and
\begin{equation}\label{eq:expansion}
 J(t)=\frac12-\frac{t^2}4+\frac{t^3}{6\pi}+O(t^4),\qquad
 g(t)=\frac t2+O(t^2).
\end{equation}
For the source sector write $\alpha=\vartheta_1$, $\beta=-\vartheta_2$,
$\delta=\alpha+\beta$, and define
\[
 q_1=J(\alpha)-\omega_1/2-\omega_2J(\delta),\qquad
 q_2=J(\beta)-\omega_2/2-\omega_1J(\delta).
\]
With $h_i=a_i/r_i$, the exact source equations are
\begin{align}
 \dot a_i&=r_iq_i-\lambda a_i,&
 \dot r_i&=a_iq_i-\lambda r_i,&
 \dot \omega_i&=(a_i^2+r_i^2)q_i-2\lambda \omega_i,\label{eq:amplitudes}\\
 \dot\alpha&=h_1[-g(\alpha)+\omega_2g(\delta)],&
 \dot\beta&=h_2[-g(\beta)+\omega_1g(\delta)].\label{eq:sourceangles}
\end{align}
For the target, $\E[\norm{X}\relu(u^\top X)]=q$ for every unit $u$.
Consequently, \eqref{eq:amplitudes} holds with
\begin{equation}\label{eq:targetq}
 q_i=q-\omega_i/2-\omega_jJ(\delta),\qquad
 \dot\delta=(h_1\omega_2+h_2\omega_1)g(\delta),\quad j\ne i.
\end{equation}
Here $\delta\in(0,\pi)$ is the angular separation; the absolute orientation
does not enter the radial target risk.

For either task, set $k_i=a_i^2-r_i^2$ and $s_i=a_i+r_i$.  Direct calculation gives
\begin{equation}\label{eq:identities}
 k_i(t)=k_i(0)e^{-2\lambda t},\quad
 \dot h_i=q_i(1-h_i^2),\quad
 (\log h_i)^{\boldsymbol\cdot}=-\frac{k_i}{\omega_i}q_i,\quad
 (\log s_i)^{\boldsymbol\cdot}=q_i-\lambda.
\end{equation}

Whenever $a_i,r_i>0$ and $q_i\ge-\omega_i/2$, one has
\begin{equation}\label{eq:hbound}
 D^+\abs{\log h_i}\le\frac{\abs{k_i}}2,
 \qquad
 \abs{\log h_i(t)}\le
 \abs{\log h_i(0)}+\frac{\abs{k_i(0)}}{4\lambda}.
\end{equation}
Indeed, $\log h_i$ has the sign of $k_i$.  Thus $h_i$ stays bounded above and
away from zero.  If $q_i$ is bounded, the logarithmic equations for $a_i,r_i$
then preclude a zero at finite time.

On bounded parameter sets with $r_i>0$, the only potentially unbounded
population Hessian terms are bounded by constants times $a_i/r_i$.
This follows by differentiating the two-homogeneous Gaussian kernel:
its second derivatives in $w_i$ are of size $O(1/r_i)$, multiplied by $a_i$.
The kernel expansion also gives bounded second derivatives at coincident
directions.  Consequently, \eqref{eq:hbound} bounds the Hessian along the
trajectories considered below.  Coercivity bounds their parameters and
gradients.  Energy dissipation and uniform continuity therefore imply
\begin{equation}\label{eq:gradzero}
 \int_0^\infty\norm{\nabla F_s}^2\,dt<\infty,
 \qquad \norm{\nabla F_s(\theta(t))}\longrightarrow0.
\end{equation}
At a limiting zero unit, $k_i\to0$ forces both $a_i,r_i\to0$, and that unit's
gradient tends continuously to zero.  

\section{Global Source Convergence}\label{app:sourcepopulation}

\begin{lemma}[Invariance and angular caps]\label{lem:invariant}
Source trajectories from $\calD$ remain in $\calD$.  Every parameter is
bounded, every $a_i,r_i$ is positive at finite time, and
\[
 \alpha(t)\le\max\{\alpha(0),\pi/2\}<\pi,\qquad
 \beta(t)\le\max\{\beta(0),\pi/2\}<\pi.
\]
\end{lemma}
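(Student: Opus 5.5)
The plan is to read every assertion off the exact source equations \eqref{eq:amplitudes}--\eqref{eq:sourceangles} and the identities \eqref{eq:identities}--\eqref{eq:hbound}, then close with a continuation argument. Let $t^*$ be the first time the trajectory leaves $\calD$ or the maximal existence time, whichever is smaller. On $[0,t^*)$ we have $a_i>0$ and $r_i>0$, so $h_i$ is defined.

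\textbf{Boundedness.} Since $F_s$ is nonincreasing along gradient flow, $\tfrac\lambda2\norm\theta^2\le F_s(\theta(0))$. This bounds all parameters, and hence bounds $\omega_i$ and $q_i$, because $J\in[0,1/2]$.

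\textbf{Positivity of $a_i$ and $r_i$.} The bound $q_i\ge-\omega_i/2-\omega_j/2$ holds, which is weaker than the hypothesis $q_i\ge-\omega_i/2$ of \eqref{eq:hbound}. So instead I would argue directly from the linear system $\dot a_i=r_iq_i-\lambda a_i$, $\dot r_i=a_iq_i-\lambda r_i$. With $q_i$ bounded, $\log s_i$ has bounded derivative, so $s_i>0$ at all finite times. The balance $k_i(t)=k_i(0)e^{-2\lambda t}$ together with $\dot h_i=q_i(1-h_i^2)$ shows that $h_i$ cannot reach $0$ or $\infty$ in finite time: the ODE for $h_i$ has bounded coefficients and the fixed points $\pm1$, and $\log h_i$ satisfies a Lipschitz ODE with bounded right-hand side, by \eqref{eq:identities} with $\omega_i$ replaced via $h_i$. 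Hence both $a_i=s_ih_i/(1+h_i)$ and $r_i=s_i/(1+h_i)$ stay positive. This also gives $\omega_i>0$.

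\textbf{Angular caps.} Under the ordering $\vartheta_1>0>\vartheta_2$, the angles satisfy $\alpha,\beta>0$. The key inequality is that at any time with $\alpha\ge\pi/2$ we have $\dot\alpha<0$. To prove it, I would show $g(\alpha)>\omega_2 g(\delta)$ there. Since $\delta=\alpha+\beta>\alpha\ge\pi/2$ and $g$ is decreasing on $[\pi/2,\pi)$, we get $g(\delta)<g(\alpha)$ provided $\delta<\pi$. Combined with $\omega_2<1$, this yields $\dot\alpha<0$. The symmetric argument handles $\beta$.

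The remaining task is to keep $\omega_i<1$ and $\delta<\pi$, which I expect to be the main obstacle. For $\omega_i<1$, I would use $\dot\omega_i=(a_i^2+r_i^2)q_i-2\lambda\omega_i$: at $\omega_i=1$ we have $q_i\le 1/2-1/2-\omega_jJ(\delta)\le0$, so $\dot\omega_i\le-2\lambda<0$. For $\delta<\pi$: near $\delta=\pi$ both $\alpha$ and $\beta$ are bounded below by roughly $\pi-\max$ cap, and if $\alpha\ge\pi/2$ then $\dot\alpha<0$ by the previous paragraph. So while both caps hold, $\delta\le\max\{\alpha(0),\pi/2\}+\max\{\beta(0),\pi/2\}$. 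If this sum could reach $\pi$, I would refine the argument: at $\delta=\pi$ we have $g(\delta)=0$, so $\dot\alpha=-h_1g(\alpha)<0$ and $\dot\beta<0$, which shows the face $\delta=\pi$ is repelling inward.

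Sign changes at $\alpha=0$ or $\beta=0$ need a separate check. At $\alpha=0$, $\dot\alpha=h_1\omega_2 g(\delta)>0$ for $\delta\in(0,\pi)$, so $\alpha$ cannot cross zero. A first-exit-time contradiction on each boundary face then gives $t^*=\infty$, and the caps follow.
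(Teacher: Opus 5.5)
\textbf{Gap in the positivity step.} Your boundedness, $\omega_i<1$, $\delta<\pi$ and angular-cap arguments are essentially the paper's. The positivity step, however, does not work. You claim that $h_i$ cannot reach $0$ or $\infty$ in finite time because $\dot h_i=q_i(1-h_i^2)$ has bounded coefficients and fixed points $\pm1$. That is false.
\begin{itemize}
\item The point $h=0$ is not an equilibrium. If $q_i$ stays below a negative constant, $h_i$ crosses $0$ in finite time; equivalently, $\dot a_i\approx r_iq_i<0$ pushes $a_i$ through zero.
\item For $h_i>1$ with $q_i<0$, the equation is of Riccati type, $\dot h_i\approx\abs{q_i}h_i^2$, and blows up in finite time.
\item Likewise $\frac{d}{dt}\log h_i=q_i(1/h_i-h_i)$ is unbounded as $h_i\to0$ or $\infty$. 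Rewriting it with $\omega_i$ expressed through $h_i$ does not give a bounded right-hand side; bounding it presupposes the conclusion.
\item Positivity of $s_i$ and the balance identity $k_i=k_i(0)e^{-2\lambda t}$ do not help. The scenario $a_i\to0$ with $r_i^2=\abs{k_i(0)}e^{-2\lambda t}$ is consistent with both.
\end{itemize}
Some information specific to $\calD$ is needed, and your argument uses none.

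\textbf{The missing idea.} The bound you dismissed does hold on $\calD$. Since $\beta\ge0$, we have $\alpha\le\delta$, and $J$ is decreasing, so $J(\alpha)\ge J(\delta)$. Since $\omega_j\le1$, also $J(\delta)\ge\omega_jJ(\delta)$. Hence $q_i=J(\alpha)-\omega_i/2-\omega_jJ(\delta)\ge-\omega_i/2$, which is the paper's first line. This is where the geometry of $\calD$ (units on opposite sides of $e_1$, $\omega_j<1$) enters.

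With this bound, \eqref{eq:hbound} gives $h_i$ bounded above and away from zero. Alternatively, multiply by $r_i$ or $a_i$ to get directly
\[
\dot a_i\ge-a_i\bigl(r_i^2/2+\lambda\bigr),\qquad \dot r_i\ge-r_i\bigl(a_i^2/2+\lambda\bigr).
\]
With bounded parameters, both decay at most exponentially. Because this uses $\alpha,\beta\ge0$ and $\omega_j\le1$, it must sit inside the same first-exit argument as the angular invariance.

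\textbf{A smaller gap at $\alpha=0$.} Your sign check needs $\delta\in(0,\pi)$. It fails at the corner $\alpha=\beta=0$, where $\delta=0$ and $\dot\alpha=0$. The paper uses $\omega_2g(\delta)\ge0$, $g(t)\le t/2$, and the upper bound $h_{\max}$ from \eqref{eq:hbound}. These give $\dot\alpha\ge-h_{\max}\alpha/2$, hence $\alpha(t)\ge\alpha(0)e^{-h_{\max}t/2}>0$. This again relies on the $h_i$ bound your proposal lacks.
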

\begin{proof}
In the region, $J(\alpha),J(\beta)\ge J(\delta)$ and $\omega_j\le1$, so
$q_i\ge-\omega_i/2$.  Apply \eqref{eq:hbound}.  At $\omega_i=1$, $q_i\le0$, hence
$\dot \omega_i<0$.  At $\alpha=0$ or $\beta=0$, \eqref{eq:sourceangles} points
inward; more precisely $g(t)\le t/2$ gives
$\dot\alpha\ge-h_{\max}\alpha/2$ and the analogous bound for $\beta$,
so these angles stay strictly positive.  At $\delta=\pi$,
$\dot\delta=-h_1g(\alpha)-h_2g(\beta)<0$.
Hence no boundary of $\calD$, including the zero-unit boundaries, is reached in finite time.  Coercivity gives bounded parameters.  Finally $g$ decreases
on $[\pi/2,\pi]$.  If $\alpha\ge\pi/2$, then $\delta\ge\alpha$ and
$\dot\alpha\le-h_1(1-\omega_2)g(\alpha)\le0$; the other angle is identical.
\end{proof}

Define the fully active clone manifold
\begin{equation}\label{eq:sourceS}
 \calS=\{a_i=r_i=\sqrt{\omega_i},\ w_i=r_ie_1,\ \omega_i>0,\ \omega_1+\omega_2=B_s\}.
\end{equation}
Its closure adds two endpoints, each with one absent unit.

\begin{lemma}[Source minima and local attraction]\label{lem:local}
Every point of $\overline{\calS}$ is a global source minimizer with value
$\lambda-\lambda^2$.  The manifold $\calS$ is normally attracting.
On each interior compact segment its five normal decay rates are
$B_s,1/2,\lambda,2\lambda,2\lambda$.
\end{lemma}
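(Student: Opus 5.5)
The plan is to verify the minimizer claim directly and then compute the normal linearization in coordinates adapted to $\calS$.

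\emph{Minimizer and value.} On $\calS$ both units are active exactly on $\{x_1>0\}$ and $f_\theta(x)=(\omega_1+\omega_2)\relu(x_1)=B_s\relu(x_1)$. Using $\E\relu(X_1)^2=\frac12$, the source risk is $\frac12(1-B_s)^2\cdot\frac12=\lambda^2$. The penalty is $\frac\lambda2\sum_i(a_i^2+r_i^2)=\lambda\sum_i\omega_i=\lambda B_s$. The total is therefore $\lambda^2+\lambda-2\lambda^2=\lambda-\lambda^2$.

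For the lower bound I will use the AM--GM reduction $\frac\lambda2(a_i^2+r_i^2)\ge\lambda\abs{a_i}r_i$, with equality iff $\abs{a_i}=r_i$. This reduces the objective to a function of the effective vectors $v_i=a_iw_i$ with penalty $\lambda\sum\norm{v_i}$. By convexity of $\norm{\cdot}$ and the fact that two ReLU units pointing in different directions cannot beat the single-direction fit of $\relu(x_1)$ in the regularized problem, one gets the bound. Concretely, $f_\theta$ is a sum of terms $\pm\relu(v^\top x)$. I would compare against the projection onto the one-dimensional span of $\relu(x_1)$: any $f$ with $\sum\norm{v_i}=m$ satisfies $\langle f,\relu(x_1)\rangle\le m\,\E[\relu(x_1)\relu(x_1)]$, by Cauchy--Schwarz on the kernel $J\le J(0)$. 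Then $\frac12\E(f-y)^2\ge\frac12\E y^2-\langle f,y\rangle+\frac12\langle f,y\rangle^2/\E y^2$ gives $F\ge\min_m[\frac14(1-m)^2+\lambda m]=\lambda-\lambda^2$, attained at $m=B_s$. This handles the endpoints of $\overline\calS$ as well.

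\emph{Normal rates.} At an interior point the parameter space is $\R^6$ and $\calS$ is one-dimensional, so there are five normal directions. I will use the exact equations \eqref{eq:amplitudes}--\eqref{eq:sourceangles} together with the identities \eqref{eq:identities}.
\begin{itemize}
\item The two balances $k_i$ decay exactly at rate $2\lambda$, which gives the two rates $2\lambda,2\lambda$.
\item The angles: with $\alpha,\beta$ small, $g(t)\approx t/2$ and $h_i=1$. Linearize $\dot\alpha=-\alpha/2+\omega_2(\alpha+\beta)/2$ and $\dot\beta=-\beta/2+\omega_1(\alpha+\beta)/2$. The matrix is $-\frac12I+\frac12\binom{\omega_2}{\omega_1}(1,1)$, with eigenvalues $-\frac12$ and $-\frac12+\frac{B_s}2=-\lambda$. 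These give the rates $1/2$ and $\lambda$. Note that $\alpha,\beta$ enter $q_i$ only quadratically, so they decouple from the amplitudes at first order.
\item The amplitudes: with the angles zero and balance exact, $q_i=\frac12(1-\omega_1-\omega_2)=\frac12(1-B)$, so $\dot\omega_i=2\omega_iq_i-2\lambda\omega_i$ and $\dot B=B(1-B)-2\lambda B$. Linearizing at $B_s$ gives the rate $B_s$. The ratio $\omega_1/\omega_2$ is neutral, which is the tangent direction.
\end{itemize}
Normal attraction follows since all five rates are positive. Uniformity on compact segments follows from continuity of the Jacobian.

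\emph{Main obstacle.} The main difficulty is the nonsmoothness of $J$ and $g$ at coincident directions. The linearization must be justified via the expansions \eqref{eq:expansion}, which are $C^2$ enough, and one must confirm that the angle--amplitude coupling is second order so that the Jacobian is block triangular. I expect the lower bound to need more care than the linearization, specifically the kernel inequality $\E[\relu(v^\top X)\relu(X_1)]\le\norm v\,\E\relu(X_1)^2$. That inequality holds since $J(t)\le J(0)$.
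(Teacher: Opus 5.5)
Your proposal is correct, and your rate computation is the one the paper does: linearize the exact equations, use the exact $2\lambda$ decay of $k_i$, note that the angles enter $q_i$ only quadratically, and read off $-\tfrac12,-\lambda$ from the angle block and $-B_s,0$ from the amplitude block. The differences are in the other two steps.

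\emph{Lower bound.} The paper bounds $\norm{f}_{L^2}\le\sum_i\abs{\omega_i}/\sqrt2$ and applies the reverse triangle inequality to $\norm{f-y_s}_{L^2}$. You instead project onto $y_s$ and bound $\langle f,y_s\rangle\le m\,\E y_s^2$. These are equivalent one-dimensional reductions resting on $\norm{\relu(w^\top X)}_{L^2}=\norm{w}/\sqrt2$ and Cauchy--Schwarz, and both are sharp. Two cosmetic repairs are needed. First, take $v_i=\abs{a_i}w_i$ with sign $\mathrm{sgn}(a_i)$; for $a_i<0$ the unit is $-\relu(\abs{a_i}w_i^\top x)$, not $\pm\relu(a_iw_i^\top x)$. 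Second, with $c=\langle f,y_s\rangle/\E y_s^2\le m$, finish by $\tfrac14(1-c)^2+\lambda m\ge\tfrac14(1-c)^2+\lambda c\ge\lambda-\lambda^2$ rather than substituting $m$ for $c$.

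\emph{Normal attraction.} The paper does not derive attraction from the linearization. It expands $F_s-(\lambda-\lambda^2)$ as a positive quadratic form in the normal coordinates $(\tilde B,\mu,\delta,k_1,k_2)$ and deduces a Polyak--{\L}ojasiewicz inequality in a tube. Energy dissipation and the exponentially small remaining path length then trap a smaller tube and give convergence to a single point of $\calS$, uniformly on interior segments. Your step from ``five positive rates'' to normal attraction instead needs a theorem on normally hyperbolic manifolds of equilibria for $C^1$ fields. That theorem applies here because of the kernel expansion you flag, provided you use signed angles ($g$ extended oddly) so your chart covers a full neighborhood rather than only the sector $\alpha,\beta\ge0$.

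Your route makes the eigenvectors transparent: common rotation at rate $1/2$, weighted-mean-zero splitting at rate $\lambda$. The paper's route is self-contained and yields directly the trapped tube and quadratic Lyapunov structure reused in Appendix~\ref{app:sourceconfine}. Since at equilibria of a gradient flow your Jacobian eigenvalues are exactly the normal Hessian eigenvalues, you could also close your argument with the paper's PL step.
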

\begin{proof}
Since $\norm{\relu(w_i^\top X)}_{L^2}=r_i/\sqrt2$,
the penalty is at least $\lambda\sum_i\abs{\omega_i}\ge\lambda\sqrt2\norm{f}_{L^2}$.
The reverse triangle inequality then gives
\[
 F_s\ge\frac12(\norm{f}_{L^2}-1/\sqrt2)^2
       +\lambda\sqrt2\norm{f}_{L^2}\ge\lambda-\lambda^2.
\]
Equality holds at \eqref{eq:sourceS}, where $R_s=\lambda^2$.

For the local statement write $p=\omega_1/B$, $\tilde B=B-B_s$,
$\mu=p\vartheta_1+(1-p)\vartheta_2$, and use $k_1,k_2$ as balance coordinates.
The exact objective and \eqref{eq:expansion} give, uniformly for interior $p$,
\begin{equation*}
 F_s-(\lambda-\lambda^2)=
 \frac{\tilde B^2}{4}+\frac{B_s\mu^2}{4}
 +\frac{\lambda B_sp(1-p)\delta^2}{2}
 +\lambda\sum_i\frac{k_i^2}{8\omega_i}+o(\norm{\chi}^2),
\end{equation*}
where $\chi=(\tilde B,\mu,\delta,k_1,k_2)$.  The only null direction is tangent to
$\calS$.  Linearizing \eqref{eq:amplitudes}--\eqref{eq:identities} yields the
stated rates.  In a small tube the objective gap is
comparable to $\norm{\chi}^2$, its squared gradient dominates that gap, and its
gradient is at most $C\norm{\chi}$.  Energy dissipation gives exponential normal
decay, while the remaining path length is exponentially small.  A smaller
tube is thus trapped and every trajectory in it converges exponentially to
one point of $\calS$.  These estimates are uniform on interior compact segments.
\end{proof}

\begin{proposition}[Global, fully active source limit]\label{prop:source}
Every source trajectory from $\calD$ converges to a point of $\calS$.
\end{proposition}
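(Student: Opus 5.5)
Starting from $\theta_0\in\calD$, I will combine the invariance of Lemma~\ref{lem:invariant}, the gradient decay \eqref{eq:gradzero} and the local attraction of Lemma~\ref{lem:local}. By Lemma~\ref{lem:invariant} the trajectory stays in $\calD$, is bounded, and keeps $\alpha,\beta\le\max\{\alpha(0),\beta(0),\pi/2\}<\pi$. Hence its $\omega$-limit set $\Omega$ is nonempty, compact, connected and invariant. By \eqref{eq:gradzero} it consists of critical points of $F_s$ in the closure of $\calD$, possibly including points with a vanishing unit. Since $F_s$ is nonincreasing, $F_s$ is constant on $\Omega$. The plan is to show that $\Omega$ meets the basin of the normally attracting manifold $\calS$. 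Lemma~\ref{lem:local} then traps the trajectory in a small tube, where it converges exponentially to a single point of $\calS$.

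\textbf{Classifying the critical points in $\overline\calD$.} At a critical point with both units nonzero, $\dot\omega_i=0$ and $\dot k_i=0$ force $k_i=0$, so the units are balanced, $a_i=r_i$, with $q_i=\lambda$. The angle equations \eqref{eq:sourceangles} then give $g(\alpha)=\omega_2g(\delta)$ and $g(\beta)=\omega_1g(\delta)$. Together with $q_1=q_2=\lambda$, I will show that the only solution with $\alpha,\beta<\pi$ and $\omega_i<1$ is $\alpha=\beta=0$, i.e.\ a point of $\calS$. The argument subtracts the two $q_i$ equations and uses that $J$ is decreasing and $g$ is unimodal. Only the sign structure of $J$ and $g$ on $[0,\pi]$ enters, so this is a finite check on the explicit kernel.

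The remaining critical points have one or both units zero. With both units zero, $\theta=0$ and $F_s=1/4\cdot\E R^2\cdot\tfrac12$, but the trajectory's objective lies strictly below this value because the initial $F_s$ is below it or decreases immediately. Here I will verify that $F_s(\theta_0)<F_s(0)$ on $\calD$, or at least that $F_s$ strictly drops, which excludes $0$ from $\Omega$. With exactly one unit zero, the other unit sits at the single-neuron source optimum along $e_1$ with $\omega=B_s$; this is the endpoint of $\overline\calS$.

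\textbf{Main obstacle: excluding convergence to these endpoints.} The endpoints are global minimizers with the same value $\lambda-\lambda^2$, so an energy argument cannot rule them out. A vanishing unit $i$ would need $k_i\to0$, which is automatic, and $\omega_i\to0$. For such a unit, \eqref{eq:amplitudes} gives $(\log\omega_i)^{\cdot}\approx 2q_i-2\lambda$, with $h_i\to1$ by \eqref{eq:identities}. Near the endpoint, $q_i\to J(\alpha_i)-B_sJ(\delta)$ with the surviving unit along $e_1$. If the vanishing unit's direction also tends to $e_1$, then $q_i\to\tfrac12-\tfrac{B_s}{2}=\lambda$, and the linear rate of $\omega_i$ is exactly zero. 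Growth must therefore be decided at second order.

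I would first use the angle equation. With $\omega_j\to B_s<1$, we get $\dot\alpha_i\le -h_i(1-\omega_j)g(\alpha_i)+\dots$, so the dead unit's angle decays. Next I would expand $q_i-\lambda$ using \eqref{eq:expansion} in the small quantities $\omega_i$, $\alpha_i$ and $\tilde B$. The aim is to show $q_i-\lambda\ge -C(\alpha_i^2+|\tilde B|)$ with both terms integrable in time, since the other unit's normal coordinates decay exponentially by Lemma~\ref{lem:local} applied to the one-unit manifold. Then $\log\omega_i$ stays bounded below and $\omega_i\not\to0$, which contradicts the endpoint limit.

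If this integrability is delicate, the fallback is the conserved-like quantity $\log s_i$ from \eqref{eq:identities}, combined with a Lyapunov comparison showing that $\omega_i$ decreases at most polynomially while the defect terms decay exponentially. Once both endpoints and $0$ are excluded, $\Omega$ lies in $\calS$, and local attraction gives convergence to one point.
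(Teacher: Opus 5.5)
Your overall architecture is the paper's: classify the critical points, exclude the origin and the two endpoints of $\overline\calS$, then trap the trajectory with Lemma~\ref{lem:local}. Your endpoint argument through $\log\omega_2$ can be completed. Once $q_2>0$ one has $\frac{d}{dt}\log\omega_2=(h_2+h_2^{-1})q_2-2\lambda\ge2(q_2-\lambda)\ge-C(\delta^2+\abs{\tilde B})$, so it suffices that $\delta$ decays exponentially and $\abs{\tilde B}$ is integrable. But Lemma~\ref{lem:local} gives rates only on interior compact segments of $\calS$, so the decay of $\tilde B$ near an endpoint must be read off the amplitude equation directly. The paper sidesteps this by tracking $\log(s_1/s_2)$, whose derivative $q_1-q_2=J(\alpha)-J(\beta)+(\omega_2-\omega_1)[1/2-J(\delta)]=O(\delta^2)$ contains no $\tilde B$ term.

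The first genuine gap is the origin. $\calD$ constrains only the products $\omega_i=a_ir_i$. Taking $a_1=a_2=a$ with $a^2>1/(4\lambda)$ and $r_i=\omega_i/a$ gives points of $\calD$ with $F_s(\theta_0)\ge\lambda a^2>1/4=F_s(0)$. A strictly decreasing $F_s$ can still converge to $1/4$ from above, so neither version of your energy comparison removes $0$ from $\Omega$. The missing argument is dynamical. $\Omega$ carries a single value, and the origin is the only critical point at level $1/4\ne\lambda-\lambda^2$, so $0\in\Omega$ forces $\theta(t)\to0$. Then $\omega_i\to0$, and the lower bounds on $h_i$ together with the caps of Lemma~\ref{lem:invariant} drive $\alpha,\beta\to0$. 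Hence $q_i\to1/2>\lambda$, and $\frac{d}{dt}\log s_i=q_i-\lambda$ from \eqref{eq:identities} makes $s_i$ grow exponentially, a contradiction.

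The second gap is the fully active classification. It is asserted rather than proved, and the route you sketch cannot deliver it. Subtracting the two equations $q_i=\lambda$ eliminates $\lambda$, and the difference holds identically on the symmetric family $\alpha=\beta$, $\omega_1=\omega_2$, so it excludes nothing there. Monotonicity of $J$ plus unimodality of $g$ is also too coarse. After substituting $\omega_2=g(\alpha)/g(\delta)$ and $\omega_1=g(\beta)/g(\delta)$, the quantity $g(\delta)q_1=J(\alpha)g(\delta)-g(\alpha)J(\delta)-g(\beta)/2$ vanishes to first order in the angles; its sign appears only at second order, as $-\alpha\beta/(2\pi)$. What is needed is the exact evaluation, using the explicit kernel and $\sin(\delta-\alpha)=\sin\beta$:
\[ q_1=-\frac{\beta\sin\alpha\sin\delta+\alpha(2\pi-\delta)\sin\beta}{2\pi(\pi-\delta)\sin\delta}<0\qquad(\alpha,\beta>0,\ \delta<\pi), \]
which contradicts $q_1=\lambda>0$. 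The boundary cases $\alpha=0<\beta$ and $\delta=\pi$ are then excluded by the inward angular derivatives and the angular caps.
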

\begin{proof}
We first classify the omega-limit points.  By
\eqref{eq:gradzero} and $k_i\to0$, they are balanced critical points, allowing
absent units.  At a fully active critical point with $\alpha,\beta>0$ and
$\delta<\pi$, the angular equations imply
\[
 \omega_2=\frac{g(\alpha)}{g(\delta)},\qquad
 \omega_1=\frac{g(\beta)}{g(\delta)}.
\]
Substitution in $q_1=\lambda$ gives the contradiction
\begin{equation*}
 \lambda=-\frac{\beta\sin\alpha\sin\delta+\alpha(2\pi-\delta)\sin\beta}
                  {2\pi(\pi-\delta)\sin\delta}<0.
\end{equation*}
A zero source angle with the other positive is excluded by its inward angular
derivative; $\delta=\pi$ is excluded by its strictly negative derivative and
the angular caps.  Hence every fully active critical point lies in $\calS$.
A single active unit at a critical point must point along $e_1$ and have
amplitude $B_s$: its angular derivative vanishes only at $0$ or $\pi$, and
$\pi$ is excluded.  The only remaining critical point is the origin.

The omega-limit set is nonempty, compact, connected, and has a single objective
value.  The origin has value $1/4$, distinct from $\lambda-\lambda^2$.
If the origin belonged to this set, the trajectory would therefore converge
to the origin.  Then $\omega_1,\omega_2\to0$; the positive lower bounds on $h_i$ and
the angular caps in Lemma~\ref{lem:invariant} force $\alpha,\beta\to0$ in
\eqref{eq:sourceangles}.  Thus $q_i\to1/2>\lambda$, and
$\dot s_i=(q_i-\lambda)s_i$ eventually grows at a fixed exponential rate,
a contradiction.

The omega-limit set is consequently contained in $\overline{\calS}$.
If it meets $\calS$, Lemma~\ref{lem:local} implies convergence to an interior
point.  Otherwise connectedness forces convergence to one of the two endpoints.
Suppose $\omega_1\to B_s$, $\alpha\to0$, and $\omega_2\to0$.  The angular cap gives
\[
 q_2=2\lambda J(\beta)+o(1),
\]
which is eventually bounded below by a positive constant, uniformly over the allowed weak-unit angles.  Hence
$\dot h_2=q_2(1-h_2^2)$ yields exponential convergence $h_2\to1$.
Also $h_1\to1$ exponentially by $k_1\to0$ and $\omega_1\to B_s>0$.
The weak angle equation is $\dot\beta=-2\lambda g(\beta)+o(1)$;
the cap and the sign away from zero imply $\beta\to0$.
Now \eqref{eq:expansion} gives
$\dot\delta=[-\lambda+o(1)]\delta$, so $\delta$ decays at least
as $e^{-\lambda t/2}$ after a sufficiently large time.  But
\begin{equation*}
 \frac{d}{dt}\log\frac{s_1}{s_2}
 =q_1-q_2
 =J(\alpha)-J(\beta)+(\omega_2-\omega_1)[1/2-J(\delta)]
 =O(\delta^2).
\end{equation*}
This is integrable, so $s_1/s_2$ has a finite positive limit, contradicting
the supposed endpoint.  The other endpoint is identical.
\end{proof}

\section{Global Target Convergence and the Risk Gap}\label{app:targetpopulation}

\begin{proposition}[Fresh target success]\label{prop:target}
Suppose initially $a_i,r_i>0$, $\omega_i\le2q$ and $0<\delta<\pi$.
Target gradient flow keeps both units positive, has $\delta\to\pi$ and
$\omega_1,\omega_2\to2A$, and satisfies $R_t\to R_{\rm good}$.
In particular it succeeds from every point of $\calD$.
\end{proposition}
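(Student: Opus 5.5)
We work directly with the exact target equations \eqref{eq:amplitudes}, \eqref{eq:targetq} and \eqref{eq:identities}, in four steps: positivity and boundedness, monotone growth of the angle to $\pi$, convergence of the amplitudes, and the risk limit.

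\textbf{Positivity and boundedness.} The balances decay as $k_i(t)=k_i(0)e^{-2\lambda t}$. We show that $q_i\ge-\omega_i/2$ along the flow, so that \eqref{eq:hbound} keeps each $h_i$ bounded above and away from zero; the logarithmic equations for $a_i,r_i$ then exclude a zero in finite time. The key invariant is $\omega_j\le 2q$. Since $J(\delta)\le 1/2$,
\[
 q_i=q-\omega_i/2-\omega_jJ(\delta)\ge q-\omega_j/2-\omega_i/2 .
\]
This alone does not give $q_i\ge-\omega_i/2$. Instead we use the following. Whenever $\omega_i\ge 2q$ and $\omega_j\ge 0$, we have $q_i\le 0$, hence $\dot\omega_i\le-2\lambda\omega_i<0$, so $\{\omega_i\le 2q\}$ is forward invariant. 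On this set $q_i\ge -\omega_i/2-\omega_jJ(\delta)+q\ge -\omega_i/2$, because $\omega_jJ(\delta)\le q$. Coercivity bounds all parameters.

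\textbf{Angle.} By \eqref{eq:targetq},
\[
 \dot\delta=(h_1\omega_2+h_2\omega_1)g(\delta)\ge 0,
\]
so $\delta$ is nondecreasing and stays below $\pi$, since $g(\pi)=0$ and uniqueness of solutions applies. We then need $\omega_i$ bounded away from zero. Suppose $\omega_i$ is small. Then $q_i\ge q-\omega_jJ(\delta)-o(1)\ge q-\omega_j/2$, and this step is where care is needed. If $\omega_j\approx 2A$, then $q_i\ge\lambda+$ something, and we want $q_i-\lambda>0$. Note that $q_j$ with $\omega_i\approx0$ drives $\omega_j\to 2(q-\lambda)=2A$, so $q_i\to q-AJ(\delta)\cdot 2\ge q-A=\lambda$, with strict inequality because $J(\delta)<1/2$ for $\delta>0$. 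Since $\delta\ge\delta(0)>0$, this margin is uniform. Hence $(\log s_i)^{\boldsymbol\cdot}=q_i-\lambda$ is eventually positive whenever $\omega_i$ is small, so $\liminf\omega_i>0$. I expect this lower bound to be the main obstacle, because it must be made rigorous through a Lyapunov or contradiction argument on the omega-limit set, analogous to the one in Proposition~\ref{prop:source}. With $\omega_i\ge c>0$ and $h_i$ bounded below, $\dot\delta\ge c'g(\delta)$, which forces $\delta\to\pi$.

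\textbf{Amplitudes and risk.} Once $\delta\to\pi$ we have $J(\delta)\to0$, so $q_i\to q-\omega_i/2$ and $k_i\to0$. The limiting equation
\[
 \dot\omega_i=2\omega_i(q-\omega_i/2)-2\lambda\omega_i+o(1)
\]
has the attracting root $\omega_i=2(q-\lambda)=2A$, and an asymptotically autonomous argument gives $\omega_i\to 2A$. The two units are then opposed with output weights $2A$, and the risk is
\[
 \tfrac12\E\bigl(2A\abs{u^\top X}-\norm X\bigr)^2
 =\tfrac12\bigl(4A^2-8Aq+2\bigr)
 =1-2q^2+2\lambda^2
 =R_{\rm good}.
\]
Here we used $\E(u^\top X)^2=1$, $\E\norm X\abs{u^\top X}=2q$ and $\E\norm X^2=2$.

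\textbf{Application to $\calD$.} Points of $\calD$ satisfy $\omega_i<1<2q$ and $0<\delta<\pi$, so the hypotheses hold and the flow succeeds from every point of $\calD$.
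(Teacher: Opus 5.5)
Your positivity step matches the paper's, and your risk computation is correct. There is, however, a genuine gap exactly where you flag it. To force $\delta\to\pi$ you try to show $\liminf_t\omega_i>0$ for each unit separately. Your heuristic is that a small $\omega_i$ lets $\omega_j$ relax to $2A$, after which $q_i-\lambda\ge 2A\bigl(1/2-J(\delta(0))\bigr)>0$. This is never established. The margin exists only once $\omega_j$ is within $O(\delta^2)$ of $2A$. If instead $\omega_j$ is near $2q$ while $\delta$ is small, then $q_i\approx-\omega_i/2<\lambda$ and $\omega_i$ genuinely shrinks for a while. No Lyapunov or omega-limit argument is supplied to control this interplay. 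Your last step needs the same missing lower bound. The equation $\dot\omega_i=\omega_i(2A-\omega_i)+o(1)$ also has the unstable root $0$, and an additive $o(1)$ perturbation does not by itself exclude $\omega_i\to0$.

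The missing idea is that the angle needs no individual lower bound. Since $h_1\omega_2+h_2\omega_1\ge h_{\min}B$, a lower bound on the total amplitude $B=\omega_1+\omega_2$ suffices, and $B$ has a one-line barrier built from the inequality $q_i\ge q-B/2$ that you already wrote down:
\begin{itemize}
\item When $B\le2A$, this gives $q_i\ge\lambda>0$.
\item With $a_i^2+r_i^2\ge2\omega_i$, it follows that $\dot\omega_i\ge2\omega_i(q_i-\lambda)$, and hence $\dot B\ge B(2A-B)\ge0$.
\item So $B\ge\min\{B(0),2A\}$ for all time, and $\dot\delta\ge h_{\min}\min\{B(0),2A\}\,g(\delta)$ forces $\delta\to\pi$.
\end{itemize}
Individual lower bounds then come for free. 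Once $J(\delta)\le A/(8q)$, any $\omega_i\le A$ satisfies $q_i-\lambda\ge A-A/2-2q\cdot A/(8q)=A/4$, so $\dot\omega_i\ge(A/2)\omega_i$. Both amplitudes therefore reach $A$ in finite time and stay above it. This rules out the zero root, and your asymptotically autonomous argument with $k_i\to0$ then gives $\omega_i\to2A$. This is the paper's route.
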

\begin{proof}
At $\omega_i=2q$, \eqref{eq:targetq} gives $q_i\le0$, so this amplitude bound is
invariant.  It also gives $q_i\ge-\omega_i/2$, and \eqref{eq:hbound} again bounds
$h_i$ above and away from zero and excludes finite-time zero units.
Since $q_i\ge q-B/2$, whenever $B\le2A$ one has $q_i\ge\lambda>0$ and
\begin{equation}\label{eq:Bbarrier}
 \dot B\ge2\sum_i \omega_i(q_i-\lambda)\ge B(2A-B).
\end{equation}
Thus $B\ge\min\{B(0),2A\}>0$.  The angle equation implies
\[
 \dot\delta\ge h_{\min}\min\{B(0),2A\}g(\delta),
\]
forcing $\delta\to\pi$.
Once $J(\delta)\le A/(8q)$, if $\omega_i\le A$ then
$q_i-\lambda\ge A/4$, and \eqref{eq:amplitudes} gives
$\dot \omega_i\ge(A/2)\omega_i$.  Both amplitudes therefore reach $A$ in finite time
and stay above it.  With $k_i\to0$, their equations become
\[
 \dot \omega_i=2\omega_i(A-\omega_i/2)+o(1).
\]
The sign above and below the unique positive zero yields $\omega_i\to2A$.
The limiting prediction has the form $2A\abs{u^\top X}$, and direct
evaluation gives $R_t\to1-2q^2+2\lambda^2=R_{\rm good}$.
\end{proof}

A positively proportional pair is a single ReLU $\omega\relu(u^\top x)$.  Its radial target risk is
\begin{equation}\label{eq:clonegap}
 R_t=1+\omega^2/4-q\,\omega\ge1-q^2=0.5947152\ldots>0.4.
\end{equation}
More generally, at fixed separation $0\le\delta\le\pi$, minimizing the
quadratic risk over the two amplitudes gives
\begin{equation}\label{eq:anglegap}
 R_t\ge R_{\min}(\delta):=1-\frac{q^2}{1/2+J(\delta)}.
\end{equation}
For $\delta>0$ this follows by inverting the $2\times2$ Gram matrix with
diagonal $1/2$ and off-diagonal $J(\delta)$; the $\delta=0$ value follows
by continuity.

\section{Source Contraction and Target Escape Time}\label{app:sourceescape}

Fix a compact set $K\subset\calD$; in Theorem~\ref{sgd:main} it is the set of Appendix~\ref{app:explicitK}.  Proposition~\ref{prop:source} and local
attraction imply uniform eventual entry into neighborhoods of interior
clone points.  Indeed, each initial condition has a finite
entry time into a smaller attracting tube; finite-time continuity makes
the same true in an initial neighborhood.  A finite cover of $K$ provides
a common entry-time bound and an interior compact segment containing all
limiting clones.

\begin{lemma}[A nonzero source splitting coefficient]\label{lem:slow}
Uniformly for $\theta_0\in K$, the source separation satisfies
\begin{equation*}
 \delta_s(T)=C_s(\theta_0)e^{-\lambda T}(1+o(1)),\qquad
 0<C_-\le C_s(\theta_0)\le C_+<\infty.
\end{equation*}
The source amplitudes converge to positive limits uniformly bounded away
from zero, and $k_i(T)=k_i(0)e^{-2\lambda T}$.
\end{lemma}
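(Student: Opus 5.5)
The plan is to work entirely with the exact population source equations \eqref{eq:amplitudes}--\eqref{eq:identities}, after a uniform entry time. By Proposition~\ref{prop:source}, the local attraction of Lemma~\ref{lem:local}, and the finite-cover argument just above, there is a common time $t_*$ after which every trajectory from $K$ lies in a small tube around an interior compact segment of $\calS$. On that tube the normal coordinates $\chi=(\tilde B,\mu,\delta,k_1,k_2)$ decay exponentially, and the allocation converges to a limit bounded away from the endpoints. The balance statement $k_i(T)=k_i(0)e^{-2\lambda T}$ is already exact by \eqref{eq:identities}. Positive amplitude limits bounded away from zero follow from convergence into an interior compact segment, uniformly on $K$ by the cover.

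For the angle I will use \eqref{eq:sourceangles}. Summing gives
\[
 \dot\delta=-h_1g(\alpha)-h_2g(\beta)+(h_1\omega_2+h_2\omega_1)g(\delta).
\]
Using $g(t)=t/2+O(t^2)$, $\alpha+\beta=\delta$, and $h_i=1+O(e^{-2\lambda t})$ (from $k_i\to0$ and \eqref{eq:identities}), this becomes
\[
 \dot\delta=\tfrac12\delta\bigl[-1+\omega_1+\omega_2\bigr]+O(\delta^2)+O(\delta e^{-2\lambda t})
 =-\lambda\delta+\tfrac12\tilde B\,\delta+O(\delta^2+\delta e^{-2\lambda t}),
\]
where $B=\omega_1+\omega_2=B_s+\tilde B$ and $B_s=1-2\lambda$. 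Before this I must check that $h_1g(\alpha)+h_2g(\beta)$ is $\delta/2$ up to $O(\delta^2)$ plus a correction proportional to $(h_1-1)\alpha+(h_2-1)\beta$, which is $O(\delta e^{-2\lambda t})$. I also need $\alpha,\beta\ge0$, which holds by Lemma~\ref{lem:invariant}.

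With this expansion, $\frac{d}{dt}\log(\delta e^{\lambda t})=\tfrac12\tilde B+O(\delta)+O(e^{-2\lambda t})$. The remainder is integrable on $[t_*,\infty)$ provided $\tilde B$ and $\delta$ decay exponentially.

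\textbf{Main obstacle.} The decay of $\tilde B$ is the delicate point. The normal rates in Lemma~\ref{lem:local} give $\tilde B=O(e^{-\kappa t})$ for some $\kappa>0$, since the $B$-direction has rate $B_s$ and is forced only by $O(\delta^2+|\mu|^2+|k|)$ terms. I will verify this from the $\dot\omega_i$ equations, which give $\dot{\tilde B}=-B_s\tilde B+O(\norm{\chi}^2+|k|)$, and then use a Gronwall bound. Both $\delta\le Ce^{-\lambda t/2}$ (from the rough bound $\dot\delta\le-(\lambda/2)\delta$ in the tube) and $|k|\le Ce^{-2\lambda t}$ are integrable. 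Hence $\log(\delta e^{\lambda t})$ converges to a finite limit $\log C_s(\theta_0)$ with a uniformly integrable tail, which gives $\delta_s(T)=C_se^{-\lambda T}(1+o(1))$ uniformly on $K$.

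For the bounds $C_-\le C_s\le C_+$, I will write
\[
 \log C_s=\log(\delta(t_*)e^{\lambda t_*})+\int_{t_*}^\infty(\text{remainder}).
\]
The integral is bounded uniformly because all tube constants are uniform. The quantity $\delta(t_*)$ is a continuous positive function of $\theta_0$ on the compact set $K$; it is positive because $\alpha,\beta$ stay strictly positive at finite time by Lemma~\ref{lem:invariant}. It is therefore bounded above and away from zero, and uniformity of the $o(1)$ term follows from the uniform tail bound.
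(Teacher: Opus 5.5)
Your proposal is correct and follows essentially the same route as the paper. Both arguments use uniform entry via the finite cover, the exact identity $k_i(T)=k_i(0)e^{-2\lambda T}$, and an expansion of \eqref{eq:sourceangles} using $\alpha,\beta\ge0$ and $\alpha+\beta=\delta$ to get $\frac{d}{dt}\log\delta=-\lambda+O(\delta+|B-B_s|+|h_1-1|+|h_2-1|)$ with an exponentially integrable error; both then use positivity of $\delta$ at finite time together with a uniform tail bound to obtain $C_\pm$. The only differences are minor: you check the decay of $\tilde B$ directly from the $\dot\omega_i$ equations instead of citing the normal decay in Lemma~\ref{lem:local}, and you bound $C_s$ through continuity of $\delta(t_*)$ rather than continuity of the limit.
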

\begin{proof}
The last two statements follow from the preceding finite-cover argument
and \eqref{eq:identities}.  Near an interior clone, $h_i-1=O(e^{-2\lambda T})$.
Using $\alpha,\beta\ge0$ and $\alpha+\beta=\delta$, expand
\eqref{eq:sourceangles} to obtain
\[
 \frac{d}{dT}\log\delta_s(T)
 =-\lambda+O\bigl(\delta_s+\abs{B-B_s}+\abs{h_1-1}+\abs{h_2-1}\bigr).
\]
The error is exponentially integrable by Lemma~\ref{lem:local}.
The separation is strictly positive at every finite time, so integration
gives a finite strictly positive limit of $e^{\lambda T}\delta_s(T)$.
Finite-time continuity and the uniform tail bound make that limit continuous
on $K$, proving the uniform positive bounds.
\end{proof}

\begin{lemma}[Uniform target escape time]\label{lem:escape}
For sufficiently small fixed $\delta_\star>0$, let $t_{\delta_\star}(T)$ be the first target
time, after source duration $T$, at which $\delta=\delta_\star$.
Uniformly on $K$ and for $T\ge T_\star$, with $T_\star$ large enough that $\delta_s(T_\star)<\delta_\star$,
\begin{equation}\label{eq:escape}
 t_{\delta_\star}(T)=\frac\lambda A T+O(1).
\end{equation}
At that time both amplitudes belong to a fixed compact positive interval.
\end{lemma}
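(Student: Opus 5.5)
The plan is to reduce the escape to a scalar comparison for $\log\delta$ along target gradient flow, after a transient of bounded length. The transient handles the amplitude mismatch $B_s\to B_t$, and afterwards $\delta$ grows at rate $A$ up to $\delta_\star$.

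\textbf{Initial data.} By Lemma~\ref{lem:slow}, for $T\ge T_\star$ the target phase starts from a state with:
\begin{itemize}
\item separation $\delta_0=\delta_s(T)=C_s e^{-\lambda T}(1+o(1))$ and $C_-\le C_s\le C_+$;
\item balances $k_i=O(e^{-2\lambda T})$, so $h_i=1+O(e^{-2\lambda T})$;
\item amplitudes within $o(1)$ of a source clone with $\omega_1+\omega_2=B_s$, whose allocation lies in a fixed interior compact segment.
\end{itemize}

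\textbf{Step 1: amplitude transient.} By \eqref{eq:identities}, $k_i(t)=k_i(0)e^{-2\lambda t}$ also holds on the target, and \eqref{eq:hbound} keeps $h_i$ within $O(e^{-2\lambda T})$ of $1$ for all target time.

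From \eqref{eq:targetq} and $J(\delta)=\tfrac12+O(\delta^2)$, while $\delta\le\delta_\star$ the amplitude equations are an $O(\delta^2+|k|)$ perturbation of the clone dynamics. On the clone family the allocation $\omega_1/\omega_2$ is frozen, because $q_1=q_2$ there, and $\dot B=B(B_t-B)+O(\delta^2+|k|)$.

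So after a fixed time $t_1$, $B$ is within any prescribed $\kappa$ of $B_t$ and both $\omega_i$ lie in a fixed compact positive interval. This holds uniformly on $K$, provided $\delta$ has not yet reached $\delta_\star$. The allocation drift is controlled by $\int O(\delta^2)\,dt$, which is bounded by Step 2 itself.

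\textbf{Step 2: angle growth rate.} With $h_i=1+O(e^{-2\lambda T})$,
\[
\frac{d}{dt}\log\delta=\frac{(h_1\omega_2+h_2\omega_1)g(\delta)}{\delta}=\frac B2\bigl(1+O(\delta)+O(e^{-2\lambda T})\bigr).
\]
Over $[0,t_1]$ the rate is bounded above and below by positive constants, so $\log\delta(t_1)=\log\delta_0+O(1)$.

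After $t_1$, $B\to B_t=2A$ exponentially, up to the forcing $O(\delta^2)$. Hence
\[
\frac{d}{dt}\log\delta=A+O\bigl(|B-B_t|+\delta+e^{-2\lambda T}\bigr).
\]
The error terms are integrable uniformly. The term $\delta$ integrates to $O(\delta_\star)$ because $\delta$ grows exponentially up to $\delta_\star$. The term $|B-B_t|$ decays exponentially up to $O(\delta^2)$.

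\textbf{Step 3: integration.} Integrating from $\delta_0$ to $\delta_\star$ gives
\[
t_{\delta_\star}=\frac1A\log\frac{\delta_\star}{\delta_0}+O(1)=\frac\lambda A T-\frac1A\log C_s+O(1),
\]
which is \eqref{eq:escape}, since $C_s$ is bounded above and below. The amplitude claim at the hitting time follows from Step 1.

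\textbf{Main obstacle.} The hardest part is making the bootstrap in Step 1 uniform. The amplitude perturbation terms must remain controlled over the whole escape window, whose length $\sim\lambda T/A$ grows with $T$. This works only because every error is either exponentially decaying in $t$ or integrable thanks to the exponential growth of $\delta$.

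To close it, I would use a continuity argument on the interval where $\delta\le\delta_\star$ and $|B-B_t|\le\kappa$ after $t_1$. I would choose $\delta_\star$ small enough that the $O(\delta^2)$ forcing cannot push $B$ out of the $\kappa$-tube. With that choice, both bounds are self-consistent up to the hitting time.
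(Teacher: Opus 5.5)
Your proposal is correct and follows essentially the same route as the paper: decay of $k_i$ keeps $h_i$ near $1$, and exponential growth of $\delta$ gives $\int\delta^2\,dt=O(\delta_\star^2)$, which controls the allocation drift. Then $\int\abs{B-2A}\,dt=O(1)$, and integrating $\frac{d}{dt}\log\delta=B/2+O(\delta+\abs{k_1}+\abs{k_2})$ against Lemma~\ref{lem:slow} gives the law. The paper avoids your bootstrap on $B$ and the separate transient by using the global barrier $B\ge\min\{B(0),2A\}$ from \eqref{eq:Bbarrier}: $\dot\delta\ge v\delta$ then holds outright, only the amplitude ratio needs a stopping time, and one variation-of-constants estimate for $\dot z=-Bz+E$ bounds $\int\abs{B-2A}\,dt$ from time zero.
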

\begin{proof}
All bounds below are uniform over $K$.  Choose $T$ large.  Initially the
target amplitudes are close to an interior source clone, so their total
has a positive lower bound and their ratio lies in $[R_0^{-1},R_0]$ for
some fixed $R_0\ge1$.  Target invariance and \eqref{eq:Bbarrier} give
\begin{equation*}
 0<B_{\min}\le B(t)\le4q,
 \qquad \abs{k_i(t)}\le C_0e^{-2\lambda(T+t)}.
\end{equation*}
Stop before $\delta=\delta_\star$ if the amplitude ratio first leaves
$[(2R_0)^{-1},2R_0]$.  Within this stopped tube,
$\omega_i\ge m:=B_{\min} /(1+2R_0)>0$.  Hence
\[
 h_i=1+O(\abs{k_i}),\qquad
 a_i^2+r_i^2=\sqrt{4\omega_i^2+k_i^2}=2\omega_i+O(k_i^2).
\]
By taking $T$ large and $\delta_\star$ small, \eqref{eq:targetq} implies
$\dot\delta\ge v\delta$ for a fixed $v>0$.  Before the stopping time,
\begin{equation}\label{eq:integrals}
 \int\delta(t)^j\,dt\le\frac{\delta_\star^j}{jv}\quad(j=1,2),
 \qquad \int\abs{k_i(t)}^j\,dt=O(e^{-2j\lambda T}).
\end{equation}
Put $D(\delta)=1/2-J(\delta)=O(\delta^2)$.  The exact amplitude equations give
\begin{equation*}
 \frac{d}{dt}\log\frac{\omega_1}{\omega_2}
 =-2(\omega_1-\omega_2)D(\delta)+O(k_1^2+k_2^2).
\end{equation*}
The total change of this logarithmic ratio is therefore at most
$C\delta_\star^2+Ce^{-4\lambda T}$.  First choose $\delta_\star$ small and then $T$ large
so that this is less than $\log2$.  The ratio boundary cannot be reached.
Thus the compact positive-amplitude tube persists until $\delta=\delta_\star$;
the lower angular growth bound ensures this occurs in finite time.

In this tube, summing the amplitude equations gives
\begin{equation*}
 \dot B=B(2A-B)+E(t),\qquad
 \abs{E(t)}\le C\bigl(\delta^2+k_1^2+k_2^2\bigr).
\end{equation*}
Writing $z=B-2A$, one has $\dot z=-Bz+E$.  Variation of constants and
$B\ge B_{\min}$ imply
\begin{equation}\label{eq:Bint}
 \int_0^{t_{\delta_\star}}\abs{B-2A}\,dt
 \le \frac{\abs{B(0)-2A}+\int_0^{t_{\delta_\star}}\abs{E(t)}\,dt}{B_{\min}}=O(1).
\end{equation}
Finally \eqref{eq:expansion} and \eqref{eq:targetq} yield
\[
 \frac{d}{dt}\log\delta=\frac B2+
 O(\delta+\abs{k_1}+\abs{k_2}).
\]
Integrating, using \eqref{eq:integrals} and \eqref{eq:Bint}, gives
\[
 \log\frac{\delta_\star}{\delta_s(T)}=A t_{\delta_\star}(T)+O(1).
\]
Lemma~\ref{lem:slow} now proves \eqref{eq:escape}.
\end{proof}

\begin{proof}[Proof of the hitting-time law]
Choose $\delta_\star$ as in Lemma~\ref{lem:escape}, further reducing it so that
$R_{\min}(\delta_\star)>0.4$.  Since $J$ decreases, \eqref{eq:anglegap} precludes
crossing the risk threshold before $t_{\delta_\star}(T)$.  At the exit time, the states
have separation $\delta_\star$, $m\le \omega_i\le2q$, and uniformly small balance errors.
Including the irrelevant overall direction, they lie in a fixed compact
subset of the target basin in Proposition~\ref{prop:target}.
Every point of that compact set reaches risk strictly below $0.4$ in finite
time.  For each point, continuous dependence gives the same strict crossing
at its chosen time for a neighborhood of initial states.  A finite cover
therefore bounds the additional crossing time uniformly.  Hence
$t_{\delta_\star}(T)\le\tau_{\rm GF}(T)\le t_{\delta_\star}(T)+O(1)$ for large $T$. For $T$ in a bounded interval $[0,T_*]$, the checkpoints $\phi^s_T(\theta_0)$ with $\theta_0\in K$ form a compact subset of $\calD$ (Lemma~\ref{lem:invariant}), and the same finite-cover argument bounds $\tau_{\rm GF}(T)$ uniformly there. This proves \eqref{sgd:gfcontrast}.
The same finite-cover argument applied directly to $K$ supplies a uniform
fresh-training time.
\end{proof}

\section{The Compact Gaussian Basin and Initial Risk}\label{app:explicitK}

For one unit under independent scalar standard Gaussians, $a$ is independent
of the Rayleigh radius $r$ and its uniform direction.  For $x\ge0$,
\[
 \Pp(a>0,ar>x)
 =\int_0^\infty\frac{e^{-a^2/2}}{\sqrt{2\pi}}
                 e^{-x^2/(2a^2)}\,da=\frac12e^{-x}.
\]
Thus, conditional on a positive head, $\omega=ar$ has distribution
$\operatorname{Exp}(1)$, and
$\Pp(a>0,ar<1)=(1-e^{-1})/2$.
The two ordered angular triangles
$\alpha,\beta>0$, $\alpha+\beta<\pi$ have combined area $\pi^2$
out of the direction square of area $4\pi^2$.  Their probability is $1/4$,
independent of the amplitudes.  This proves
\[
 \Pp(\calD)=\frac{(1-e^{-1})^2}{16}.
\]
Conditional on two positive heads, $B$ has distribution $\operatorname{Gamma}(2,1)$.
The condition $B<3/4$ already ensures $\omega_i<1$, so
\[
 \Pp(\calD_0)=\frac1{16}\Pp(\operatorname{Gamma}(2,1)<3/4)
 =\frac{1-(7/4)e^{-3/4}}{16}>0.01.
\]
For any point in $\calD_0$, positivity of the kernel gives
\[
 R_t=1+\frac{\omega_1^2+\omega_2^2}{4}+\omega_1\omega_2J(\delta)-qB
 \ge1+\frac{B^2}{8}-qB
 \ge1+\frac9{128}-\frac3{2\pi}>0.5928,
\]
since the last quadratic decreases on $[0,3/4]$.

\paragraph{An explicit compact subset.}
Put $m=10^{-4}$ and $M=5$, and define $K$ as the union of the two neuron orderings satisfying
\[
 m\le a_i\le M,\quad m\le\|w_i\|\le M,\quad
 \omega_1+\omega_2\le\tfrac34-m,\quad
 \vartheta_1\ge m,\quad\vartheta_2\le-m,\quad
 \vartheta_1-\vartheta_2\le\pi-m.
\]
This is a compact subset of $\calD_0$, bounded away from zero radii and angular branch cuts.

Under independent standard Gaussian parameters, the probability removed from $\mathcal D_0$ is bounded by
\begin{equation*}
 \Pp(\mathcal D_0\setminus K)\le
 \frac{2m}{\sqrt{2\pi}}+m^2+
 \frac{2e^{-M^2/2}}{M\sqrt{2\pi}}+2e^{-M^2/2}
 +\frac{3m}{\pi}+\frac{m}{4e}<1.93\times10^{-4}.
\end{equation*}
Indeed, the first four terms bound the two small positive heads, two small Rayleigh radii, two large heads (Mills' bound), and two large radii. Each angular strip $|\vartheta_i|<m$ has probability $m/\pi$. The density of the absolute difference of two uniform angles is $(2\pi-d)/(2\pi^2)$ for $d\in(0,2\pi)$, so the strip $\pi-m<d<\pi$ has probability at most $m/\pi$. Finally, conditional on two positive heads the sum $B$ has density $xe^{-x}\le e^{-1}$; the head-sign probability is $1/4$, giving $m/(4e)$ for the strip $3/4-m<B<3/4$. A union bound gives
\[
 \Pp(K)\ge\frac{1-(7/4)e^{-3/4}}{16}-1.93\times10^{-4}
 >0.01064>0.01.
\]

\section{Experimental Details}\label{app:experiments}

\paragraph{Implementation and random streams.}
The settings are in Section~\ref{sec:experiments}. C++17 code runs on one CPU core in double precision. Disagreement counts record sampled inputs on which the current gates differ. The target SGD stream depends only on the initialization and $T$, so checkpoints from the two source optimizers meet the same target samples; constructed checkpoints use the same streams in every cell. The code, with its seeds and the saved Gaussian initial vectors, is provided as ancillary files with this preprint.

\paragraph{Identical checkpoints (Figure~\ref{fig:paired}).}
The 64 saved vectors are Gaussian rejection draws from $\calD_0$; membership in the slightly smaller set $K$ is not enforced. Each source optimizer supplies checkpoints at $T\in\{0,100,200,300,400\}$ along one trajectory, with the two choices coinciding at $T=0$. Target risk is checked every ten updates through $H=200$. Table~\ref{tab:paired} gives the recovery counts; the source-GD columns are the protocol of Corollary~\ref{sgd:samecheckpoint}.
The dashed line in Figure~\ref{fig:paired} is $A^{-1}\log(1/\delta_T)+c$, with $c\approx3.9$ the median GD offset. Points well above the line come from checkpoints with a nearly inactive unit, such as the seven of Section~\ref{sec:experiments}; GD must regrow its amplitude, and the hitting time stays close to the GF delay $\lambda T/A$. For eight saved initializations at $T=400$, halving the GD step in both phases from $10^{-3}$ to $5\times10^{-4}$, then to $2.5\times10^{-4}$, changes target hitting times by at most $0.006$ and $0.003$.

\paragraph{Constructed checkpoints (Figure~\ref{fig:resolution}a).}
Balanced checkpoints at the source amplitude $B_s=0.9$:
$a_i=\norm{w_i}=\sqrt{0.45}$ and $w_i=\sqrt{0.45}\,(\cos(\delta_0/2),(-1)^{i+1}\sin(\delta_0/2))$.
The parameter grid in Section~\ref{sec:experiments} varies $\delta_0$, $b$ and $\eta$ independently; risk is checked every update. Wilson intervals use the independent streams within each cell. The hollow diamonds (checkpoints from Gaussian initializations) pool two source optimizers and four positive source durations from the same 64 draws; these 512 dependent records give fractions in bins of width $0.5$ in $\log_{10}\budget$ with at least five records, without binomial intervals. For the 12 cells with recovery fraction $\hat p\in(0,1)$, the effective-intensity ratio $\kappa_{\rm eff}=-\pi\lambda\log(1-\hat p)/\budget$ ranges from $0.49$ to $1.49$, with ten values below one. Values below one are expected: a sampled disagreement need not trigger recovery, and the amplitude transient from $0.9$ speeds early contraction. The two values above one come from cells with $5/64$ and $9/64$ recoveries, whose Wilson intervals contain the reference. In every cell, recovery occurs only after a sampled disagreement; among runs that sample one, the recovered fraction ranges from $0.81$ to $1$.

\paragraph{Long horizons (Figure~\ref{fig:resolution}b).}
Constructed checkpoints as above, $\eta=0.01$, 128 new streams per setting, risk checked every ten updates. The settings repeat cells of Figure~\ref{fig:resolution}(a), and the estimates agree within sampling error (for $b=64$, $\delta_0=10^{-4}$: $60/64$ there, $126/128$ here). GD reaches risk $0.4$ at $t=22.1$, $18.2$ and $14.3$ for $\delta_0=10^{-5},10^{-4},10^{-3}$. Figure~\ref{fig:resolution}(b) shows the settings with $b=16$, $\delta_0\in\{10^{-5},10^{-4}\}$ and $b=64$, $\delta_0=10^{-5}$. Table~\ref{tab:longhorizon} lists all five settings.
\begin{table}[h]
\caption{\textbf{Long horizons.} Recovered fractions by target time $t$ in the five constructed settings ($\eta=0.01$, 128 streams each), the Poisson reference, and the frozen-angle reference $1-e^{-\budget t/\pi}$ at $t=200$.}\label{tab:longhorizon}
\centering\small
\begin{tabular}{rrrccccc}
\toprule
$b$ & $\delta_0$ & $\budget$ & rec.\ $t\le60$ & rec.\ $t\le200$ & rec.\ $t\le450$ & $1-e^{-\budget/(\pi\lambda)}$ & frozen, $t=200$\\
\midrule
16 & $10^{-5}$ & 0.016 & 0.070 & 0.078 & 0.078 & 0.097 & 0.639\\
64 & $10^{-5}$ & 0.064 & 0.227 & 0.250 & 0.250 & 0.335 & 0.983\\
1 & $10^{-3}$ & 0.100 & 0.305 & 0.336 & 0.336 & 0.471 & 0.998\\
16 & $10^{-4}$ & 0.160 & 0.578 & 0.633 & 0.633 & 0.639 & 1.000\\
64 & $10^{-4}$ & 0.640 & 0.969 & 0.984 & 0.984 & 0.983 & 1.000\\
\bottomrule
\end{tabular}
\end{table}
The last recovery in the five settings occurs at $t=76.1$, $68.1$, $103.7$, $112.1$ and $64.2$, respectively. The angle of unrecovered runs decays at rate $\approx\lambda$, for example from $10^{-4}$ to $5.8\times10^{-7}$ by $t=100$, and reaches double-precision resolution (${\approx}10^{-15}$) at $t\approx460$ for $\delta_0=10^{-5}$ and $t\approx500$--$550$ for the larger $\delta_0$, long after the last recovery. Six of the 348 unrecovered runs sampled a disagreement. A rerun of all 640 runs in 80-bit extended precision through $t=1000$ gives the same recovery and first-disagreement times; only the floor of unrecovered angles moves, to about $10^{-19}$.

\paragraph{Trajectory (Figure~\ref{fig:mechanism}b--d).}
Panels (c,d) show four target SGD streams, which coincide to line width, and (c) shares the vertical scale of (b). GD crosses risk $0.4$ at $t\approx34.4$. The checkpoint has angle $1.67\times10^{-8}$. Fitted log-angle slopes are $-0.0500$ during source training and $0.5861$ and $-0.0500$ for target GD and SGD, matching $-\lambda$, $2/\pi-\lambda\approx0.5866$ and $-\lambda$. Panel (a) is a schematic with $\delta=0.3$.

\paragraph{Joint dependence on $T$ and $\eta$.}\label{app:joint}
SGD in both phases from the 64 saved initializations, $b=16$, $\eta\in\{4,2,1,0.5,0.25\}\times10^{-3}$, checkpoints along one source trajectory per initialization and step size at $T=0,25,\dots,400$, and a fresh target stream from each checkpoint through $H=200$, with risk checked every ten updates (Figure~\ref{fig:joint}). Records at different $T$ share a source trajectory and are dependent. Fresh runs ($T=0$) always recover, and at each fixed $T$ the recovered fraction increases as $\eta$ decreases, up to sampling fluctuations. The source time $T_{1/2}(\eta)$ at which half of the runs recover, interpolated linearly between adjacent grid points, is $241.7$, $257.9$, $265.6$, $284.4$ and $295.0$; subtracting $\lambda^{-1}\log(b/\eta)$, with $\lambda$ the source-GF angle rate rather than the $\gamma$ of Theorem~\ref{sgd:main}, leaves $75.8$, $78.2$, $72.0$, $76.9$ and $73.7$. The median checkpoint angle decays at rate $0.050$ for every $\eta$, and the recovered fractions for all step sizes fall close to one curve in $\budget_T=b\delta_T/\eta$. For every $\eta$, the runs that recover at $T=400$ include the near-opposite initializations of Section~\ref{sec:experiments}.
\begin{figure}[!htb]
\centering\includegraphics[width=.88\textwidth]{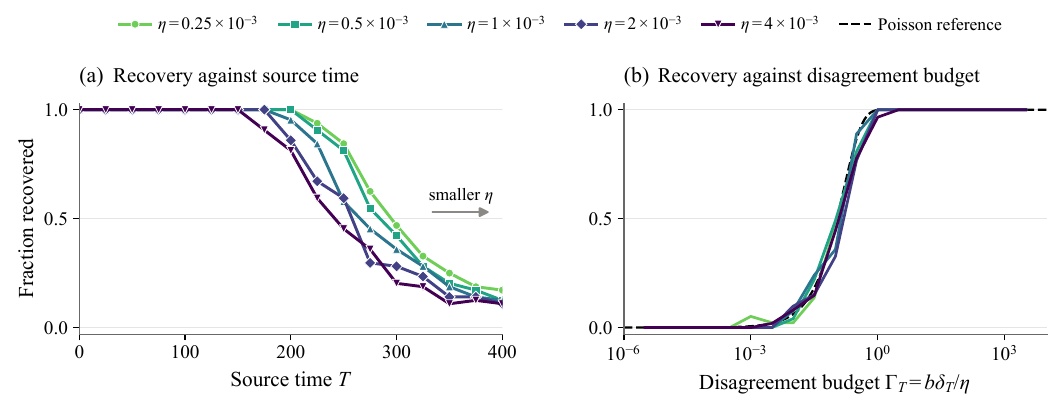}
\caption{\textbf{Joint dependence on source time and step size.} (a) Warm recovery declines with $T$, later for smaller $\eta$ (light to dark: increasing $\eta$). (b) Binned by $\budget_T=b\delta_T/\eta$, all step sizes fall near the Poisson reference (dashed); bins of width $0.5$ in $\log_{10}\budget_T$ with at least 8 records.}
\label{fig:joint}
\end{figure}

\section{Weight-Decay and Perturbation Controls}\label{app:controls}

\begin{figure}[t]
\centering\includegraphics[width=.88\textwidth]{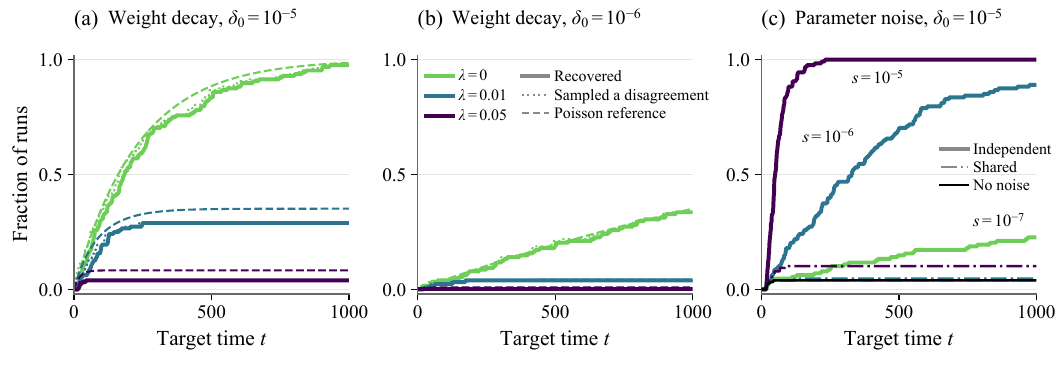}
\caption{\textbf{Weight-decay and parameter-noise controls} from constructed checkpoints ($b=16$, $\eta=0.01$, 128 runs per setting). (a,b) With target weight decay, recovery (solid) plateaus at or below the Poisson reference (dashed); without it, it keeps growing. Dotted: fraction that has sampled a disagreement. (c) Independent parameter noise raises recovery with $s$ (solid); shared noise barely does (dash-dotted); $\lambda=0.05$, $\delta_0=10^{-5}$; black: no noise.}
\label{fig:controls}
\end{figure}

Every run in Figure~\ref{fig:controls} starts from the balanced checkpoint of Appendix~\ref{app:experiments} with amplitude $0.9$, the source optimum for $\lambda=0.05$, and uses the same input streams in every setting; only the target phase changes. Disagreements are checked every update and risk every ten updates ($0.1$ physical-time units). Weight-decay runs stop at recovery; noise runs continue to $H=1000$.

\paragraph{Weight decay.}
The common-gate linearization at a balanced target clone contracts the transverse splitting mode at rate $\lambda$ (Proposition~\ref{prop:boundary}). After the amplitude transient toward $B_t=2(2/\pi-\lambda)$ we therefore use $\delta(t)\approx0.85\,\delta_0e^{-\lambda t}$, where $0.85$ approximates the norm-adjustment factors $\sqrt{0.9/B_t}\in[0.84,0.88]$. The cumulative Poisson intensity of this heuristic reference is
\[
 \Lambda(t)=\frac{0.85\,b\delta_0}{\pi\eta}\cdot\frac{1-e^{-\lambda t}}{\lambda};
\]
the reference curve is $1-e^{-\Lambda(t)}$, with $\Lambda(t)=0.85b\delta_0t/(\pi\eta)$ at $\lambda=0$. For $\delta_0=10^{-5}$, the observed recovery fractions plateau at $0.29$ and $0.04$ for $\lambda=0.01$ and $0.05$ (predicted $0.35$ and $0.08$), with no first disagreement observed after $t\approx240$ through $H=1000$. These input streams are independent of those in Appendix~\ref{app:experiments}, where the same setting gives $0.078$ by $t=450$ against a reference without the factor $0.85$ ($0.097$). For $\lambda=0$, after the amplitude transient the angle of unrecovered runs decays only slowly, from about $0.82\delta_0$ at $t=100$ to about $0.70\delta_0$ at $H=1000$; the cumulative disagreement intensity implied by these angles keeps growing nearly linearly, as in the reference $\Lambda(t)$, and the recovery fraction reaches $0.98$ and $0.34$ at $H=1000$ for $\delta_0=10^{-5}$ and $10^{-6}$, close to the reference values $0.99$ and $0.35$. In these runs recovery always follows a sampled disagreement. Population GD reaches risk $0.4$ by $t\approx26$ in every setting.

\paragraph{Parameter noise.}
After every SGD step, from the checkpoint with $\delta_0=10^{-5}$ and target $\lambda=0.05$, the full parameter vector of unit $i$ receives $s\sqrt\eta\,\xi_i$, with $\xi_i\sim N(0,I_3)$ fresh each step and independent of the input batches. The two units receive independent vectors or the same vector. Before escape, a local approximation models the transverse difference under independent noise as an Ornstein--Uhlenbeck process with stationary standard deviation $s/\sqrt\lambda$, so the angle fluctuates on the scale $s/(r\sqrt\lambda)\approx5.8s$ for hidden norm $r\approx0.77$; median angles among unrecovered runs are of this order. In this approximation the disagreement probability per step no longer decays, so the cumulative disagreement intensity grows roughly linearly in time. Recovery by $H=1000$ rises from $0.04$ without noise to $0.23$, $0.89$ and $1.00$ for $s=10^{-7},10^{-6},10^{-5}$ and continues to increase in $t$, whereas shared noise of the same scales gives $0.04$, $0.05$ and $0.10$. Shared noise also moves the common trajectory and the gate geometry, but it does not act directly on the difference between the units. This is consistent with the escape route through symmetry-breaking noise noted by \citet{joudaki2026}.

\section{Notation}\label{app:notation}

Symbols are listed with the place where they are defined. $C$ and $c$ denote positive constants whose values may change between occurrences; local symbols are marked with the appendices to which they are confined.

\newcommand{\nrow}[3]{#1 & #2 & #3\\}
\newcommand{\ntab}[2]{\par\medskip\noindent\begin{minipage}{\textwidth}\textbf{#1}\par\smallskip\begin{center}\small\begin{tabular}{@{}>{\raggedright\arraybackslash}p{0.21\textwidth}>{\raggedright\arraybackslash}p{0.57\textwidth}>{\raggedright\arraybackslash}p{0.16\textwidth}@{}}\toprule Symbol & Meaning & Where\\\midrule #2\bottomrule\end{tabular}\end{center}\end{minipage}}

\ntab{Model and training}{
\nrow{$X$, $R$, $\hat X$}{input, uniform on $\{\norm x\le2\}$; $X=R\hat X$ with $\E R^2=2$}{Sec.~\ref{sec:setup}}
\nrow{$\theta=(a_1,w_1,a_2,w_2)$}{all six parameters; $f_\theta(x)=\sum_ia_i\relu(w_i^\top x)$, $\relu(z)=\max\{z,0\}$}{Sec.~\ref{sec:setup}}
\nrow{$y_s$, $y_t$}{source label $\relu(x_1)$, target label $\norm x$}{Sec.~\ref{sec:setup}}
\nrow{$R_y$, $F_y$, $\ell_y$}{population risk, objective $R_y+\frac\lambda2\norm\theta^2$, sample loss; subscripts $s,t$ for source and target}{Sec.~\ref{sec:setup}}
\nrow{$\lambda$}{weight decay, any fixed $\lambda\in(0,1/5]$ in the theory (App.~\ref{app:lambda}); $\lambda=1/20$ in numerical values and simulations}{Sec.~\ref{sec:setup}}
\nrow{$b$, $\eta$, $n$, $N$}{batch size, step size, update index, number of updates; physical time is $n\eta$}{Sec.~\ref{sec:setup}}
\nrow{GF, GD}{gradient flow $\dot\theta=-\nabla F_y$; GD = its Euler steps with step $\eta$ (simulations)}{Secs.~\ref{sec:setup}, \ref{sec:experiments}}
\nrow{$T$, $H$}{source training time ($\lfloor T/\eta\rfloor$ updates), target horizon; both physical times}{Sec.~\ref{sec:setup}}
\nrow{$\tau_f$, $\tau_w(T)$, $\tau_{\rm GF}(T)$}{first target time with $R_t\le0.4$: fresh SGD, warm SGD, GF in both phases}{Sec.~\ref{sec:setup}}
}

\ntab{Geometry, initialization and risks}{
\nrow{$r_i$, $\omega_i$, $B$}{$r_i=\norm{w_i}$, $\omega_i=a_ir_i$, total amplitude $B=\omega_1+\omega_2$}{Sec.~\ref{sec:setup}}
\nrow{$\vartheta_i$, $\delta$}{angle of $w_i$ from $e_1$; angle between $w_1$ and $w_2$; the gates disagree with probability $\delta/\pi$}{Secs.~\ref{sec:setup}, \ref{sec:mechanism}}
\nrow{$\calD$, $\calD_0$, $K$}{positive heads, $0<\omega_i<1$, hidden vectors on opposite sides of $e_1$ at angle below $\pi$ (and the neuron exchange); $\calD\cap\{B<3/4\}$; explicit compact subset with $\Pp(K)>0.01064$}{Sec.~\ref{sec:setup}, App.~\ref{app:explicitK}}
\nrow{$q$, $A$, $s_*$}{$q=2/\pi$; $A=q-\lambda$ ($\approx0.587$ at $\lambda=1/20$); $s_*=\sqrt A$}{Secs.~\ref{sec:setup}, \ref{sec:mechanism}}
\nrow{$B_s$, $B_t$}{source and target clone amplitudes, $1-2\lambda$ and $2(q-\lambda)$ ($0.9$ and ${\approx}1.173$ at $\lambda=1/20$)}{Sec.~\ref{sec:mechanism}}
\nrow{$R_{\rm col}$, $R_{\rm good}$}{$1-q^2\approx0.5947$, lower bound for proportional units; $1-2q^2+2\lambda^2$ (${\approx}0.1944$ at $\lambda=1/20$), limit of target GF from $K$}{Sec.~\ref{sec:setup}}
\nrow{$d$, $d_T$, $m_i$}{projective separation $\norm{w_1/a_1-w_2/a_2}$, its value at the source checkpoint; $m_i=r_i/a_i$, with $d^2=(m_1-m_2)^2+2m_1m_2(1-\cos\delta)$ and $d=2\sin(\delta/2)$ when balanced}{Secs.~\ref{sec:intro}, \ref{sec:setup}, \ref{sec:proofmap}}
\nrow{$\budget$, $\budget_T$, $\delta_0$}{disagreement budget $b\delta/\eta$; its value $b\delta_T/\eta$ after source time $T$; angle at the task switch}{Secs.~\ref{sec:intro}, \ref{sec:experiments}}
\nrow{$\delta_s(T)$, $C_s$}{source-GF angle at time $T$ and its coefficient, $\delta_s(T)=C_se^{-\lambda T}(1+o(1))$}{Sec.~\ref{sec:mechanism}, App.~\ref{app:sourceescape}}
\nrow{$\delta_T$, $\theta_s^{\rm GF}(T)$}{angle at a source checkpoint at time $T$; source-GF checkpoint}{Secs.~\ref{sec:checkpoint}, \ref{sec:experiments}, App.~\ref{app:experiments}}
\nrow{$c_T$, $c_H$}{exponents in $T=c_T\log(1/\eta)$ and $H=\eta^{-c_H}$}{Sec.~\ref{sec:checkpoint}}
\nrow{$T_1$, $\eta_1$, $c_p$}{constants of Corollary~\ref{sgd:samecheckpoint}; $c_p=p/(2C)$ in the necessary sizes}{Sec.~\ref{sec:checkpoint}}
}

\ntab{Source SGD (Appendix~\ref{app:sourceSGD})}{
\nrow{$u_i$, $\ell_a$}{$u_i=w_i/a_i$, so $d=\norm{u_1-u_2}$; source allocation $\log(a_1/a_2)$}{Sec.~\ref{sec:proofmap}, App.~\ref{app:sourceSGD}}
\nrow{$\gamma$}{expected contraction rate of $d$, any value below $\lambda$; $\gamma\ge\lambda/2$ is used}{Sec.~\ref{sec:setup}, App.~\ref{app:sourceonestep}}
\nrow{$r_0$, $\mathcal G$}{radius of the source neighborhood in the one-step estimate; event that a sample's gates disagree ($\mathcal G_n$ for a batch in Sec.~\ref{app:hazard})}{App.~\ref{app:sourceonestep}}
\nrow{$\varsigma$, $\Phi_v$, $\mathcal H$, $\mathcal E$}{$\varsigma=1-\eta\lambda$; projective map with gates anchored at unit 2; its first-order term $\mathcal H(v,u)=(v^\top u)I+uv^\top$; error matrix in \eqref{sgd:matchedsource}}{App.~\ref{app:sourceonestep}}
\nrow{$m$, $r$}{allocation margin; radius of the inner normal tube}{App.~\ref{app:sourceconfine}}
\nrow{$t_{\rm blk}$, $\delta_{\rm burn}$, $c_{\rm burn}$}{source block duration; separation reached after burn-in; exponent of the burn-in failure probability}{Apps.~\ref{app:sourceconfine}, \ref{app:entry}}
\nrow{$\varepsilon$, $L$, $E_T$}{allowance for unfavorable allocation drift; burn-in time; good source event}{Thm.~\ref{sgd:main}, Sec.~\ref{sec:proofmap}, App.~\ref{app:entry}}
}

\ntab{Mechanism and proof overview (Sections~\ref{sec:mechanism}--\ref{sec:proofmap})}{
\nrow{$\varphi$, $V$, $V_{\rm com}$, $v$, $v_{\rm com}$}{transverse splitting at the target clone; population field $V=-\nabla F_t$ and common-gate mean field $V_{\rm com}$; half the differences of their second hidden coordinates ($v$ is also the batch vector in App.~\ref{app:sourceonestep}, Sec.~\ref{app:commongate})}{Sec.~\ref{sec:mechanism}}
\nrow{$U$, $u$}{$U=a_1w_1+a_2w_2$; unit hidden direction $u=w/r$}{Secs.~\ref{sec:mechanism}, \ref{app:commongate}}
\nrow{$M(u)$, $P_v$}{$\bigl(\begin{smallmatrix}0&u^\top\\u&0\end{smallmatrix}\bigr)$; orthogonal projection onto $v^\perp$}{Secs.~\ref{app:commongate}, \ref{app:projective}}
\nrow{$e=(e_a,e_w)$, $\mathcal F_n$}{head and hidden parts of $e$; $\sigma$-field of the first $n$ batches}{Secs.~\ref{sec:proofmap}, \ref{app:projective}}
}

\ntab{Target SGD (Section~\ref{app:targetSGD})}{
\nrow{$z_i$, $h$, $w$, $r$, $d_z$}{$z_i=(a_i,w_i)$; $h=(z_1+z_2)/\sqrt2=(h_a,w)$, $r=\norm w$; $d_z=(z_1-z_2)/\sqrt2$ (distinct from $d$). Alone, $r$ is also the tube radius in App.~\ref{app:sourceconfine} and a hidden norm in App.~\ref{app:controls}}{Sec.~\ref{app:targetSGD}}
\nrow{$\rho$, $e$, $\Delta$, $\Delta_0$}{target allocation $h^\top d_z/\norm h^2$ ($=\tanh(\ell_a/2)$ on clones; thresholds $\rho_0<\rho_1<\rho_2$ in Proposition~\ref{sgd:targettheorem}); $e=d_z-\rho h$; $\Delta=\norm e$; its initial value}{Sec.~\ref{app:targetSGD}, App.~\ref{app:composition}; $\Delta$ also Sec.~\ref{sec:proofmap}}
\nrow{$S$, $k$, $B$}{$S=1+\rho^2$, $k=h_a^2-r^2$, $B=Sh_ar$ (equal to $\omega_1+\omega_2$ on the proportional family)}{Sec.~\ref{app:targetSGD}}
\nrow{$G$, $Q$, $\tilde\theta_n$}{common-gate random matrix, $Q=I+\eta G$, auxiliary process $z_i^+=Qz_i$ (frozen after the stop)}{Secs.~\ref{app:commongate}, \ref{app:hazard}}
\nrow{$\nu$, $\epsilon_0$}{normal and projective entry tolerances}{Proposition~\ref{sgd:targettheorem}}
\nrow{$\kappa$}{decay rate of the Lyapunov functions $W_s$, $W_t$ along the population flow}{Apps.~\ref{app:sourceconfine}, \ref{app:bridge}}
\nrow{$\zeta_t$, $n_{\rm dis}$, $\epsilon$}{tube exit time; first batch containing a disagreement; projective radius of the tube (distinct from $\varepsilon$)}{Secs.~\ref{app:projective}, \ref{app:hazard}}
}

\ntab{Constants and population analysis}{
\nrow{$c_*$, $\eta_0$, $t_0$}{confinement rate (at most the burn-in, source, target and fresh-tracking rates), step-size cutoff, fresh hitting-time bound in Thm.~\ref{sgd:main}}{Thm.~\ref{sgd:main}, App.~\ref{app:composition}}
\nrow{$\bar\eta$, $\eta_{\rm t}$, $\ell_{\rm blk}$}{step-size thresholds of Lemmas~\ref{sgd:tracking}--\ref{sgd:block} and of Proposition~\ref{sgd:targettheorem}; block duration in Lemma~\ref{sgd:block}}{App.~\ref{app:tracking}, Sec.~\ref{app:targetSGD}}
\nrow{$C_0$, $c_0$, $c_1$, $t_1$, $\eta_2$, $\varrho$}{constants of Corollaries~\ref{sgd:window} and~\ref{sgd:fixedT} ($C_0$ is reused in App.~\ref{app:sourceescape}); $\varrho$ is the level in Lemma~\ref{sgd:block}}{Sec.~\ref{sec:setup}, App.~\ref{app:tracking}}
\nrow{$J$, $g$}{ReLU kernel $J(t)=(\sin t+(\pi-t)\cos t)/(2\pi)$ and $g=-J'$}{App.~\ref{app:populationequations}; $g$ also Sec.~\ref{sec:mechanism}}
\nrow{$W_s$, $W_t$, $R_W$, $\ell_0$}{Lyapunov functions for the source and target normal coordinates; target tube level; duration of the target amplitude bridge}{Apps.~\ref{app:sourceconfine}, \ref{app:bridge}}
\nrow{$\mathcal A\Subset\mathcal A'$, $\zeta_s$, $\zeta'$}{source allocation intervals; source stopping time; exit time from the constraint set in Lemma~\ref{sgd:block}}{Apps.~\ref{app:sourceconfine}, \ref{app:tracking}}
\nrow{$\alpha$, $\beta$, $q_i$, $h_i$, $k_i$, $s_i$}{$\alpha=\vartheta_1$, $\beta=-\vartheta_2$; local to Apps.~\ref{app:populationequations}--\ref{app:sourceescape}: $q_i$ the factor in $\dot a_i=r_iq_i-\lambda a_i$, $h_i=a_i/r_i$, $k_i=a_i^2-r_i^2$, $s_i=a_i+r_i$}{App.~\ref{app:populationequations}}
\nrow{$\delta_\star$, $D(\delta)$, $m$}{fixed small target angle for the escape time; $D(\delta)=1/2-J(\delta)$; lower amplitude bound (unrelated to $m_i$ in Sec.~\ref{sec:proofmap})}{App.~\ref{app:sourceescape}}
\nrow{$m$, $M$}{margins $10^{-4}$ and $5$ in the definition of $K$}{App.~\ref{app:explicitK}}
}

\ntab{Experiments}{
\nrow{$\kappa_{\rm eff}$}{effective-intensity ratio $-\pi\lambda\log(1-\hat p)/\budget$}{App.~\ref{app:experiments}}
\nrow{$T_{1/2}(\eta)$}{source time at which half of the warm runs recover}{App.~\ref{app:joint}}
\nrow{$s$, $\Lambda(t)$}{parameter-noise scale; cumulative Poisson intensity of the reference}{App.~\ref{app:controls}}
}


{\small
\begin{thebibliography}{99}
\bibitem[Ash and Adams(2020)]{ash2020} Jordan T. Ash and Ryan P. Adams.
\newblock \href{https://papers.nips.cc/paper_files/paper/2020/file/288cd2567953f06e460a33951f55daaf-Paper.pdf}{On warm-starting neural network training}.
\newblock In \emph{NeurIPS}, 2020.

\bibitem[Ben~Arous et~al.(2021)]{benarous2021} G\'erard Ben Arous, Reza Gheissari, and Aukosh Jagannath.
\newblock \href{https://jmlr.org/papers/v22/20-1288.html}{Online stochastic gradient descent on non-convex losses from high-dimensional inference}.
\newblock \emph{Journal of Machine Learning Research}, 22(106):1--51, 2021.

\bibitem[Ben~Arous et~al.(2022)]{benarous2022} G\'erard Ben Arous, Reza Gheissari, and Aukosh Jagannath.
\newblock \href{https://proceedings.neurips.cc/paper_files/paper/2022/file/a224ff18cc99a71751aa2b79118604da-Paper-Conference.pdf}{High-dimensional limit theorems for {SGD}: Effective dynamics and critical scaling}.
\newblock In \emph{NeurIPS}, 2022.

\bibitem[Bena\"{\i}m(1999)]{benaim1999} Michel Bena\"{\i}m.
\newblock \href{https://numdam.org/item/SPS_1999__33__1_0/}{Dynamics of stochastic approximation algorithms}.
\newblock In \emph{S\'eminaire de Probabilit\'es XXXIII}, Lecture Notes in Mathematics 1709, pages 1--68. Springer, 1999.

\bibitem[Bianchi et~al.(2024)]{bianchi2021} Pascal Bianchi, Walid Hachem, and Sholom Schechtman.
\newblock \href{https://doi.org/10.1287/moor.2021.0194}{Stochastic subgradient descent escapes active strict saddles on weakly convex functions}.
\newblock \emph{Mathematics of Operations Research}, 49(3):1761--1790, 2024.

\bibitem[Brandi\`ere and Duflo(1996)]{brandiere1996} Odile Brandi\`ere and Marie Duflo.
\newblock \href{https://www.numdam.org/item/AIHPB_1996__32_3_395_0/}{Les algorithmes stochastiques contournent-ils les pi\`eges}?
\newblock \emph{Annales de l'IHP Probabilit\'es et Statistiques}, 32(3):395--427, 1996.

\bibitem[Chen et~al.(2023)]{chen2023} Feng Chen, Daniel Kunin, Atsushi Yamamura, and Surya Ganguli.
\newblock \href{https://proceedings.neurips.cc/paper_files/paper/2023/hash/6e4432b912599d11609b9cdf98c823c5-Abstract.html}{Stochastic collapse: How gradient noise attracts {SGD} dynamics towards simpler subnetworks}.
\newblock In \emph{NeurIPS}, 2023.

\bibitem[Davis et~al.(2026)]{davis2021} Damek Davis, Dmitriy Drusvyatskiy, and Liwei Jiang.
\newblock \href{https://doi.org/10.1007/s10208-025-09691-0}{Active manifolds, stratifications, and convergence to local minima in nonsmooth optimization}.
\newblock \emph{Foundations of Computational Mathematics}, 26(2):779--861, 2026.

\bibitem[Dohare et~al.(2024)]{dohare2024} Shibhansh Dohare, J. Fernando Hernandez-Garcia, Qingfeng Lan, Parash Rahman, A. Rupam Mahmood, and Richard S. Sutton.
\newblock \href{https://www.nature.com/articles/s41586-024-07711-7}{Loss of plasticity in deep continual learning}.
\newblock \emph{Nature}, 632:768--774, 2024.

\bibitem[Du et~al.(2018)]{du2018} Simon S. Du, Wei Hu, and Jason D. Lee.
\newblock \href{https://proceedings.neurips.cc/paper/2018/hash/fe131d7f5a6b38b23cc967316c13dae2-Abstract.html}{Algorithmic regularization in learning deep homogeneous models: Layers are automatically balanced}.
\newblock In \emph{NeurIPS}, 2018.

\bibitem[Freidlin and Wentzell(2012)]{freidlin2012} Mark I. Freidlin and Alexander D. Wentzell.
\newblock \href{https://link.springer.com/book/10.1007/978-3-642-25847-3}{\emph{Random Perturbations of Dynamical Systems}}.
\newblock Springer, 3rd edition, 2012.

\bibitem[Fukumizu and Amari(2000)]{fukumizu2000} Kenji Fukumizu and Shun-ichi Amari.
\newblock \href{https://doi.org/10.1016/S0893-6080(00)00009-5}{Local minima and plateaus in hierarchical structures of multilayer perceptrons}.
\newblock \emph{Neural Networks}, 13(3):317--327, 2000.

\bibitem[Ge et~al.(2015)]{ge2015} Rong Ge, Furong Huang, Chi Jin, and Yang Yuan.
\newblock \href{https://proceedings.mlr.press/v40/Ge15.pdf}{Escaping from saddle points---online stochastic gradient for tensor decomposition}.
\newblock In \emph{COLT}, 2015.

\bibitem[Jin et~al.(2017)]{jin2017} Chi Jin, Rong Ge, Praneeth Netrapalli, Sham M. Kakade, and Michael I. Jordan.
\newblock \href{https://proceedings.mlr.press/v70/jin17a.html}{How to escape saddle points efficiently}.
\newblock In \emph{ICML}, 2017.

\bibitem[Jin et~al.(2018)]{jin2018} Chi Jin, Lydia T. Liu, Rong Ge, and Michael I. Jordan.
\newblock \href{https://proceedings.neurips.cc/paper/2018/hash/da4902cb0bc38210839714ebdcf0efc3-Abstract.html}{On the local minima of the empirical risk}.
\newblock In \emph{NeurIPS}, 2018.

\bibitem[Joudaki et~al.(2026)]{joudaki2026} Amir Joudaki, Giulia Lanzillotta, Mohammad Samragh, Iman Mirzadeh, Keivan Alizadeh-Vahid, Thomas Hofmann, Mehrdad Farajtabar, and Fartash Faghri.
\newblock \href{https://openreview.net/forum?id=g6kof5fSba}{Barriers for learning in an evolving world: Mathematical understanding of loss of plasticity}.
\newblock In \emph{ICLR}, 2026.

\bibitem[Kumar et~al.(2021)]{kumar2021} Aviral Kumar, Rishabh Agarwal, Dibya Ghosh, and Sergey Levine.
\newblock \href{https://arxiv.org/abs/2010.14498}{Implicit under-parameterization inhibits data-efficient deep reinforcement learning}.
\newblock In \emph{ICLR}, 2021.

\bibitem[Kumar et~al.(2025)]{kumar2024} Saurabh Kumar, Henrik Marklund, and Benjamin Van Roy.
\newblock \href{https://proceedings.mlr.press/v274/kumar25a.html}{Maintaining plasticity in continual learning via regenerative regularization}.
\newblock In \emph{CoLLAs 2024}, PMLR 274:410--430, 2025.

\bibitem[Kushner and Yin(2003)]{kushner2003} Harold J. Kushner and G. George Yin.
\newblock \href{https://link.springer.com/book/10.1007/b97441}{\emph{Stochastic Approximation and Recursive Algorithms and Applications}}.
\newblock Springer, 2nd edition, 2003.

\bibitem[Lewandowski et~al.(2023)]{lewandowski2023} Alex Lewandowski, Haruto Tanaka, Dale Schuurmans, and Marlos C. Machado.
\newblock \href{https://arxiv.org/abs/2312.00246}{Directions of curvature as an explanation for loss of plasticity}.
\newblock arXiv:2312.00246, 2023.

\bibitem[Li et~al.(2026)]{li2026two} Binghua Li, Mengzhe Li, Denny Wu, and Tianhao Wang.
\newblock \href{https://openreview.net/forum?id=4FX5HdRtjL}{Gradient descent on two {ReLU} neurons: Global landscape and bifurcation dynamics}.
\newblock In \emph{ICML Workshop on High Dimensional Learning Dynamics: The Science of Scaling}, 2026.

\bibitem[Li et~al.(2017)]{li2017sme} Qianxiao Li, Cheng Tai, and Weinan E.
\newblock \href{https://proceedings.mlr.press/v70/li17f/li17f.pdf}{Stochastic modified equations and adaptive stochastic gradient algorithms}.
\newblock In \emph{ICML}, 2017.

\bibitem[Lyle et~al.(2023)]{lyle2023} Clare Lyle, Zeyu Zheng, Evgenii Nikishin, Bernardo Avila Pires, Razvan Pascanu, and Will Dabney.
\newblock \href{https://proceedings.mlr.press/v202/lyle23b.html}{Understanding plasticity in neural networks}.
\newblock In \emph{ICML}, 2023.

\bibitem[Lyle et~al.(2025)]{lyle2024} Clare Lyle, Zeyu Zheng, Khimya Khetarpal, Hado van Hasselt, Razvan Pascanu, James Martens, and Will Dabney.
\newblock \href{https://proceedings.mlr.press/v274/lyle25a.html}{Disentangling the causes of plasticity loss in neural networks}.
\newblock In \emph{CoLLAs 2024}, PMLR 274:750--783, 2025.

\bibitem[Maennel et~al.(2018)]{maennel2018} Hartmut Maennel, Olivier Bousquet, and Sylvain Gelly.
\newblock \href{https://arxiv.org/abs/1803.08367}{Gradient descent quantizes {ReLU} network features}.
\newblock arXiv:1803.08367, 2018.

\bibitem[Mertikopoulos et~al.(2020)]{mertikopoulos2020} Panayotis Mertikopoulos, Nadav Hallak, Ali Kavis, and Volkan Cevher.
\newblock \href{https://papers.nips.cc/paper/2020/hash/0cb5ebb1b34ec343dfe135db691e4a85-Abstract.html}{On the almost sure convergence of stochastic gradient descent in non-convex problems}.
\newblock In \emph{NeurIPS}, 2020.

\bibitem[Nikishin et~al.(2022)]{nikishin2022} Evgenii Nikishin, Max Schwarzer, Pierluca D'Oro, Pierre-Luc Bacon, and Aaron Courville.
\newblock \href{https://proceedings.mlr.press/v162/nikishin22a.html}{The primacy bias in deep reinforcement learning}.
\newblock In \emph{ICML}, 2022.

\bibitem[Pemantle(1990)]{pemantle1990} Robin Pemantle.
\newblock \href{https://doi.org/10.1214/aop/1176990853}{Nonconvergence to unstable points in urn models and stochastic approximations}.
\newblock \emph{The Annals of Probability}, 18(2):698--712, 1990.

\bibitem[Saad and Solla(1995)]{saad1995} David Saad and Sara A. Solla.
\newblock \href{https://doi.org/10.1103/PhysRevE.52.4225}{On-line learning in soft committee machines}.
\newblock \emph{Physical Review E}, 52(4):4225--4243, 1995.

\bibitem[{\c{S}}im{\c{s}}ek et~al.(2021)]{simsek2021} Berfin \c{S}im\c{s}ek, Fran\c{c}ois Ged, Arthur Jacot, Francesco Spadaro, Cl\'ement Hongler, Wulfram Gerstner, and Johanni Brea.
\newblock \href{https://proceedings.mlr.press/v139/simsek21a.html}{Geometry of the loss landscape in overparameterized neural networks: Symmetries and invariances}.
\newblock In \emph{ICML}, 2021.

\bibitem[Sokar et~al.(2023)]{sokar2023} Ghada Sokar, Rishabh Agarwal, Pablo Samuel Castro, and Utku Evci.
\newblock \href{https://proceedings.mlr.press/v202/sokar23a.html}{The dormant neuron phenomenon in deep reinforcement learning}.
\newblock In \emph{ICML}, 2023.

\bibitem[Tian(2017)]{tian2017} Yuandong Tian.
\newblock \href{https://proceedings.mlr.press/v70/tian17a.html}{An analytical formula of population gradient for two-layered {ReLU} network and its applications in convergence and critical point analysis}.
\newblock In \emph{ICML}, 2017.

\bibitem[Wang et~al.(2026)]{wang2026} Jiuqi Wang, Jayanth Srinivasa, Claire Chen, Shuze Daniel Liu, Ali Payani, and Shangtong Zhang.
\newblock \href{https://arxiv.org/abs/2605.09044}{Predicting plasticity in deep continual learning: A theoretical perspective}.
\newblock arXiv:2605.09044, 2026.

\bibitem[Wei et~al.(2008)]{wei2008} Haikun Wei, Jun Zhang, Florent Cousseau, Tomoko Ozeki, and Shun-ichi Amari.
\newblock \href{https://doi.org/10.1162/neco.2007.12-06-414}{Dynamics of learning near singularities in layered networks}.
\newblock \emph{Neural Computation}, 20(3):813--843, 2008.

\bibitem[Xu and Du(2023)]{xu2023} Weihang Xu and Simon S. Du.
\newblock \href{https://proceedings.mlr.press/v195/xu23a.html}{Over-parameterization exponentially slows down gradient descent for learning a single neuron}.
\newblock In \emph{COLT}, PMLR 195, pages 1155--1198, 2023.

\bibitem[Yehudai and Shamir(2020)]{yehudai2020} Gilad Yehudai and Ohad Shamir.
\newblock \href{https://proceedings.mlr.press/v125/yehudai20a.html}{Learning a single neuron with gradient methods}.
\newblock In \emph{COLT}, PMLR 125, 2020.

\bibitem[Zhong et~al.(2017)]{zhong2017} Kai Zhong, Zhao Song, Prateek Jain, Peter L. Bartlett, and Inderjit S. Dhillon.
\newblock \href{https://proceedings.mlr.press/v70/zhong17a.html}{Recovery guarantees for one-hidden-layer neural networks}.
\newblock In \emph{ICML}, 2017.

\bibitem[Ziyin(2024)]{ziyin2024} Liu Ziyin.
\newblock \href{https://proceedings.mlr.press/v235/ziyin24a.html}{Symmetry induces structure and constraint of learning}.
\newblock In \emph{ICML}, 2024.

\bibitem[Ziyin et~al.(2022)]{ziyin2022} Liu Ziyin, Botao Li, James B. Simon, and Masahito Ueda.
\newblock \href{https://openreview.net/forum?id=9XhPLAjjRB}{{SGD} can converge to local maxima}.
\newblock In \emph{ICLR}, 2022.

\bibitem[Ziyin et~al.(2023)]{typeii} Liu Ziyin, Botao Li, Tomer Galanti, and Masahito Ueda.
\newblock \href{https://arxiv.org/abs/2303.13093}{Type-{II} saddles and probabilistic stability of stochastic gradient descent}.
\newblock arXiv:2303.13093, 2023.

\bibitem[Ziyin et~al.(2024)]{ziyin2024noise} Liu Ziyin, Mingze Wang, Hongchao Li, and Lei Wu.
\newblock \href{https://proceedings.neurips.cc/paper_files/paper/2024/hash/aa933b5abc1be30baece1d230ec575a7-Abstract-Conference.html}{Parameter symmetry and noise equilibrium of stochastic gradient descent}.
\newblock In \emph{NeurIPS}, 2024.

\end{thebibliography}
}
\end{document}